\documentclass{article}

\usepackage{microtype}
\usepackage{graphicx}
\usepackage{subcaption}
\usepackage{booktabs} % for professional tables

\usepackage{hyperref}

\usepackage[accepted]{icml2026}

\usepackage{amsmath}
\usepackage{amssymb}
\usepackage{mathtools}
\usepackage{amsthm}

\usepackage{nicefrac}

\definecolor{mygreen}{RGB}{146, 199, 113} 
\definecolor{mypurple}{RGB}{146, 199, 113}
\usepackage[table]{xcolor}
\usepackage{multirow}

\usepackage{enumitem} % 支持自定义 label 格式
\usepackage{wrapfig}

\usepackage[capitalize,noabbrev]{cleveref}

\theoremstyle{plain}
\newtheorem{theorem}{Theorem}[section]

\theoremstyle{definition}

\newtheorem{assumption}[theorem]{Assumption}
\theoremstyle{remark}

\usepackage[textsize=tiny]{todonotes}

\icmltitlerunning{Breaking the Scale Barrier: One-Shot Knowledge Transfer via Frequency Transform}

\begin{document}

\twocolumn[
  \icmltitle{Breaking the Scale Barrier: One-Shot Knowledge Transfer \\ via Frequency Transform}
%标题1：Breaking the Size Barrier: One-Shot Knowledge Transfer via Frequency Transform
%标题2：Extracting Transferable Learngenes for Arbitrary Architectures

  % It is OKAY to include author information, even for blind submissions: the
  % style file will automatically remove it for you unless you've provided
  % the [accepted] option to the icml2026 package.

  % List of affiliations: The first argument should be a (short) identifier you
  % will use later to specify author affiliations Academic affiliations
  % should list Department, University, City, Region, Country Industry
  % affiliations should list Company, City, Region, Country

  % You can specify symbols, otherwise they are numbered in order. Ideally, you
  % should not use this facility. Affiliations will be numbered in order of
  % appearance and this is the preferred way.
  \icmlsetsymbol{equal}{*}
    \icmlsetsymbol{corr}{$\dagger$}
  \begin{icmlauthorlist}
  \icmlauthor{Jianlu Shen}{yyy,sch}
  \icmlauthor{Fu Feng}{yyy,sch}
  \icmlauthor{Yucheng Xie}{yyy,sch}
  \icmlauthor{Jiaqi Lv}{yyy,sch,corr}
  \icmlauthor{Xin Geng}{yyy,sch,corr}
    % \icmlauthor{Firstname1 Lastname1}{equal,yyy}
    % \icmlauthor{Firstname2 Lastname2}{equal,yyy,comp}
    % \icmlauthor{Firstname3 Lastname3}{comp}
    % \icmlauthor{Firstname4 Lastname4}{sch}
    % \icmlauthor{Firstname5 Lastname5}{yyy}
    % \icmlauthor{Firstname6 Lastname6}{sch,yyy,comp}
    % \icmlauthor{Firstname7 Lastname7}{comp}
    % %\icmlauthor{}{sch}
    % \icmlauthor{Firstname8 Lastname8}{sch}
    % \icmlauthor{Firstname8 Lastname8}{yyy,comp}
    %\icmlauthor{}{sch}
    %\icmlauthor{}{sch}
  \end{icmlauthorlist}

  \icmlaffiliation{yyy}{
  % Department of XXX, University of YYY, Location, Country 
  School of Computer Science and Engineering, Southeast University, Nanjing, China}
  % \icmlaffiliation{comp}{Company Name, Location, Country}
  \icmlaffiliation{sch}{
  % School of ZZZ, Institute of WWW, Location, Country
  Key Laboratory of New Generation Artificial Intelligence Technology and Its Interdisciplinary Applications (Southeast University), Ministry of Education, China
  }

  \icmlcorrespondingauthor{Jiaqi Lv}{jiaqi.lv@seu.edu.cn}
  \icmlcorrespondingauthor{Xin Geng}{xgeng@seu.edu.cn}
  % \icmlcorrespondingauthor{Firstname2 Lastname2}{first2.last2@www.uk}

  % You may provide any keywords that you find helpful for describing your
  % paper; these are used to populate the "keywords" metadata in the PDF but
  % will not be shown in the document
  \icmlkeywords{Machine Learning, ICML}

  \vskip 0.3in
]

% this must go after the closing bracket ] following \twocolumn[ ...

% This command actually creates the footnote in the first column listing the
% affiliations and the copyright notice. The command takes one argument, which
% is text to display at the start of the footnote. The \icmlEqualContribution
% command is standard text for equal contribution. Remove it (just {}) if you
% do not need this facility.

% Use ONE of the following lines. DO NOT remove the command.
% If you have no special notice, KEEP empty braces:
\printAffiliationsAndNotice{}  % no special notice (required even if empty)
% Or, if applicable, use the standard equal contribution text:
% \printAffiliationsAndNotice{\icmlEqualContribution}

\begin{abstract}
Transferring knowledge by fine-tuning large-scale pre-trained networks has become a standard paradigm for downstream tasks, yet the knowledge of a pre-trained model is tightly coupled with monolithic architecture, which restricts flexible reuse across models of varying scales.
In response to this challenge, recent approaches typically resort to either parameter selection, which fails to capture the interdependent structure of this knowledge, or parameter prediction using generative models that depend on impractical access to large network collections. 
% In this paper, we empirically demonstrate that a model's foundational, task-agnostic knowledge -- its ``learngene" -- is predominantly encoded within the low-frequency components of its weights, and can be efficiently inherited by downstream models.
In this paper, we identify the low-frequency components of model weights as the concrete carrier of foundational, task-agnostic knowledge—its ``learngene"—and validate this by demonstrating its efficient inheritance by downstream models and tasks.
Based on this insight, we propose FRONT (FRequency dOmain kNowledge Transfer), a novel framework that uses the Discrete Cosine Transform (DCT) to isolate the low-frequency ``learngene".
This learngene can be seamlessly adapted to initialize models of arbitrary size via simple truncation or padding, a process that is entirely training-free. 
For enhanced performance, we propose an optional low-cost refinement process that introduces a spectral regularizer to further improve the learngene's transferability.
Extensive experiments demonstrate that FRONT achieves the state-of-the-art performance, accelerates convergence by up to $15\times$ in vision tasks, and reduces training FLOPs by an average of 40.5\% in language tasks. Code is available at https://github.com/LUcy0505/FRONT.
\end{abstract}

\section{Introduction}

The paradigm of fine-tuning large-scale pre-trained models has established itself as the de facto standard in deep learning, facilitating the transfer of general-purpose knowledge from broad datasets to specific downstream tasks and consistently outperforming training from scratch~\citep{kolesnikov2020big, hu2021lora}. While powerful pre-trained models are readily accessible via open-source communities~\citep{wolf2019huggingface,wightman2019pytorch}, their learned knowledge remains tightly coupled with specific, monolithic, and often computationally prohibitive architectures.

\begin{figure*}[t]
  \centering

  \includegraphics[width=\linewidth]{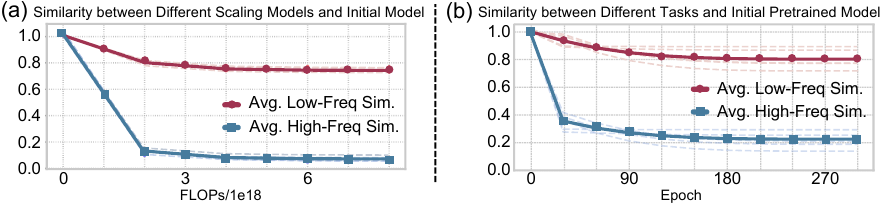}
  \vspace{-0.19in}
  \caption
  {
 (a) Five different-scaled models initialized from the same DeiT are fine-tuned on the same downstream tasks.
   (b) A pre-trained DeiT is fine-tuned on five different downstream tasks. 
   We plot the cosine similarity between the low-frequency components of the fine-tuned weights and their original pre-trained state over the training process. Each dashed line represents an individual fine-tuned model, while the solid line indicates their average.
  }
  \label{fig:jpeg}
  \vspace{-0.18in}
\end{figure*}

To mitigate this rigidity, extensive research has sought to decouple transferable knowledge from its architectural constraints. These approaches broadly fall into two categories. Generative approaches predict parameters by learning latent distributions from model zoos~\citep{knyazev2023can,wang2024neural, soro2024diffusion, schurholt2024towards}. Despite their flexibility, they suffer from a heavy dependency on prohibitively large, homogeneous model collections and, due to high computational overhead, are often restricted to generating partial parameters (e.g., only normalization layers). Conversely, single-source methods extract parameters directly from individual networks~\citep{wang2023learning,xu2023initializing}. However, being confined to the spatial domain, they treat knowledge merely as discrete, local components (e.g., layers and neurons), thereby failing to preserve the global parameter correlations of the source model. 
Fundamentally, both paradigms lack a principled mechanism to isolate the intrinsic constituents of transferable knowledge, inevitably leading to incomplete retention or structural corruption of the learned representations.

The concept of \emph{learngene}~\citep{feng2023genes} is proposed precisely as the theoretical answer to this deficiency. It posits that neural networks internalize a compact form of foundational knowledge, akin to genetic codes, that is not tied to specific architectures or tasks. If such a learngene could be successfully extracted and transferred, any downstream model could inherit a strong foundation and adapt to downstream tasks with extreme efficiency. To materialize this concept, pioneering works have explored various extraction mechanisms. 
Major approaches rely on selecting network fragments~\citep{wang2022learngene,wang2023learngene}, which risks compromising crucial parameter interdependencies and capturing only a fragmented representation of the true learngene. Similarly, other recent approaches involve training an auxiliary model from scratch~\citep{xia2024transformer,feng2024wave}, a costly process that contradicts the vision of a truly universal and efficient initialization. Consequently, a critical gap persists between the conceptual ideal and practical implementation.

To bridge this gap, we shift the focus from direct parameter manipulation to the intrinsic spectral properties of model knowledge. 
While neural weights lack the fixed spatial semantics, unlike images, we posit that the optimization process itself (e.g., SGD)~\citep{robbins1951stochastic, bottou2010large} introduces a strong inductive bias. This optimization pressure drives weights toward structured, low-rank manifolds~\citep{gunasekar2017implicit, huh2021low}, characterized by strong inter-neuron correlations. 
Spectrally, these correlations manifest as \textit{energy concentration}. Consequently, a model's core capability is encoded in this energy distribution rather than superficial spatial indexing.
We validate this hypothesis through two key observations grounded in the Discrete Cosine Transform (DCT).
First, as illustrated in Figure~\ref{fig:jpeg}~(a), we initialized models for various downstream tasks with the same pre-trained DeiT~\citep{touvron2021training} and analyzed their parameter updates in the frequency domain throughout training. 
Our key finding is that low-frequency components remained remarkably stable while high-frequency components underwent significant updates. 
Second, this suggests that low-frequency representations indeed encode the foundational, task-agnostic knowledge that is universally applicable across diverse downstream tasks, as visualized in Figure~\ref{fig:jpeg}~(b). Collectively, these findings suggest that low-frequency components serve as the ideal carrier of the architecture- and task-agnostic knowledge that defines the learngene.

Building on this insight, we introduce FRONT (FRequency dOmain kNowledge Transfer), a novel framework that operationalizes the learngene concept by instantiating it as a low-frequency representation.
FRONT employs the DCT to transform a pre-trained model's weights into the frequency domain, where the low-frequency coefficients are isolated as the learngene. 
A key advantage of FRONT is its flexibility in balancing efficiency and performance. We offer two complementary strategies for learngene acquisition.
For plug-and-play deployment, the learngene can be directly extracted from any off-the-shelf pre-trained model, a process completed in \emph{milliseconds on a CPU}. 
For scenarios demanding enhanced performance, we propose an optional \emph{one-time refinement process}. This refinement introduces a novel spectral regularizer that penalizes high-frequency components during training, encouraging the model to consolidate its knowledge into lower frequencies. Importantly, this refinement can be applied either by training a model from scratch or, more practically, by briefly fine-tuning a pre-existing model within just a few epochs.
Then, the learngene is adapted to a target \emph{architecture of flexible depth or width} by simple truncation or padding before being transformed back into the spatial domain via the IDCT to yield the final initialized weights. 
This approach bridges a critical gap in the learngene's concept and operation by unlocking a previously unavailable capability: training-free, multi-size initialization from any pre-trained model.

Our extensive experiments demonstrate both the state-of-the-art performance and the efficiency of FRONT in initializing models across diverse settings, including vision and language tasks. In the vision domain, models initialized with FRONT achieve the performance of a standard 150-epoch pre-training schedule in only 10 epochs, accelerating convergence by a factor of 15.
This efficiency is mirrored in language tasks, where FRONT reduces the required training FLOPs by an average of 40.5\% across all evaluated architectures compared to training from scratch. Furthermore, visualizations confirm that the extracted learngenes are highly structured, exhibiting patterns consistent with those observed in pre-trained models~\citep{trockman2023mimetic, xu2023initializing}.
By instantiating the learngene as low-frequency components, we achieve a ``one-for-all" parameter initialization that efficiently transfers knowledge from a single model to downstream models of various sizes.

% A key advantage of FRONT is its flexibility. 
% For plug-and-play deployment, the learngene can be directly \emph{extracted from any off-the-shelf} pre-trained model, a process completed in \emph{milliseconds on a CPU}. 
% Alternatively, for seeking enhanced performance, we propose a \emph{one-time refinement process}. 
% This refinement can be applied either by training a model from scratch or, more efficiently, by briefly fine-tuning a pre-existing model within just a few epochs.
% It refines the source model with a novel spectral regularizer that penalizes high-frequency components, encouraging the model to discard task-specific details and preserve a more fundamental knowledge structure.
% \vspace{-0.15in}
\section{Related Work}

\textbf{Model Initialization.}
Effective weight initialization is a cornerstone of deep learning, evolving from early distribution-based methods like Xavier and He initialization~\citep{Glorot_Bengio_2010, Chen_Xie_He_2021} to the current paradigm of leveraging large-scale pre-trained models~\citep{he2016deep, devlin-etal-2019-bert}. While pre-training provides a powerful starting point, it is challenging to efficiently adapt a single pre-trained model to target architectures of mismatched sizes. One line of work involves direct parameter manipulation; for instance, Wt Select~\citep{xu2023initializing} reuses weights by selecting parameter subsets from larger networks, while LiGO~\citep{wang2023learning} applies mathematical scaling for cross-size adaptation. Another direction employs complex generative models, such as GHN-3~\citep{knyazev2023can}, which uses a graph-based hypernetwork to synthesize entirely new parameters. Despite their ingenuity, these methods often suffer from high computational overhead, risk disrupting learned knowledge structures, or introduce parameter inconsistencies that can lead to negative transfer~\citep{feng2024wave}. However, our approach preserves the core transferable knowledge from a source model and flexibly adapts it to various target sizes, enabling effective initialization without costly retraining.

% \textbf{The DCT in Neural Networks.} Renowned for its ``energy compaction'' property~\citep{wallace1991jpeg}, DCT concentrates signal information into low-frequency coefficients and underpins compression standards like JPEG~\citep{raid2014jpeg}. In deep learning, recent works apply DCT to compress model weights by pruning high-frequency components~\citep{ulicny2022harmonic,ulicny2021tensor}. However, these methods treat preserved low-frequency components as static, task-specific artifacts on a single model. In contrast, we identify these coefficients as transferable \emph{learngenes}, that is core, architecture-agnostic knowledge adaptable to diverse tasks.

\textbf{The DCT in Neural Networks.} Renowned for its ``energy compaction'' property~\citep{wallace1991jpeg}, DCT concentrates signal information into low-frequency coefficients and underpins compression standards like JPEG~\citep{raid2014jpeg}. In deep learning, alongside data and knowledge distillation~\nocite{li2024towards,li2025adaptive,kou2026positive,kou2025nips,kou2026fedharmony}, recent works apply DCT to compress model weights by pruning high-frequency components~\citep{ulicny2022harmonic,ulicny2021tensor}. Beyond compression, frequency analysis also reveals model behaviors, such as robustness asymmetry in AI-generated images~\nocite{wang2026ra}. However, these methods treat preserved low-frequency components as static, task-specific artifacts on a single model. In contrast, we identify these coefficients as transferable \emph{learngenes}, that is core, architecture-agnostic knowledge adaptable to diverse tasks.

\textbf{Learngene.}
The learngene paradigm, inspired by biological genetics, proposes transferring compact, knowledge-rich network representation to initialize diverse architectures~\citep{feng2023genes,feng2024wave,xie2025kind,xie2025divcontrol}\nocite{liu2026xpertexpertknowledgetransfer,liuinheriting}.  Some methods, such as HeurLG and AutoLG~\citep{wang2022learngene,wang2023learngene}, typically rely on either heuristic-based selection of fragments, using criteria like gradient stability or representation similarity, or on multi-stage linear expansion like TLEG~\citep{xia2024transformer} and WAVE~\citep{feng2024wave}. However, the latter group suffers from the need for high-cost auxiliary training and relies heavily on strong prior assumptions.
The reliance on such discrete, structurally rigid fragments or expensive retraining contradicts the Learngene vision of a truly universal and efficient initialization paradigm.
In contrast, FRONT directly utilizes full pre-trained parameters, which enables the initialization of models with varying sizes in a single and unified operation.

\vspace{-0.12in}
\section{Methods}
\label{headings}

\subsection{Overview of FRONT}

Our method extends frequency-domain knowledge to the weight spaces—exemplified here with Vision Transformers (ViTs)—through 3D-DCT weight transformations, as shown in Figure~\ref{framework}. Task-agnostic knowledge within the model's parameters is predominantly encoded in the low-frequency components, hereafter referred to as \textbf{\textit{learngenes}}. Notably, the applicability of this approach extends beyond transformer-based models to include other architectures (see details in Appendix~\ref{FRONT Algorithms}), such as Multi-Layer Perceptrons (MLPs) and Convolutional Neural Networks (CNNs).

The learngenes can be extracted using two distinct strategies, detailed in Sections~\ref{sec:direct} and~\ref{sec:regularization}, offering a trade-off between efficiency and performance. \textbf{(1) FRONT{}}: Direct extraction immediately harvests low-frequency coefficients from any off-the-shelf pre-trained model for instantaneous, zero-cost initialization. \textbf{(2) FRONT{\scalebox{1}{\text{+}}}}: Refinement employs a frequency regularizer to generate more effective learngenes, a process that can be implemented from scratch or, more efficiently, through a brief fine-tuning of a pre-existing model, often \emph{in just a few epochs}. Regardless of the chosen strategy, the resulting learngenes serve as a universal blueprint, enabling rapid and flexible initialization of diverse target models in milliseconds on a CPU via IDCT-based reconstruction in Section~\ref{sec:init}.

\begin{figure*}[t]
  \centering
  \includegraphics[width=0.99\linewidth]{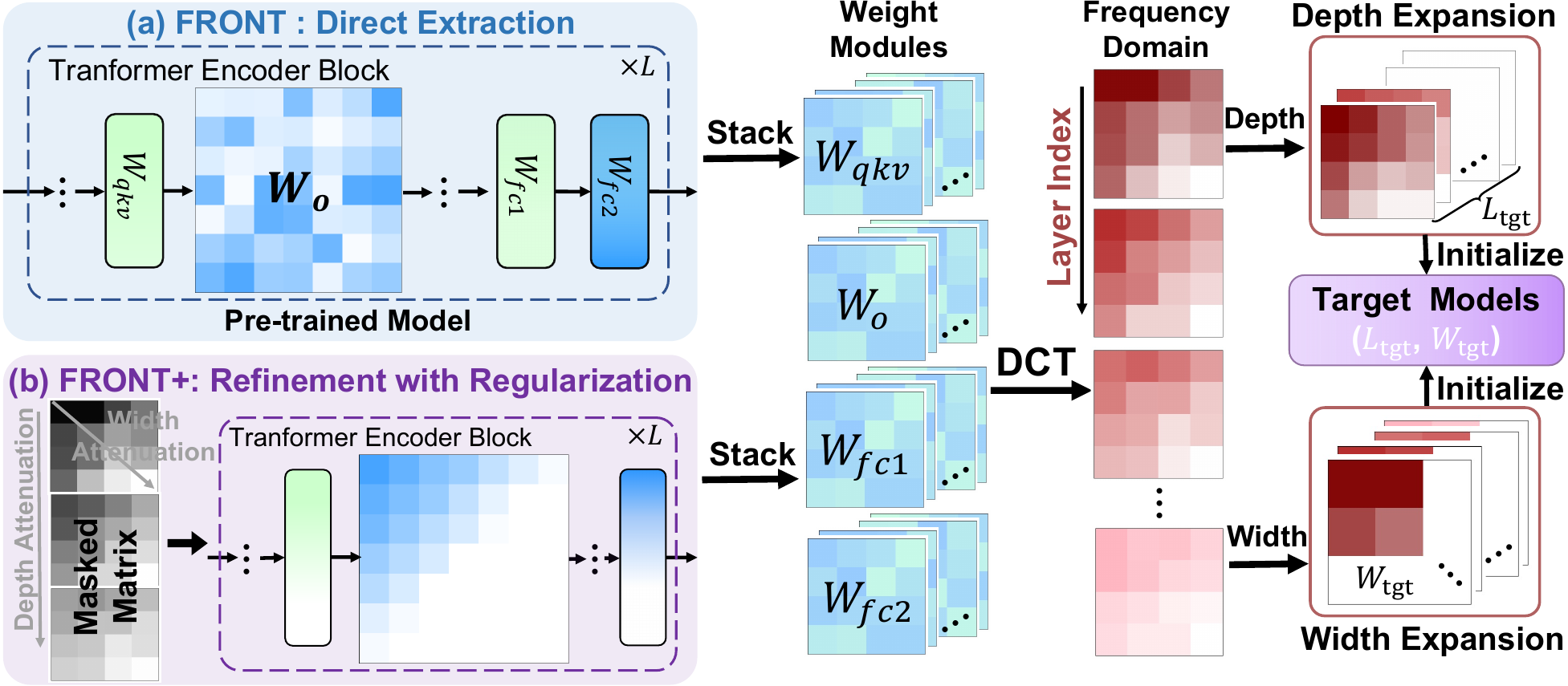}
  \vspace{-0.08in}
  \caption{
  {\textbf{The overview of FRONT and FRONT{\scalebox{1}{\text{+}}}.} The learngenes are obtained from either FRONT via the open available pre-trained models or FRONT{\scalebox{1}{\text{+}}} using source models refined with our frequency regularization. The core process involves transforming weight modules via DCT to extract task-agnostic knowledge. Finally, these learngenes are used to initialize diverse target models through the IDCT, which flexibly accommodate varying architectures via zero-padding or truncation.}
  }
  \label{framework}
  \vspace{-0.17in}
\end{figure*}

\subsection{Preliminary}
\label{prel}
% DCT efficiently converts signals to frequency-domain representations. While 1D/2D-DCT are prevalent in traditional signal/image processing, the 3D-DCT/IDCT variant is adopted here to handle the structured weight matrices in neural networks.
% The transformation formulas for other dimensions and more detailed formulas are shown in the Appendix~\ref{dctandIDCT}.
DCT provides an efficient mechanism for converting signals into frequency-domain representations. Crucially, it is applicable to arbitrary signals, remaining effective \textit{regardless of their inherent smoothness or continuity}. While 1D and 2D-DCT are prevalent in traditional signal and image processing, we adopt the 3D-DCT/IDCT variant to handle the structured weight matrices in neural networks. Detailed formulations are provided in Appendix~\ref{dctandIDCT}.

\paragraph{3D-DCT} For the weight ${x} \in \mathbb{R}^{M \times N \times P}$, the transformed coefficient matrix ${X} \in \mathbb{R}^{M \times N \times P}$ is:
{
\small
\begin{equation}
\begin{split}
X[k,l,q] = \mathcal{D}({x}) &= \alpha(k)\alpha(l)\alpha(q) \sum_{m,n,p} \\
                             &\quad \times x[m,n,p] \, C(m,n,p,k,l,q).
\end{split}
\end{equation}
}

\paragraph{3D-IDCT} The IDCT recovers the original signal:

{
\small
\begin{equation}
\label{eq:3d_IDCT}
\begin{split}
x[m,n,p] = \mathcal{D}^{-1}({X}) &= \sum_{k,l,q} \alpha(k)\alpha(l)\alpha(q) \\
                                 &\quad \times X[k,l,q] \, C(m,n,p,k,l,q).
\end{split}
\end{equation}
}
where the summations are over $m, k \in [0, M-1]$, $n, l \in [0, N-1]$, and $p, q \in [0, P-1]$.

% \subsection{FRONT{}: Direct Extraction}
\subsection{FRONT: Directly Extracted Learngenes}

\label{sec:direct}
The direct extraction strategy operates directly on open available pre-trained models, eliminating the computational overhead of any additional training. For ViTs, weight matrices are collected from all $L$ transformer layers, including the queries, keys, and values ($W_{qkv}^{(1\thicksim L)}$), attention output ($W_{o}^{(1\thicksim L)}$), and two feed-forward layers ($W_{{fc1}}^{(1\thicksim L)}, W_{{fc2}}^{(1\thicksim L)}$), as follows:
\begin{equation}
\Omega = \{W_{qkv}, W_{o}, W_{{fc1}}, W_{{fc2}}\} \subset \mathbb{R}^{L \times d_{\text{in}} \times d_{\text{out}}},
\end{equation}
where $L$ is the total number of transformer layers, and $d_{\text{in}}$, $d_{\text{out}}$ are input and output dimensions. 

Based on the finding that a model's foundational knowledge is concentrated in its low-frequency coefficients shown in Figure~\ref{fig:jpeg}, we extract these \textbf{learngenes} as follows:

% \textbf{DCT Transformation:} Each $W^i \in \Omega$ is transformed into its frequency domain using 3D-DCT:
% % \begin{equation}
% % \label{eq:dct}
% % \mathbf{\Phi}_i = \mathcal{D}(\mathbf{W}_i),
% % \end{equation}
% $\Phi^i = \mathcal{D}(W^i)$,
% where the resulting $\Phi^i$ preserves the dimensions of the original weight.

\textbf{DCT Transformation:} Each weight matrix $W^i \in \Omega$ is transformed into the frequency domain via 3D-DCT:
$\Phi^i = \mathcal{D}(W^i)$,
where the generated spectral representation $\Phi^i$ retains the same dimensions as the original weight $W^i$.

\textbf{Preservation and Truncation:} A binary mask $\mathbf{M}_r$ is applied to preserve only the low-frequency coefficients along all three dimensions (layer, input, output), defined by the frequency ratio $r$:
\begin{equation}
\label{eq:truncation}
\mathcal{G}^i = {M}_r \odot {\Phi}^i, \quad {M}_r[l,m,n] =
\begin{cases}
1 & \text{if } \mathcal{C} \\
0 & \text{otherwise}
\end{cases}
\text{,}
\end{equation}
where $ \odot$ denotes truncating the coefficient values of zero after dot multiplication, and the condition $\mathcal{C}$ is defined as $l \leq \lfloor rL \rfloor, m \leq \lfloor rd_{\text{in}} \rfloor, \text{ and } n \leq \lfloor rd_{\text{out}} \rfloor$. This step yields a compact set of coefficients, which we define as the \textbf{learngenes $\mathcal{G}$}. By discarding high-frequency information, this approach significantly reduces computing requirements compared to high-effort transfer while preserving essential knowledge. This makes direct extraction a scalable and lightweight solution for initializing downstream models.

% \subsection{FRONT{\scalebox{1}{\text{+}}}: Refinement with Frequency Regularization}

\subsection{\texorpdfstring{FRONT{\scalebox{1}{\text{+}}}}{FRONT+}: Extraction via Refinement}
\label{sec:regularization}
% While direct extraction is efficient, its reliance on hard truncation presents two key limitations. First, the abrupt cutoff can introduce spectral artifacts, disrupting the smoothness of the learned representations. Second, by completely discarding high-frequency coefficients, this approach risks eliminating fine-grained details that may contribute to the overall coherence of the knowledge. 
Direct extraction establishes a robust and scalable baseline. However, to further elevate performance, we address the inherent trade-offs associated with hard truncation. Specifically, the abrupt spectral cutoff can induce reconstruction artifacts, while the indiscriminate exclusion of high-frequency components risks discarding fine-grained details essential for capturing subtle feature variations.

To overcome these limitations, we introduce a refinement strategy that employs a smooth frequency regularization, which can be implemented from scratch or, fine-tuning a pre-trained model within minimal computational overhead. Before the hard cutoff, this method progressively suppresses high-frequency components while allowing knowledge to converge towards lower frequencies. Specifically, we penalize the energy of high-frequency coefficients during the model's optimization process:
% \begin{equation}
%     \label{reg}
%     {
%     \mathcal{L}_{\text{reg}} = \sum_{{\Phi}^i \in {\Omega}} \left(
%         \frac{1}{\|{M}_{r\text{+}} \odot {\Phi}^i\|_0}
%         \sum_{l,m,n} ({M}_{r\text{+}}[l,m,n] \cdot {\Phi}^i[l,m,n])^2
%     \right),}
% \end{equation}
\begin{equation}
    \label{reg}
    \resizebox{0.96\hsize}{!}{$ % 0.9\hsize 表示缩放到栏宽的 90%
    \mathcal{L}_{\text{reg}} = \sum_{{\Phi}^i \in {\Omega}} \left(
        \frac{1}{\|{M}_{r\text{+}} \odot {\Phi}^i\|_0}
        \sum_{l,m,n} ({M}_{r\text{+}}[l,m,n] \cdot {\Phi}^i[l,m,n])^2
    \right)
    $}
\end{equation}
where $\mathbf{M}_{r\text{+}}$ is a soft, dimension-wise penalty mask that assigns higher weight to higher frequencies:

\begin{equation}
\label{m}
    {M}_{r\text{+}}[l,m,n] = \prod_{d\in\{l,m,n\}} \left[1 - \exp\left(-\frac{f_d}{\gamma_d}\right) \right],
\end{equation}
where $f_l=\frac{l}{L}$, $f_m = \frac{m}{d_{\text{in}}}$, $f_n = \frac{n}{d_{\text{out}}}$  represent normalized frequency indices, and $\gamma_d$ are decay-rate hyperparameters. This formulation maintains gradient updates through all frequencies during refinement training while progressively attenuating high-frequency components. 
The total loss combines a task-specific objective $\mathcal{L}_{\text{total}}$ with our frequency regularization:
\begin{align}
    \mathcal{L}{_\text{total}} &= (1-\lambda)\mathcal{L}{_\text{task}} + \lambda\mathcal{L}{_\text{reg}} \label{eq:total_loss},
\end{align}
where $\mathcal{L}{_\text{task}}$ is the primary task training objective, such as cross-entropy for image classification or language modeling loss for pre-training transformers. Certainly, this term can be incorporated with other capability techniques, such as distillation loss~\citep{hinton2015distilling}, as detailed in Appendix~\ref{app:vit_a1}. After the refinement process, we extract the learngenes $\mathcal{G}$ using the same DCT and preservation steps outlined in Eq.(~\ref{eq:truncation}~). By progressively attenuating high frequencies, this approach generates smoother and more robust learngenes, which serve as a more effective foundation for initializing downstream models with varying sizes.

\subsection{Initialization of Variable-sized Models}  
\label{sec:init}
A key advantage of our framework is its ability to initialize target models of arbitrary sizes using a single set of learngenes.  The extracted learngenes $\mathcal{G}$ can be obtained through direct extraction by FRONT, or through refinement with frequency regularization by FRONT{\scalebox{1}{\text{+}}}. For target models with layers $L_{\text{tgt}}$ and hidden dimensions $d_{\text{tgt}}$, we adapt the learngenes through a  resizing operation entirely in the frequency domain, which can be performed \emph{in mere milliseconds on a CPU}:
\begin{equation}
\Phi_{\text{tgt}}^i = \mathcal{P}_{\text{pad}} \left(\mathcal{G}^i\right) \ \text{or}\ \mathcal{P}_\text{trunc}\left(\mathcal{G}^i\right) \in \mathbb{R}^{L_{\text{tgt}}\times d_{\text{tgt}}^\text{in}\times d_{\text{tgt}}^\text{out}},
\end{equation}
where $\mathcal{P}_\text{pad} $ applies zero padding to the high-frequency regions of the spectrum to increase tensor dimensions, while $ \mathcal{P}_\text{trunc}$ discards the outermost high-frequency coefficients to reduce them. The final weights are then reconstructed via the 3D-IDCT as $W_{\text{tgt}}^i = \mathcal{D}^{-1}(\Phi_{\text{tgt}}^i) \in \mathbb{R}^{L_{\text{tgt}}\times d_{\text{tgt}}^\text{in}\times d_{\text{tgt}}^\text{out}}.$

We provide an efficient, flexible framework for knowledge transfer by extracting knowledge into learngenes—via zero-overhead direct extraction or frequency-regularized refinement. Once learngenes are extracted, they are fully decoupled from the original models and enables initialization for models of varying depths and widths. Our approach overcomes conventional transfer limitations, promoting task-agnostic knowledge adaptation across diverse architectures.

\section{Experiments}
\label{others}

\subsection{Experimental Setup}

We systematically evaluate FRONT's effectiveness across vision and language tasks. \textbf{Scalability, }where we initialize models of varying depths and widths, and measure initialization quality by long-run training performance; \textbf{Generalization,} assessed via cross-dataset and cross-task transfer. The primary evaluation metric is Top-1 accuracy. See Appendix~\ref{app:all_training_details} and~\ref{app:additional_results} for more details and results.

\textbf{Vision.} We use ImageNet-1K pre-trained DeiT-Ti/S/B (12-layer) as source models. For refinement (\textbf{FRONT+}), we train 8-layer auxiliary models on ImageNet-1K for 150 epochs. Target models include DeiT variants with varying depths (4--12 layers) and widths (384--1536 dims), as well as ResNet and ViT backbones for seven classification, six object detection, and four image segmentation datasets. 
% Our target architectures include variable-sized DeiT, ViT, and ResNet models. We evaluate performance on ImageNet-1K~\cite{deng2009imagenet} and a diverse suite of downstream tasks, including seven classification, six object detection, and four image segmentation datasets. 
We benchmark our method against two main categories. (1) Direct initialization methods operate on existing pre-trained models without extra training, including He-Init~\citep{Chen_Xie_He_2021}, Mimetic-Init~\citep{trockman2023mimetic}, Wt Select~\citep{xu2023initializing}, Heur-LG~\citep{wang2022learngene}, Cluster-LG~\citep{wang2024clusterlearngene}, LiGO~\citep{wang2023learning}, and \textbf{FRONT{}}. (2) High-effort retraining methods include GHN-3~\citep{knyazev2023can}, Share-Init~\citep{lan2019albert}, Auto-LG~\citep{wang2023learngene}, TELG~\citep{xia2024transformer}, and WAVE~\citep{feng2024wave}, which require a costly, specialized process. For a fair comparison, our main experiments also train from scratch for \textbf{FRONT{\scalebox{1}{\text{+}}}} against this category. We also analyze our highly efficient fine-tuning variant, \textbf{FRONT{\scalebox{0.8}{\text{++}}}}, detailed in Section~\ref{sec:refine}.

\textbf{Language.} We demonstrate FRONT's applicability from ``Base'' (12-layer) to ``Small'' (6-layer) models for BERT~\citep{devlin-etal-2019-bert}, RoBERTa~\citep{liu2019roberta}, and GPT2~\citep{radford2019language} trained on standard corpora~\citep{devlin-etal-2019-bert} and evaluated on GLUE~\citep{wang2018glue}. Baselines for language tasks include training from scratch and knowledge distillation (KD)~\citep{hinton2015distilling}.

\subsection{Results on Vision Tasks}
\subsubsection{Initialization on Variable-sized Models}
% \textbf{Initialization Ability on Variable-sized Models.}
\label{sec:init_ability}
\renewcommand{\arraystretch}{0.81}
\begin{table*}[ht]
    \centering
    \setlength{\tabcolsep}{1.5mm}
    % \vspace{-0.1in}
    \caption{Performance of models initialized with different depths on ImageNet-1K for 10 epochs.  Para.(M) denotes the number of parameters, specified per model size (rows) and as the average transferred during initialization (columns). ``Epoch'' indicates the \textbf{extra training epochs} for knowledge merging or pre-training.  `` / '' denotes failed initialization, and ``N/A'' indicates that the metric is not applicable. The \textbf{best} result in each column is highlighted in bold, while the \underline{second-best} is underlined. 
    % All models were trained for 10 epochs following initialization.
    }
    \vspace{-0.06in}
    \setlength{\tabcolsep}{3pt}
    \resizebox{\textwidth}{!}{
        \begin{tabular}{@{}llccccccc|cccccc|cccccc@{}}
        \toprule[1.2pt]
        & & & & \multicolumn{5}{c|}{$W_{\text{192}}$ (DeiT-Ti)} & & \multicolumn{5}{c|}{$W_{\text{384}}$ (DeiT-S)} & & \multicolumn{5}{c}{$W_{\text{768}}$ (DeiT-B)}\\
        \cmidrule{5-9}
        \cmidrule{11-15}
        \cmidrule{17-21}
        & & & & $L_4$ & $L_6$ & $L_8$ & $L_{10}$ & $L_{12}$ & & $L_4$ & $L_6$ & $L_8$ & $L_{10}$ & $L_{12}$ & & $L_4$ & $L_6$ & $L_8$ & $L_{10}$ & $L_{12}$ \\
        \cmidrule{5-9}
        \cmidrule{11-15}
        \cmidrule{17-21}
        \multicolumn{2}{l}{Methods} & Epoch & Para. & \cellcolor{gray!15}{2.2} & \cellcolor{gray!15}{3.1} & \cellcolor{gray!15}{4.0} & \cellcolor{gray!15}{4.9} & \cellcolor{gray!15}{5.8} 
            & Para. & \cellcolor{gray!15}{7.9} & \cellcolor{gray!15}{11.5} & \cellcolor{gray!15}{15.0} & \cellcolor{gray!15}{18.6} & \cellcolor{gray!15}{22.2} 
            & Para. & \cellcolor{gray!15}{29.9} & \cellcolor{gray!15}{44.1} & \cellcolor{gray!15}{58.3} & \cellcolor{gray!15}{72.5} & \cellcolor{gray!15}{86.7} \\
        % \midrule
        \toprule[1.2pt]
        \multirow{7}{*}{\rotatebox{90}{\small{\textbf{Direct}}}}
        & He-Init & 0
                & \cellcolor{gray!15}{0} & 34.7 & 40.6 & 43.7 & 46.8 & 48.3
                & \cellcolor{gray!15}{0} & 42.2 & 49.4 & 52.1 & 53.7 & 55.5
                & \cellcolor{gray!15}{0} & 47.9 & 53.1 & 54.4 & 55.0 & 56.7 \\
        & Mimetic & 0
                & \cellcolor{gray!15}{0} & 35.1 & 40.2 & 43.2 & 46.3 & 48.1
                & \cellcolor{gray!15}{0} & 43.3 & 49.1 & 53.0 & 54.1 & 55.6
                & \cellcolor{gray!15}{0} & 50.2 & 54.3 & 56.5 & 58.5 & 58.6 \\
        & Wt Select & 0
             & \cellcolor{gray!15}{3.9} & 46.1 & 52.3 & 55 & 56.8 & 58.9
             & \cellcolor{gray!15}{15.0} & 50.1 & 56 & 57.5 & 61.9 & 63.3
             & \cellcolor{gray!15}{58.2} & 56.7 & 61.1 & 63.6 & 64.8 & 65.9 \\
        & Heur-LG & 0
                & \cellcolor{gray!15}{1.7} & 41.5 & 47.4 & 50.5 & 53.5 & 55.5
                & \cellcolor{gray!15}{6.1} & 52.3 & 57.3 & 61.7 & 64.4 & 65.9
                & \cellcolor{gray!15}{22.8} & \underline{60.5} & 68.7 & 72.2 & 73.6 & 74.0 \\
        & Cluster-LG & 0
                & \cellcolor{gray!15}{2.4} & \underline{46.6} & 51.5 & 51.9 & 55.8 & 57.2
                & \cellcolor{gray!15}{5.1} & \underline{53.2} & 54.9 & 52.6 & 52.1 & 55.5
                & \cellcolor{gray!15}{19.6} & 52.7 & 60.4 & 60.6 & 58.6 & 62.7 \\
        & LiGO & 0
             & \cellcolor{gray!15}{2.2} & / & \underline{59.0} & \underline{60.2} & \underline{59.8} & \underline{60.9}
             & \cellcolor{gray!15}{7.9} & / & \underline{68.6} & \underline{69.9} & \underline{69.7} & \underline{70.0}
             & \cellcolor{gray!15}{29.9} & / & \underline{74.2} & \underline{74.4} & \underline{75.3} & \underline{75.4} \\
        % \rowcolor{gray!15}
        & \cellcolor{blue!12}{FRONT{}} & \cellcolor{blue!12}{0}
               & \cellcolor{blue!12}{2.2} & 
               \cellcolor{blue!12}{\textbf{55.3}} & \cellcolor{blue!12}{\textbf{63.3}} & \cellcolor{blue!12}{\textbf{64.4}} & \cellcolor{blue!12}{\textbf{64.7}} & \cellcolor{blue!12}{\textbf{65.3}}
               & \cellcolor{blue!12}{8.1} & \cellcolor{blue!12}{\textbf{63.4}} & \cellcolor{blue!12}{\textbf{68.9}} & \cellcolor{blue!12}{\textbf{70.5}} & \cellcolor{blue!12}{\textbf{70.9}} & \cellcolor{blue!12}{\textbf{71.2}}
               & \cellcolor{blue!12}{32.4} & \cellcolor{blue!12}{\textbf{72.1}} & \cellcolor{blue!12}{\textbf{74.6}} & \cellcolor{blue!12}{\textbf{75.3}} & \cellcolor{blue!12}{\textbf{75.8}} & \cellcolor{blue!12}{\textbf{76.3}}\\
        \midrule
        \multirow{6}{*}{\rotatebox{90}{\small{\textbf{Train}}}}
        & GHN-3 & N/A
                & \cellcolor{gray!15}{N/A} & 40.9 & 45.0 & 46.6 & 49.1 & 48.9
                & \cellcolor{gray!15}{N/A} & 45.4 & 49.0 & 50.2 & 52.3 & 53.2
                & \cellcolor{gray!15}{N/A} & 49.5 & 52.5 & 53.8 & 54.2 & 54.3 \\
        & Share Init & 150
                   & \cellcolor{gray!15}{0.8} & 55.2 & 59.8 & 62.5 & 64.3 & 65.3
                   & \cellcolor{gray!15}{2.5} & 65.0 & 69.7 & 71.7 & 72.7 & 73.3
                   & \cellcolor{gray!15}{8.6} & 71.7 & 75.3 & 76.4 & 77.4 & 77.6 \\

        & Auto-LG & 50
                & \cellcolor{gray!15}{2.2} & 52.4 & 61.8 & 64.6 & 65.9 & 66.8
                & \cellcolor{gray!15}{7.9} & 63.2 & 70.5 & 72.2 & 73.3 & 73.8
                & \cellcolor{gray!15}{29.9} & 60.9 & 70.0 & 72.4 & 73.5 & 73.8 \\
        & TLEG & 150
             & \cellcolor{gray!15}{1.3} & 55.0 & 60.5 & 62.9 & 64.4 & 65.4
             & \cellcolor{gray!15}{4.3} & 65.4 & 70.5 & 72.1 & 73.2 & 73.8
             & \cellcolor{gray!15}{15.7} & 71.6 & 74.9 & 76.2 & 77.0 & 77.1 \\
        % & WAVE~\citep{feng2024wave} & 150
        %      & \cellcolor{gray!15}{1.3} & 58.6 & 63.2 & 65.4 & 66.6 & 67.3
        %      & \cellcolor{gray!15}{4.3} & 68.9 & 72.7 & 74.1 & 74.9 & 75.3
        %      & \cellcolor{gray!15}{15.7} & 74.5 & 77.5 & 78.2 & 78.9 & 79.2 \\
        & WAVE & 150
                 & \cellcolor{gray!15}{1.3} & \underline{58.6} & \underline{63.2} & \underline{65.4} & \underline{66.6} & \underline{67.3}
                 & \cellcolor{gray!15}{4.3} & \textbf{68.9} & \underline{72.7} & \underline{74.1} & \underline{74.9} & \underline{75.3}
                 & \cellcolor{gray!15}{15.7} & \textbf{74.5} & \underline{77.5} & \underline{78.2} & \underline{78.9} & \underline{79.2} \\
        % \rowcolor{gray!15}
        & \cellcolor{blue!12}{FRONT{\scalebox{1}{\text{+}}}} & \cellcolor{blue!12}{150}
               & \cellcolor{blue!12}{0.8} & \cellcolor{blue!12}{\textbf{58.7}} & \cellcolor{blue!12}{\textbf{63.4}} & \cellcolor{blue!12}{\textbf{65.6}} & \cellcolor{blue!12}{\textbf{66.8}} & \cellcolor{blue!12}{\textbf{67.5}}
               & \cellcolor{blue!12}{3.2} & \cellcolor{blue!12}{\textbf{68.9}} & \cellcolor{blue!12}{\textbf{72.9}} & \cellcolor{blue!12}{\textbf{74.2}} & \cellcolor{blue!12}{\textbf{75.0}} & \cellcolor{blue!12}{\textbf{75.4}}
               & \cellcolor{blue!12}{13.0} & \cellcolor{blue!12}{\textbf{74.5}} & \cellcolor{blue!12}{\textbf{77.7}} & \cellcolor{blue!12}{\textbf{78.5}} & \cellcolor{blue!12}{\textbf{79.0}} & \cellcolor{blue!12}{\textbf{79.3}}\\
        \midrule
        \multirow{1}{*}{\rotatebox{90}{\small{\textbf{PT}}}}
        & Direct PT & \small{150$\times$15}
                & \cellcolor{gray!15}{4.0} & 50.4 & 57.7 & 62.7 & 66.2 & 68.6
                & \cellcolor{gray!15}{15.0} & 62.6 & 70.1 & 73.8 & 76.0 & 77.6
                & \cellcolor{gray!15}{58.3} & 70.7 & 76.2 & 79.1 & 80.5 & 81.5 \\
               
        \bottomrule[1.2pt]
        
        \end{tabular}
        }
    \label{tab:lenth}
    \vspace{-0.2in}
\end{table*}

\begin{table}[ht]  % r表示右侧，0.7\textwidth是表格宽度
    \centering
    \setlength{\tabcolsep}{1.5 mm}
    % \vspace{-0.17in}
    \caption{Performance of initializing models with variable widths on ImageNet-1K. 
    % for 10 epochs.
    }
    \vspace{-0.1in}
    \resizebox{0.48\textwidth}{!}{  % 调整为适合wraptable的宽度
        \begin{tabular}{@{}llccccccc@{}}
        % 表格内容保持不变
        \toprule[1.2pt]
        & & & & \multicolumn{5}{c}{$L_{\text{6}}$}\\
        \cmidrule{5-9}
        & & & & $W_{\text{384}}$ & $W_{\text{576}}$ & $W_{\text{768}}$ & $W_{\text{1152}}$ & $W_{\text{1536}}$ \\
        \cmidrule{5-9}
        \multicolumn{2}{l}{Methods} & Epoch & Para. & \cellcolor{gray!15}{11.5} & \cellcolor{gray!15}{25.1} & \cellcolor{gray!15}{44.1} & \cellcolor{gray!15}{98.0} & \cellcolor{gray!15}{173.1} \\ 
        \toprule[1.0pt]
        \multirow{5}{*}{\rotatebox{90}{\small{\textbf{Direct}}}}
        & He-Init & 0 & \cellcolor{gray!15}{0} & 49.4 & 51.5 & 53.1 & 46.6 & 31.3 \\
        & Mimetic & 0 & \cellcolor{gray!15}{0} & 49.1 & 48.0 & 54.3 & 47.7 & 33.0 \\ 
         & Wt Select & 0 & \cellcolor{gray!15}{26.8} &  48.6 & 50.7 & 55.4 & / & /  \\
        & LiGO & 0 & \cellcolor{gray!15}{10.0} & \textbf{63.6} & \underline{63.1} & \underline{69.5} & \underline{70.0} & \underline{73.7} \\
        & \cellcolor{blue!12}{FRONT{}} & \cellcolor{blue!12}{0} & \cellcolor{blue!12}{8.9} & \cellcolor{blue!12}{\underline{62.3}} & \cellcolor{blue!12}{\textbf{63.6}} & \cellcolor{blue!12}{\textbf{69.8}} & \cellcolor{blue!12}{\textbf{70.7}} & \cellcolor{blue!12}{\textbf{73.9}}  \\
        \midrule
        \multirow{3}{*}{\rotatebox{90}{\small{\textbf{Train}}}}
        & GHN-3 & N/A & \cellcolor{gray!15}{N/A} & 49.0 & 51.8 & 52.5 & 52.7 & 51.0 \\
        & WAVE & 150 & \cellcolor{gray!15}{5.4} & \underline{64.8} & \underline{67.0} & \underline{72.4} & \underline{73.5} & \underline{78.3} \\
        & \cellcolor{blue!12}{FRONT{\scalebox{1}{\text{+}}}} & \cellcolor{blue!12}{150} & \cellcolor{blue!12}{5.5} & \cellcolor{blue!12}{\textbf{66.3}} & \cellcolor{blue!12}{\textbf{68.3}} & \cellcolor{blue!12}{\textbf{73.5}} & \cellcolor{blue!12}{\textbf{74.6}} & \cellcolor{blue!12}{\textbf{78.7}}  \\
        \bottomrule[1.2pt]
        \end{tabular}
    }
    \label{tab:width}
    \vspace{-0.17in}
\end{table}

\textbf{Depth Expansion.}
We systematically evaluate the initialization of variable-sized models in Table~\ref{tab:lenth}, a crucial requirement for resource-constrained deployment, by benchmarking performance on ImageNet-1K. Our method consistently outperforms both direct processing and learngene approaches. Methods leveraging pre-trained models surpass training from scratch, highlighting the benefits of leveraging prior knowledge. While LiGO improves over Mimetic, its injection of random parameters leads to model size mismatch. In contrast, FRONT{} exploits low-frequency knowledge from pre-trained models, surpassing LiGO by 2\% and He-Init by 19.8\%.

Compared to methods requiring additional training, Share Init outperforms GHN3 by combining rule-based priors with pre-trained layers. 
In contrast, several learngene methods like Auto-LG and TLEG impose rigid layer-specific constraints, limiting scalability across models.
While WAVE performs competitively via weight templates, FRONT{\scalebox{1}{\text{+}}} surpasses it while transferring over 25\% fewer parameters. Notably, FRONT{\scalebox{1}{\text{+}}} outperforms 150-epoch pretraining after only 10 epochs, highlighting its efficiency and flexibility. We also demonstrated our training curve in Figure~\ref{fig:curve}.

\textbf{Width Expansion.} The reversible nature of DCT and IDCT enables FRONT to flexibly scale model widths. As shown in Table~\ref{tab:width}, our method consistently outperforms other initialization and transfer methods. Direct initialization methods like Mimetic exhibit substantially lower performance, underscoring the critical role of pre-trained knowledge in effective model initialization.
LiGO introduces excessive randomness, hindering the initialization of larger models, while Wt Select disrupts structural knowledge transfer. Although WAVE utilizes Kronecker operations with weight templates for reasonable results, FRONT{\scalebox{1}{\text{+}}}’s transformation of low-frequency information further enhances source models' decoupling. This allows our method to achieve robust and flexible initialization across varying model widths while preserving essential knowledge structures.

\begin{table*}
    \centering 
    \setlength{\tabcolsep}{1.5mm}
    % \vspace{-0.1in}
    \caption{Performance of models on downstream datasets. ``Para.(M)'' is the average parameter transferred during model initialization.}
    \vspace{-0.1in}
    \resizebox{\textwidth}{!}{
        \begin{tabular}{@{}llcccccccc>{\columncolor{gray!15}}c|cccccccc>{\columncolor{gray!15}}c@{}}
        \toprule[1.2pt]
             & & \multicolumn{9}{c|}{DeiT-Ti, 3.0M} & \multicolumn{9}{c}{DeiT-S, 11.3M}\\
             \cmidrule{3-20}
             \multicolumn{2}{l}{Methods} & Para. & Flow. & CUB & Cars & C\small{10} & C\small{100} & Food & iNat. & \textit{Aver.}
             & Para. & Flow. & CUB & Cars & C\small{10} & C\small{100} & Food & iNat. & \textit{Aver.}\\
             % \hline
             \midrule[1.1pt]
             \multirow{7}{*}{\rotatebox{90}{\small{\textbf{Direct}}}}
             & He-Init & 0 & 53.9 & 26.1 & 19.9 & 92.4 & 68.3 & 68.4 & 52.3 & \textit{54.5}
                     & 0 & 57.2 & 27.3 & 23.8 & 94.0 & 66.5 & 70.6 & 54.0 & \textit{56.2}\\
             & Mimetic & 0 & 52.1 & 35.0 & 20.5 & 88.9 & 63.4 & 66.9 & 49.0 & \textit{53.7}
                          & 0 & 57.4 & 39.6 & 34.2 & 91.6 & 65.7 & 67.1 & 52.2 & \textit{58.3}\\ 
             
             & Wt Select & 2.9 & 55.0 & 34.4 & 21.7 & 92.5 & 67.0 & 67.4 & 51.2 & \textit{55.6}
                           & 11.0 & 58.3 & 33.1 & 28.1 & 94.1 & 68.1 & 69.0 & 54.2 & \textit{57.8}\\
            & Heur-LG & 1.5 & 64.7 & 44.6 & 37.7 & 94.0 & 71.1 & 74.7 & 57.4 & \textit{63.5}
                     & 5.7 & 69.1 & 48.0 & 51.2 & 95.1 & 72.8 & 76.8 & 59.3 & \textit{67.5}\\
            & Cluster-LG & 2.4 & 62.8 & 44.8 & 36.9 & 94.1 & 74.5 & 78.6 & 56.4 & \textit{64.0}
                      & 5.1 & 65.5 & 45.4 & 46.3 & 95.4 & 74.6 & 78.4 & 60.4 & \textit{66.6}\\
            
            & LiGO & 2.0 & \textbf{94.2} & \textbf{71.8} & \textbf{83.9} & \underline{95.6} & \underline{78.5} & \underline{82.1} & \underline{61.6} & \textbf{\textit{81.1}}
                        & 7.5 & \textbf{95.9} & \textbf{74.8} & \textbf{87.9} & \underline{96.9} & \underline{81.3} & \underline{84.0} & \underline{66.1} & \underline{\textit{83.8}}\\
            %实验正在更新，还未填进去    
            & \cellcolor{blue!12}{FRONT{}} & \cellcolor{blue!12}{2.2} & \cellcolor{blue!12}{\underline{92.9}} & \cellcolor{blue!12}{\underline{70.5}} & \cellcolor{blue!12}{\underline{82.3}} & \cellcolor{blue!12}{\textbf{95.9}} & \cellcolor{blue!12}{\textbf{78.7}} & \cellcolor{blue!12}{\textbf{83.7}} & \cellcolor{blue!12}{\textbf{63.7}} & \cellcolor{blue!12}{\textbf{\textit{81.1}}}
                  & \cellcolor{blue!12}{8.1} & \cellcolor{blue!12}{\underline{94.5}} & \cellcolor{blue!12}{\underline{74.1}} & \cellcolor{blue!12}{\underline{87.4}} & \cellcolor{blue!12}{\textbf{97.2}} & \cellcolor{blue!12}{\textbf{81.8}} & \cellcolor{blue!12}{\textbf{85.2}} & \cellcolor{blue!12}{\textbf{66.6}} & \cellcolor{blue!12}{\textbf{\textit{83.9}}}\\
             \midrule
             \multirow{6}{*}{\rotatebox{90}{\small{\textbf{Train}}}}
             & GHN-3 & N/A & 50.0 & 41.1 & 23.2 & 92.5 & 70.1 & 76.2 & 51.7 & \textit{57.8}
                   & N/A & 52.7 & 45.2 & 30.6 & 93.9 & 72.7 & 76.2 & 55.5 & \textit{61.0}\\
             & Share Init & 0.6 & 92.4 & 70.1 & 82.1 & 96.0 & 77.2 & 81.2 & 63.2 & \textit{80.3}
                        & 2.2 & 94.1 & 72.4 & 87.2 & 96.5 & 78.5 & 83.0 & 63.0 & \textit{82.1}\\
             
             & Auto-LG & 2.0 & 93.5 & 71.4 & 83.5 & 96.4 & 77.1 & 81.7 & 62.5 & \textit{80.9}
                     & 7.5 & 96.4 & 75.1 & 88.2 & 97.3 & 81.0 & 84.6 & 67.0 & \textit{84.2}\\
             & TLEG & 1.1 & 91.0 & 69.5 & 78.2 & 96.1 & 77.0 & 82.0 & 63.4 & \textit{79.6}
                  & 3.9 & 93.7 & 72.6 & 87.2 & 97.2 & 80.2 & 84.9 & 66.5 & \textit{83.2} \\
            & WAVE &1.1& \underline{94.9} & \underline{74.8} & \underline{84.4} & \underline{96.6} & \underline{80.7} & \underline{83.8} & \underline{65.2} & \textit{\underline{82.9}}
                  & 4.0 & \underline{96.9} & \underline{78.1} & \underline{89.4} & \underline{97.4} & \underline{83.2} & \underline{85.5} & \underline{67.6} & \textit{\underline{85.4}} \\

             & \cellcolor{blue!12}{FRONT{\scalebox{1}{\text{+}}}} & \cellcolor{blue!12}{0.8} & \cellcolor{blue!12}{\textbf{95.1}} & \cellcolor{blue!12}{\textbf{75.2}} & \cellcolor{blue!12}{\textbf{86.1}} & \cellcolor{blue!12}{\textbf{96.6}} & \cellcolor{blue!12}{\textbf{80.8}} & \cellcolor{blue!12}{\textbf{83.9}} & \cellcolor{blue!12}{\textbf{65.4}} & \cellcolor{blue!12}{\textbf{\textit{83.3}}}
                  & \cellcolor{blue!12}{3.2} & \cellcolor{blue!12}{\textbf{97.2}} & \cellcolor{blue!12}{\textbf{78.2}} & \cellcolor{blue!12}{\textbf{89.4}} & \cellcolor{blue!12}{\textbf{97.4}} & \cellcolor{blue!12}{\textbf{84.0}} & \cellcolor{blue!12}{\textbf{86.1}} & \cellcolor{blue!12}{\textbf{68.1}} & \cellcolor{blue!12}{\textbf{\textit{85.8}}}\\
             \midrule
             
             \multirow{1}{*}{\rotatebox{90}{\small{\textbf{PT}}}} 
             & Direct FT & 2.9 & 95.4 & 75.1 & 86.5 & 96.6 & 80.2 & 84.0 & 66.9 & \textit{83.5}
                      & 11.0 & 96.4 & 77.0 & 89.4 & 97.5 & 82.8 & 85.6 & 69.3 & \textit{85.4}\\

             \bottomrule[1.2pt]
        \end{tabular}
    \label{tab:downstream}
    }
    % \vspace{-0.1in}
\end{table*}

% \begin{table*}
%     \centering 
%     \setlength{\tabcolsep}{0.8mm}
%     % \vspace{-0.1in}
%     \caption{Object Detection (ViT-10shot) SD: COCO → TD: Below}
%     \vspace{-0.1in}
%     \resizebox{\textwidth}{!}{
%         \begin{tabular}{@{}llcccccc>{\columncolor{gray!15}}c|cccccc>{\columncolor{gray!15}}c@{}}
%         \toprule[1.2pt]
%              & & \multicolumn{7}{c|}{ViT-S} & \multicolumn{7}{c}{ViT-B}\\
%              \multicolumn{2}{l}{Methods} & ArTaxOr & Clipart1k & DIOR & DeepFish & NEU-DET & UODD & \textit{Aver.}
%               & ArTaxOr & Clipart1k & DIOR & DeepFish & NEU-DET & UODD& \textit{Aver.}\\
%              \midrule[1.1pt]
%              & Scratch& 41.97 & 27.70 & 21.40 & 20.00 & 12.57 & \textbf{5.15} & \textit{21.47}
%                      & 51.42 & 37.59 & 24.50 & 19.22 & 12.99 & \textbf{5.81} & \textit{25.26}\\
%              & \cellcolor{blue!12}{FRONT{}} & \cellcolor{blue!12}{\textbf{42.36}} & \cellcolor{blue!12}{\textbf{29.74}} & \cellcolor{blue!12}{\textbf{23.33}} & \cellcolor{blue!12}{\textbf{20.73}} & \cellcolor{blue!12}{\textbf{12.71}} & \cellcolor{blue!12}{5.09} & \cellcolor{blue!12}{\textit{\textbf{22.33}}}
%                        & \cellcolor{blue!12}{\textbf{53.48}} & \cellcolor{blue!12}\textbf{{39.49}} & \cellcolor{blue!12}\textbf{{25.48}} & \cellcolor{blue!12}\textbf{{19.71}} & \cellcolor{blue!12}\textbf{13.21} & \cellcolor{blue!12}{5.78} &  \cellcolor{blue!12}{\textit{\textbf{26.19}}}\\ 
%              \bottomrule[1.2pt]
%         \end{tabular}
%     \label{tab:object_detection}
%     }
%     \vspace{-0.15in}
% \end{table*}

\begin{figure}[t] % [t] 表示尽量放在栏目顶部，也可以用 [b] 底部或 [h] 当前位置
    \centering
    % 第一个子图
    \begin{subfigure}{0.49\linewidth} % 宽度设为行宽的约一半，以便并排
        \centering
        \includegraphics[width=\linewidth]{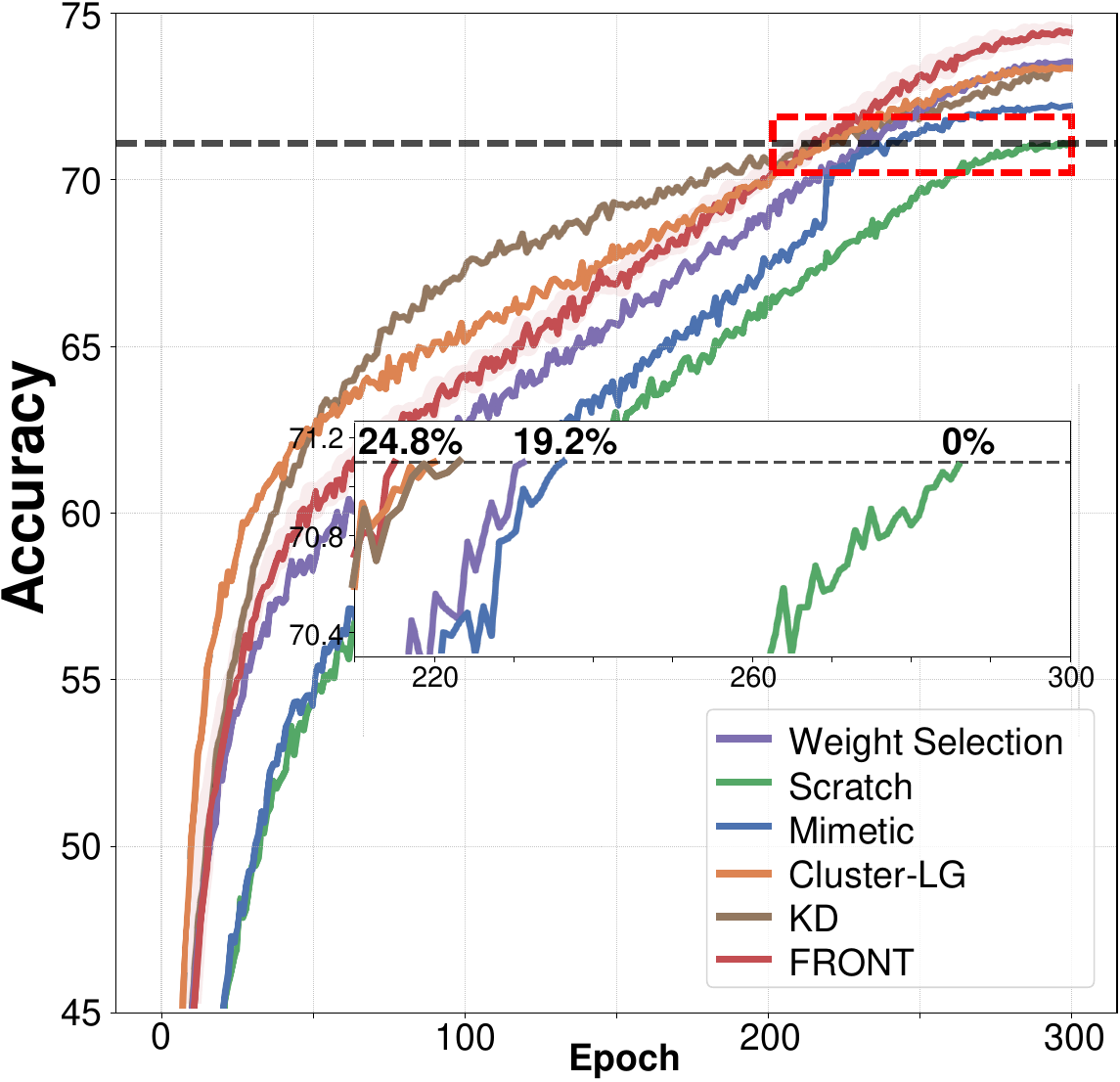}
        \caption{DeiT-Ti}
        \label{fig:init_acc_trend}
    \end{subfigure}%
    \hfill % 在两个子图之间添加弹簧空格，把它们撑到两边
    % 第二个子图
    \begin{subfigure}{0.49\linewidth}
        \centering
        \includegraphics[width=\linewidth]{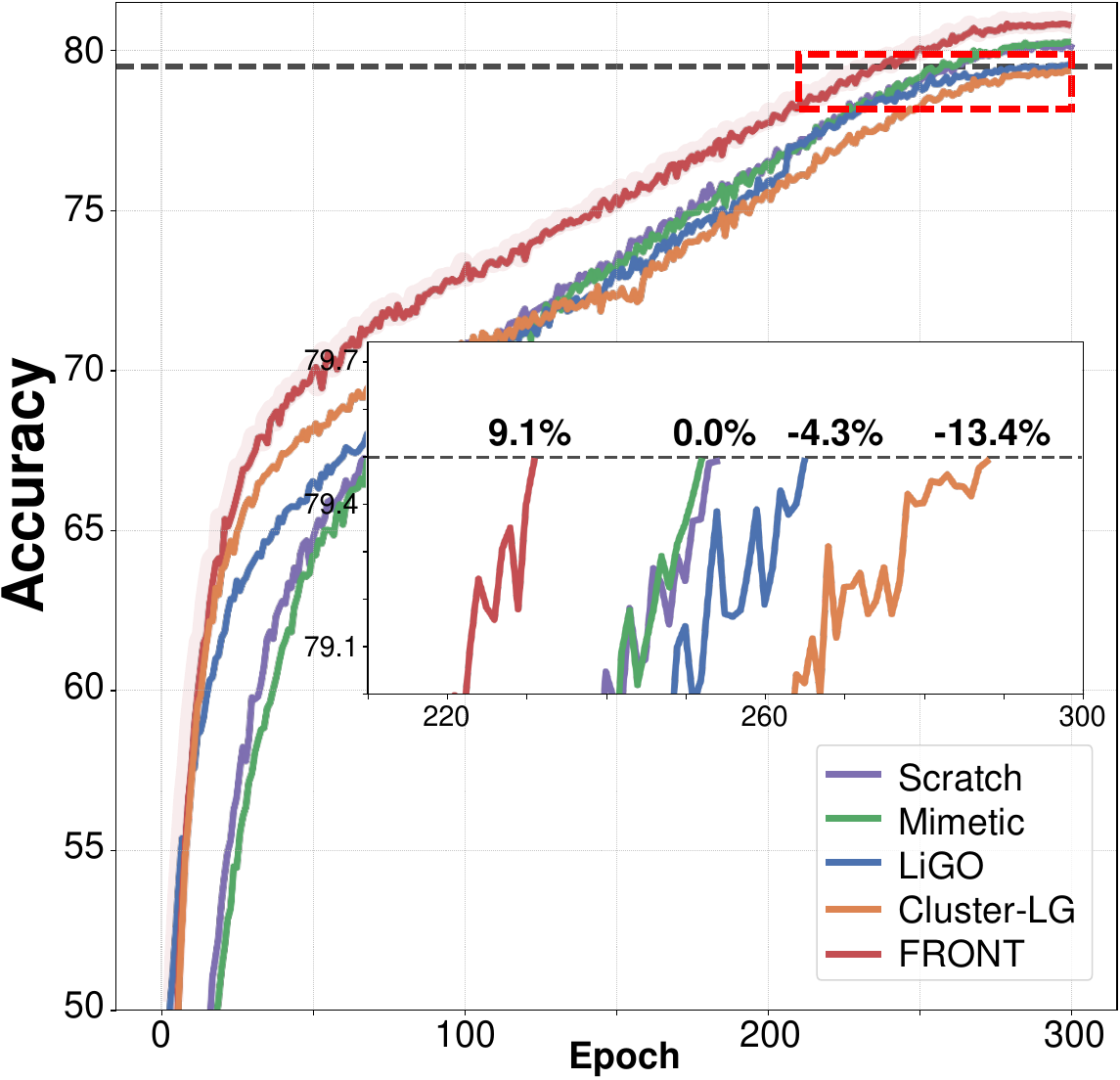}
        \caption{DeiT-S}
        \label{fig:weight_sel_perf}
    \end{subfigure}
    
    \vspace{-0.1in} % 调整图片与caption的距离（可选）
    \caption{Comparison of different direct initialization methods during full training on DeiT-Ti and DeiT-S.}
    \label{fig:DeiT_300} % 建议修改label名，不再是wrap了
    \vspace{-0.27in} % 调整caption与下方正文的距离（可选）
\end{figure}

\textbf{Initialization Performance during Full Training.} Figure~\ref{fig:DeiT_300} shows the training performance of various direct initialization methods between DeiT-Ti and DeiT-S over 300 epochs. All reported results for FRONT are averaged over five independent runs with different random seeds to ensure robustness. Methods leveraging pre-trained knowledge, including Wt Select, LiGO, KD and Cluster-LG, exhibit stronger early-stage performance than random initialization.
In contrast, FRONT{} employs a unified conversion strategy that preserves low-frequency components, enabling effective initialization and consistently strong performance throughout training. 

% It achieves the highest final accuracy with minimal parameter transfer without extra cost, and its consistent results further highlight its robustness.

% \begin{figure}[ht]
% \centering

% \includegraphics[width=0.48\textwidth]{img/DeiT_300.pdf}
% \vspace{-0.12in}
% \caption{Accuracy curves with different methods over 300 epochs.} % 图片标题
% \vspace{-0.25in}
% \label{fig:DeiT_300}
% \end{figure}

\begin{table}[ht]
  \centering % 让表格和标题在分配的宽度内居中
  \setlength{\tabcolsep}{4.8mm}

  \caption{GLUE benchmark results on BERT-S.}
  \label{tab:glue_results1}
  \vspace{-0.1in}
  \resizebox{0.96\linewidth}{!}{
  \begin{tabular}{llll}
    \toprule
    Task & Scratch & \begin{tabular}[c]{@{}c@{}}KD\end{tabular} & {FRONT{}}  \\
    \midrule
    SST-2   & 78.97 & 77.75 & \textbf{82.44} \textcolor{mygreen}{($\uparrow$ 3.47)}\\
    MNLI    & 63.65 & 64.81 & \textbf{74.08} \textcolor{mygreen}{($\uparrow$ 9.27)}\\
    MRPC    & 66.42 & 64.14 & \textbf{68.46} \textcolor{mygreen}{($\uparrow$ 2.04)}\\
    CoLA    & 8.14  & 9.14  & \textbf{17.38} \textcolor{mygreen}{($\uparrow$ 8.24)}\\
    QNLI    & 57.21 & 58.00 & \textbf{70.86} \textcolor{mygreen}{($\uparrow$ 13.65)}\\
    QQP     & 81.50 & 80.07 & \textbf{84.93} \textcolor{mygreen}{($\uparrow$ 4.86)}\\
    STS-B   & 17.27 & 15.76 & \textbf{54.23} \textcolor{mygreen}{($\uparrow$ 36.96)}\\
    \midrule
    Average & 53.31 & 52.81 & \textbf{64.63} \textcolor{mygreen}{($\uparrow$ 11.32)}\\
    \bottomrule
  \end{tabular}%
  } % resizebox 的结尾
  \vspace{-0.18in} % 调整环绕区域底部的间距
\end{table}

\subsubsection{Generalization Across Downstream Datasets}

\textbf{Initialization on DeiT-based Architectures.} In Table~\ref{tab:downstream}, all methods, including ours, use a source model pre-trained on ImageNet-1K for initialization. Following that, the downstream models are directly trained on downstream tasks without any additional pre-training.

Our method consistently improves performance across all datasets, demonstrating strong generalization from initialization. In contrast, methods such as Mimetic and GHN-3 show limited adaptability on certain datasets, while Wt Select occasionally underperforms He-Init on specific datasets like iNaturalist-2019. These results emphasize the need to transfer essential knowledge without task overfitting. Notably, FRONT{\scalebox{1}{\text{+}}} achieves superior performance while transferring fewer parameters than WAVE. By efficiently leveraging low-frequency components, our method reduces storage overhead and enables flexible, effective initialization.

\textbf{Initialization on ResNet-based Architectures.} 
To further assess the generality of FRONT, we conduct experiments on CNNs using ResNet models. A community-trained ResNet-101 on ImageNet-1K is used to initialize both ResNet-50 and ResNet-152, which are then evaluated on seven downstream tasks directly and compared against randomly initialized baselines. Results are shown in Table~\ref{tab:performance_comparison}.

\begin{table}[ht]
  \centering % 让表格和标题在分配的宽度内居中
  \setlength{\tabcolsep}{2.8mm}

\vspace{-0.17in}
\caption{
% Performance comparison between random initialization and FRONT on various dataset.
Performance comparison on ResNet50 and ResNet152.
}
% \vspace{-0.12in}
\resizebox{\linewidth}{!}{ % 这里让表格宽自动等于wraptable的宽
\begin{tabular}{lccl|ccl}
\toprule
    & \multicolumn{3}{c|}{ResNet50} & \multicolumn{3}{c}{ResNet152} \\
    \cmidrule{2-7}
    & Rand. & \multicolumn{2}{c|}{FRONT{}} & Rand. & \multicolumn{2}{c}{FRONT{}} \\
    \midrule
    Image. & 74.1 & 76.1 & \textcolor{mygreen}{$\uparrow$ 2.0} & 76.8 & 77.2 & \textcolor{mygreen}{$\uparrow$ 0.4} \\
    Flow. & 51.3 & 91.3 & \textcolor{mygreen}{$\uparrow$ 40.0} & 28.0 & 87.7 & \textcolor{mygreen}{$\uparrow$ 59.7} \\
    CUB & 40.7 & 68.9 & \textcolor{mygreen}{$\uparrow$ 28.2} & 22.8 & 68.2 & \textcolor{mygreen}{$\uparrow$ 45.4} \\
    Cars & 48.6 & 88.2 & \textcolor{mygreen}{$\uparrow$ 39.6} & 29.3 & 87.4 & \textcolor{mygreen}{$\uparrow$ 58.1} \\
    C10 & 94.8 & 95.6 & \textcolor{mygreen}{$\uparrow$ 0.8} & 95.6 & 95.7 & \textcolor{mygreen}{$\uparrow$ 0.1} \\
    C100 & 76.8 & 78.3 & \textcolor{mygreen}{$\uparrow$ 1.5} & 77.7 & 79.5 & \textcolor{mygreen}{$\uparrow$ 1.8} \\
    Food & 82.1 & 85.8 & \textcolor{mygreen}{$\uparrow$ 3.7} & 83.2 & 85.9 & \textcolor{mygreen}{$\uparrow$ 2.7} \\
    iNat & 61.8 & 69.0 & \textcolor{mygreen}{$\uparrow$ 7.2} & 62.7 & 69.1 & \textcolor{mygreen}{$\uparrow$ 6.4} \\
\bottomrule
\end{tabular}
}
\label{tab:performance_comparison}
\vspace{-0.12in}
\end{table}

FRONT consistently outperforms random initialization across all datasets, with particularly notable improvements on fine-grained, small-sample datasets. For example, on the Flowers dataset, FRONT boosts accuracy by 40.0\% (ResNet-50) and 59.7\% (ResNet-152); on CUB, by 28.2\% and 45.4\%; and on Stanford Cars, by 39.6\% and 58.1\%, respectively. These results highlight FRONT’s ability to address the difficulty of training deep networks on limited data by leveraging pre-trained models. FRONT thus provides more effective initialization and demonstrates strong generalization, enabling substantial gains even on datasets with unique characteristics and CNN-based architectures.

\begin{figure*}[htbp]

    \centering
    \subfloat[BERT-B $ \rightarrow$ BERT-S\label{fig:a1}]{%
        \includegraphics[width=0.333\textwidth]{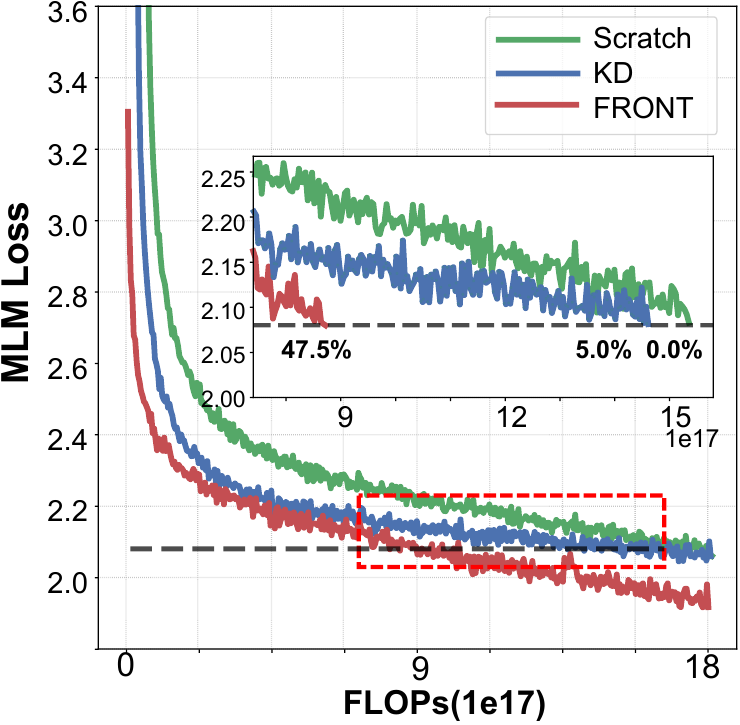}%
    }
    \hfill
    \subfloat[RoBERTa-B $ \rightarrow$ RoBERTa-S\label{fig:b1}]{%
        \includegraphics[width=0.333\textwidth]{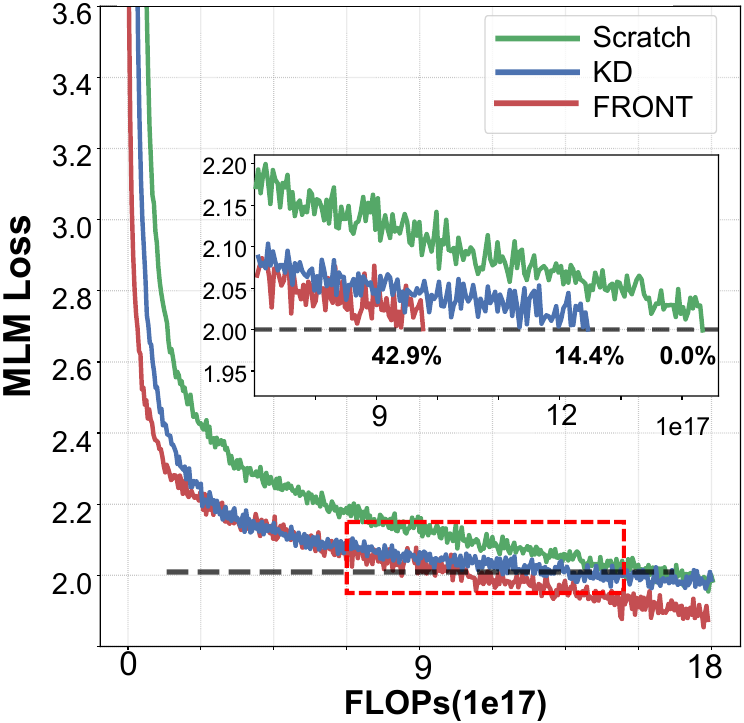}%
    }
    \hfill
    \subfloat[GPT-B $ \rightarrow$ GPT-S\label{fig:c1}]{%
        \includegraphics[width=0.333\textwidth]{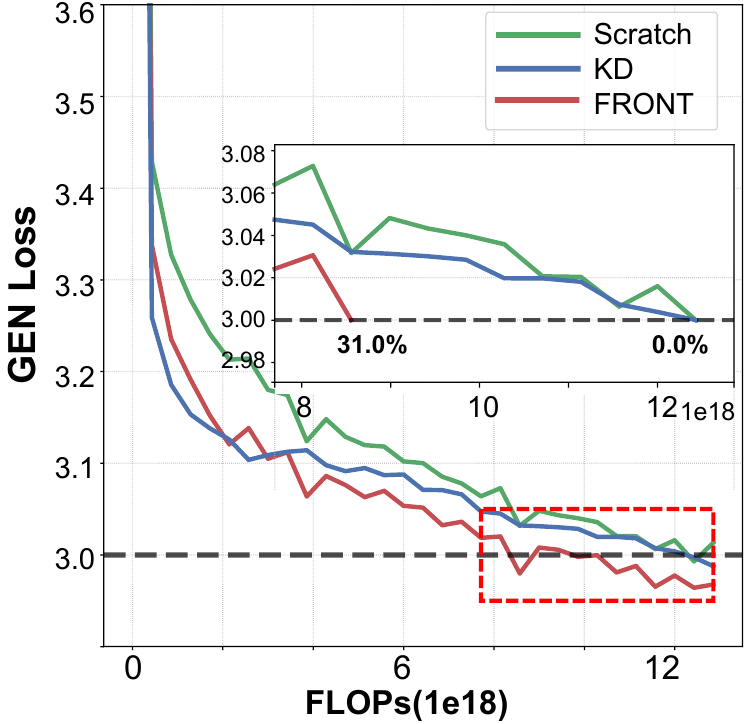}%
    }

    \caption{Results of pretraining BRET-S, RoBERTa-S and GPT-S. FRONT can achieve the highest savings in FLOPs with 47.5\% for BERT-S, 42.9\% for RoBERTa-S, and 31.0\% for GPT-S from the Scratch models.}
    \label{fig:llm1}
    \vspace{-0.14in}
\end{figure*}

%初始化在跨域任务中
% \textbf{Initialization on Cross-Domain Tasks.} To further verify the capability of FRONT in extracting generalizable knowledge, we follow the experimental protocols introduced in~\citep{fu2024cross} for object detection and~\citep{nie2024cross} for image segmentation from Source Domain (SD) to Target Domain (TD). 
% Specifically, we evaluate our approach on two architectures, including ViT-S/B and ResNet50, using six object detection datasets and four image segmentation datasets, comparing the \textsc{FRONT} initialization with the scratch training, other details in Appendix~\ref{app:dataset}.
% % 表1

% As shown in Table~\ref{tab:object_detection} and Table~\ref{tab:image_segmentation}, the proposed \textsc{FRONT} initialization substantially improves performance across the evaluated cross-domain tasks compared to training from scratch.
% Especially in image segmentation, FRONT achieves an average performance improvement of 18.26\%. 
% % These results demonstrate the effectiveness of extracting and transferring task-agnostic knowledge, leading to consistently higher accuracy in both object detection and image segmentation tasks.
% Unlike prior works limited to same-domain transfer, \textsc{FRONT} enables robust classification-to-dense generalization, effectively extracting and repurposing global classification knowledge for downstream dense prediction.

% 
\subsection{Results on Language Tasks}
\label{sec:llm}

\textbf{More Effective Pre-training.} Across all three architectures, models initialized with FRONT demonstrate a markedly faster convergence, as presented in Figure~\ref{fig:llm1}. This effect is particularly dramatic for BERT, where FRONT also provides a significantly lower starting loss, and this performance lead is maintained throughout the training. While KD also leverages pre-trained knowledge, it incurs a substantial, cumulative computational cost from repeated teacher inferences at every training step. In contrast, FRONT imparts its full knowledge benefit in a single, zero-overhead initialization, leading to significant computational savings. On average, our method reduces the required training FLOPs by 40.5\% across the three architectures compared to standard from-scratch training, underscoring its superiority as a resource-efficient knowledge transfer mechanism. 

FRONT's robust efficiency across diverse models stems from its architecture-agnostic nature. The semantic difference between bidirectional and causal attention lies in the forward-pass mask, not the weight matrices. Because their learnable parameters share identical dimensions and function as linear projectors, FRONT directly maps low-frequency coefficients between them.

%In downstream tasks of BERT，we set the batch size to 32 and use Adam with the learning rate 2e-5and epochs from {4, 5, 10} for the GLUE tasks。我们直接将初始化后的Small模型直接用来训练下游任务，知识蒸馏中需要将Base作为教师模型，结果如表所示。表明我们的方法在下游模型中具有很好的泛化能力，且能够提取出无监督掩码语言模型的基础能力，即预测masked的词的基础能力，并在下游具体任务中得到很好的适应性。结果表明我们方法能兼容多架构并取得一致性的好的初始化能力。

% \begin{figure}[tb]
% % \vspace{-0.14in}
%     \centering
%     \subfloat[Random Init.\label{fig:random_init}]{%
%         \includegraphics[width=0.33\textwidth]{img/visualization_random_init.pdf}%
%     }
%     \hfill
%     \subfloat[Direct Pre-train\label{fig:pretrained DeiT-T}]{%
%         \includegraphics[width=0.33\textwidth]{img/visualization_pretrained.pdf}%
%     }
%     \hfill
%     \subfloat[FRONT\label{fig:FRONT}]{%
%         \includegraphics[width=0.33\textwidth]{img/visualization_front.pdf}%
%     }
%     \vspace{-0.07in}
%     \caption{Self-attention layer structural patterns illustrated. The figure displays matrices $W_qW_k^T$ (left) and $W_vW_{o}$ (right) across three DeiT-T configurations: random initialization, pre-trained, and FRONT-initialized models. FRONT can inherit the diagonal property of self-attention layers that only exists in pre-trained ViTs.}
%     \label{fig:self_attention}
%     \vspace{-0.17in}
% \end{figure}

\textbf{Generalization on GLUE Benchmark.} As presented in Table~\ref{tab:glue_results1}, the model initialized by FRONT significantly outperforms both the scratch and knowledge distillation baselines across all GLUE tasks. These results demonstrate that our method possesses strong generalization capabilities. It effectively transfers the core competencies acquired during pre-training—such as the fundamental ability to predict masked tokens, which readily adapt to a diverse set of specific downstream tasks. Overall, these findings confirm that FRONT provides a robust and consistent initialization method that is compatible with multiple model architectures.

% \begin{table}[htbp]
% \centering
% \vspace{-0.1in}
% \caption{GLUE benchmark results on BERT-S.}
% \vspace{-0.1in}
% \label{tab:glue_results1}
% % \resizebox{<宽度>}{<高度>}{内容}，! 表示高度自适应
% \resizebox{0.82\textwidth}{!}{%
% \begin{tabular}{lcccccccc}
% \toprule
% Method & SST-2 & MNLI & MRPC & CoLA & QNLI & QQP & STS-B & Average \\
% \midrule
% Scratch & 78.97 & 63.65 & 66.42 & 8.14 & 57.21 & 81.50 & 17.27 & 53.31 \\
% Knowledge Distillation & 77.75 & 64.81 & 64.14 & 9.14 & 58.00 & 80.07 & 15.76 & 52.81 \\
% FRONT{} & \textbf{82.44} & \textbf{74.08} & \textbf{68.46} & \textbf{17.38} & \textbf{70.86} & \textbf{84.93} & \textbf{54.23} & \textbf{64.63} \\
% \bottomrule
% \end{tabular}%
% }
% \vspace{-0.1in}
% \end{table}

\subsection{Ablation and Analysis}
\label{sec:ablation}

% Further analysis of FRONT-initialized models indicates that low-frequency components mainly encode generalizable knowledge, while high-frequency components are more likely to capture task-specific details (see Appendix~\ref{app:freq_init}).

% \begin{wraptable}{r}{0.38\textwidth}
%     \centering
%     \setlength{\tabcolsep} {2 mm}
%     \vspace{-0.15in}
%     \caption{Knowledge transfer across diverse frequency components. Low-frequency knowledge surpasses others.}
%     \vspace{-0.1in}
%     \resizebox{0.38 \textwidth}{!}{
%     \begin{tabular}{@{}l|ccccc@{}}
%       \toprule
%       \multirow{2}{*}{\textbf{\shortstack{Select\\Freq.}}} & \multicolumn{5}{c}{\textbf{DeiT-Ti}} \\
%       \cmidrule{2-6}
%       & \textbf{L4} & \textbf{L6} & \textbf{L8} & \textbf{L10} & \textbf{L12} \\
%       \midrule
%       He-Init & 34.7 & 40.6 & 43.7 & 46.8 & 48.3\\
%       \midrule
%       Last-6 & 47.1 & 56.6 & 60.3 & 61.7 & 62.3 \\
%       Mid-6 & 47.4 & 62.2 & 64.3 & 64.6 & 65.2 \\
%       \cellcolor{blue!12}{First-6} & \cellcolor{blue!12}{\textbf{55.3}} & \cellcolor{blue!12}{\textbf{63.3}} & \cellcolor{blue!12}{\textbf{64.4}} & \cellcolor{blue!12}{\textbf{64.7}} & \cellcolor{blue!12}{\textbf{65.3}} \\
%       \bottomrule
%     \end{tabular}
%     }
%     \label{tab:freq_comparison}
%     \vspace{-0.16in}
% \end{wraptable}

\subsubsection{Effect of Frequency-Ratio and Decay Factor}
\label{sec:ablation_r_gamma}

\begin{table}

\centering
\setlength{\tabcolsep}{3.8 mm}
\caption{Top-1 accuracy (\%) on ImageNet-1K as a function of $r$ and $\gamma_d$. And `--' corresponds to a direct application of FRONT without refinement.}
\vspace{-0.1in}
\resizebox{0.48\textwidth}{!}{
\begin{tabular}{c|ccccc}
\toprule
$\gamma_d \setminus r$ & 0.33 & 0.50 & 0.67 & 0.83 & 1.00 \\
\midrule
--       & 60.63 & 62.74 & 64.79 & 65.06 & 66.19 \\
$\nicefrac{1}{16}$ & 66.61 & 66.64 & 66.65 & 66.58 & 66.60 \\
$\nicefrac{1}{8}$  & 66.97 & 67.22 & 67.02 & 67.11 & 67.13 \\
$\nicefrac{1}{4}$  & 67.08 & 67.58 & 68.02 & 67.92 & 67.88 \\
$\nicefrac{1}{2}$  & 66.74 & 66.92 & 66.00 & 66.72 & 66.94 \\
1      & 66.57 & 66.76 & 66.60 & 66.69 & 66.75 \\
\bottomrule
\end{tabular}
}
\label{tab:ablation_r_gamma}
\vspace{-0.22in}
\end{table}

To investigate the influence of the low-frequency retention ratio $r$ and the refinement decay factor $\gamma_d$, we perform a ablation study using ImageNet-1K classification on DeiT-Ti\_L8 as the source model on Tabel~\ref{tab:ablation_r_gamma}. The frequency ratio $r$ denotes the proportion of DCT coefficients retained as the \textit{learngene} from the source network parameters. 
The decay factor $\gamma_d$, as defined in Eq.~(~\ref{m}~), controls the suppression strength of high-frequency components during the refinement stage.

Across all $r$ values, the proposed FRONT consistently outperforms the \textit{Scratch} baseline by a large margin ($\geq$ 13.0+\%), confirming the robustness of low-frequency parameter transfer. 
Performance gains plateau when $r \in [0.50, 0.83]$ and $\gamma_d \in [0.125, 0.25]$, indicating that moderate refinement effectively attenuates harmful high-frequency noise without excessively discarding potentially useful fine-grained details. 
Extreme values of $r$ or $\gamma_d$ yield little additional benefit, suggesting that hyperparameter tuning can be coarse-grained for practical deployment in resource-limited environments.

\begin{table*}[htbp]
\centering
\setlength{\tabcolsep}{5.2mm}
\caption{Performance comparison of varying module configurations on BERT-B to BERT-S across GLUE benchmarks. The transferable low-frequency knowledge in language models is primarily concentrated in the self-attention weights ($W_{qkv}$ and $W_o$).}
\label{tab:bert_ablation}
\vspace{-0.07in}
\resizebox{0.94\linewidth}{!}{
\begin{tabular}{cccccccccccc}
\toprule
\textbf{$W_{qkv}$} & \textbf{$W_o$} & \textbf{$W_{mlp}$} & \textbf{norm} & \textbf{SST-2} & \textbf{MNLI} & \textbf{MRPC} & \textbf{CoLA} & \textbf{QNLI} & \textbf{QQP} & \textbf{STS-B} & \textbf{Avg} \\
\midrule
 & & & & 78.97 & 63.65 & 66.42 &  8.14 & 57.21 & 81.50 & 17.27 & 53.31 \\
\checkmark & & & & 82.11 & 72.14 & 68.62 &  4.63 & 66.95 & 84.71 & 43.19 & 60.34 \\
 & \checkmark & & & 82.34 & 66.70 & 68.87 & 10.23 & 62.57 & 81.81 & 42.79 & 59.33 \\
 & & \checkmark & & 80.73 & 67.29 & 69.61 & 12.34 & 60.97 & 78.78 & 36.85 & 58.08 \\
 & & & \checkmark & 80.85 & 65.69 & 67.40 & 14.35 & 61.76 & 81.50 & 42.65 & 59.17 \\
\checkmark & \checkmark & & & 81.19 & 72.17 & 70.34 & 16.42 & 69.98 & 84.82 & 50.35 & {\textit{63.61}} \\
\checkmark & \checkmark & \checkmark & \checkmark & 82.44 & 74.08 & 68.46 & 17.38 & 70.86 & 84.93 & 54.23 & {\textbf{64.63}} \\
\bottomrule
\end{tabular}
}
\vspace{-0.17in}
\end{table*}

\subsubsection{Balance between Duration and Regularization}

\label{sec:refine}

% \begin{figure}[ht]
% \centering

% \includegraphics[width=0.38\textwidth]{img/regularization_and_epoch.pdf}
% \vspace{-0.08in}
% \caption{Achieving Superior Performance with Efficient Refinement.} % 图片标题
% \vspace{-0.17in}
% \label{fig:re_e}
% \end{figure}

To show the ability of FRONT{\scalebox{0.8}{\text{++}}} strategy, we conduct a grid search by fine-tuning a DeiT-Ti-L8 source model on ImageNet-1K, exploring the interplay between fine-tuning duration across various epochs \{10, 20, 50, 100\} and regularization strength $\lambda$ values \{1e-5, 5e-4, 1e-3, 2e-3, 5e-3\}. Subsequently, the effectiveness of the 0.8M-parameter learngenes generated from each configuration is evaluated to initialize a deeper DeiT-Ti-L10 target model.

\begin{table}[htbp]
\centering
\caption{Module-level ablation study on DeiT-Ti (L12 to L4) evaluated on ImageNet-1K. The results indicate that transferable low-frequency knowledge is predominantly concentrated in the MLP weights for vision models.}
\vspace{-0.04in}
\label{tab:deit_ablation}
\begin{tabular}{ccccc}
\toprule
\setlength{\tabcolsep}{10.8mm}
\textbf{$W_{qkv}$} & \textbf{$W_o$} & \textbf{$W_{mlp}$} & \textbf{norm} & \textbf{Acc (\%)} \\
\midrule
 & & & & 34.7 \\
\checkmark & & & & 46.5 \\
 & \checkmark & & & 46.6 \\
 & & \checkmark & & {\textit{54.8}} \\
 & & & \checkmark & 47.7 \\
\checkmark & \checkmark & & & 50.1 \\
\checkmark & \checkmark & \checkmark & \checkmark & {\textbf{55.3}} \\
\bottomrule
\end{tabular}
\vspace{-0.15in}
\end{table}

The results, presented in Figure~\ref{fig:img_comb}~(a), reveal the remarkable efficiency of this approach. A mere 20 epochs of fine-tuning yields an initialization capability comparable to the 150-epoch from-scratch FRONT{\scalebox{1}{\text{+}}} variant, drastically reducing the required computation while still outperforming high-effort methods like WAVE. A clear trade-off also exists for $\lambda$: both insufficient (e.g., 1e-5) and excessive (e.g., 5e-3) regularization degrade performance, indicating the need for a balance between concentrating core knowledge and preserving useful high-frequency details. These findings establish FRONT{\scalebox{0.8}{\text{++}}} as a highly practical, resource-efficient solution for adapting pre-trained knowledge, achieving superior efficacy as a low-effort alternative.
% The results, presented in Figure~\ref{fig:re_e}, highlight the remarkable efficiency and effectiveness of our fine-tuning strategy, FRONT{\scalebox{0.8}{\text{++}}}. A key finding is that fine-tuning a pre-trained model for a mere 20 epochs achieves an initialization capability comparable to that of FRONT{\scalebox{1}{\text{+}}} trained for 150 epochs, demonstrating a dramatic reduction in the computational cost required to generate high-quality learngenes. Increasing the fine-tuning duration generally improves learngene quality, yet even our best-performing models require significantly less computation than competing retraining methods like WAVE. However, the regularization strength ($\lambda$) presents a clear trade-off. Both excessively low (e.g., 1e-5) and overly aggressive (e.g., 5e-3) values degrade performance. This suggests that an optimal balance is required: enough regularization to concentrate core knowledge, but not so much that it erases useful high-frequency details. These findings collectively demonstrate that FRONT{\scalebox{0.8}{\text{++}}} achieves superior initialization efficacy at a fraction of the computational cost of alternative high-effort retraining methods, positioning it as a highly practical solution for adapting pre-trained knowledge under constrained computational resources.
\begin{figure}
    \centering
    \includegraphics[width=\linewidth]{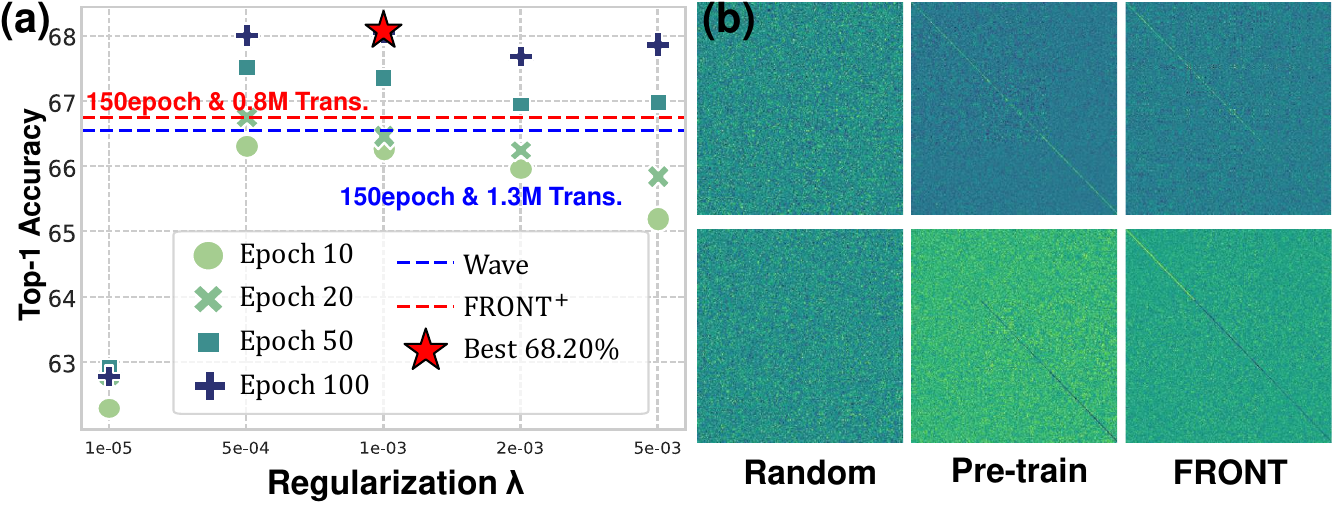}
    \caption{(a)Achieving superior performance with efficient refinement. (b) Self-attention layer structural patterns illustrated. The figure displays matrices $W_qW_k^T$ (top) and $W_vW_{o}$ (bottom) across three DeiT-T configurations: random initialization, pre-trained, and FRONT-initialized models. FRONT can inherit the diagonal property of self-attention layers that only exists in pre-trained ViTs.}
    \vspace{-0.16in}
    \label{fig:img_comb}
\end{figure}

\subsubsection{Module-Level Ablations}
\label{sec:module_ablations}

To further investigate the localization of transferable low-frequency knowledge within Transformer architectures, we conduct comprehensive module-level ablations across both vision and language domains. Specifically, we selectively apply our proposed FRONT initialization to distinct components: attention projections (\textbf{$W_{qkv}$}), attention output projections (\textbf{$W_o$}), feed-forward networks (\textbf{$W_{mlp}$}), and normalization layers (\textbf{norm}). Modules not selected for FRONT transfer are randomly initialized. 

We evaluate this on DeiT-Ti-L12 to DeiT-Ti-L4 transfer on ImageNet-1K for the vision domain, and BERT-B to BERT-S transfer on the GLUE benchmark for the language domain. The results reveal a fascinating divergence in how different modalities encode structural knowledge. As shown in Table~\ref{tab:deit_ablation}, for DeiT, the transferable low-frequency knowledge is predominantly concentrated in the MLP weights. Applying FRONT solely to the MLP blocks achieves an accuracy of 54.8\%, which significantly outperforms transferring any other single module and closely approaches the optimal performance of 55.3\% achieved when transferring all modules. Conversely, Table~\ref{tab:bert_ablation} demonstrates that for BERT, the transferable knowledge is heavily concentrated in the self-attention mechanisms. Transferring only the self-attention weights ($W_{qkv}$ and $W_o$) yields an average GLUE score of 63.61, substantially higher than transferring the MLP weights alone (58.08) and nearly matching the full-transfer performance of 64.63.

\subsubsection{Visualization of Structural Knowledge}

% \begin{figure*}[tb]
% % \vspace{-0.14in}
%     \centering
%     \subfloat[Random Init.\label{fig:random_init}]{%
%         \includegraphics[width=0.33\textwidth]{img/visualization_random_init.pdf}%
%     }
%     \hfill
%     \subfloat[Direct Pre-train\label{fig:pretrained DeiT-T}]{%
%         \includegraphics[width=0.33\textwidth]{img/visualization_pretrained.pdf}%
%     }
%     \hfill
%     \subfloat[FRONT\label{fig:FRONT}]{%
%         \includegraphics[width=0.33\textwidth]{img/visualization_front.pdf}%
%     }
%     \vspace{-0.07in}
%     \caption{Self-attention layer structural patterns illustrated. The figure displays matrices $W_qW_k^T$ (left) and $W_vW_{o}$ (right) across three DeiT-T configurations: random initialization, pre-trained, and FRONT-initialized models. FRONT can inherit the diagonal property of self-attention layers that only exists in pre-trained ViTs.}
%     \label{fig:self_attention}
%     \vspace{-0.17in}
% \end{figure*}

As illustrated in Figure~\ref{fig:img_comb}~(b), FRONT enables models to inherit the essential diagonal attributes within self-attention layers—a characteristic typically exclusive to pre-trained models. FRONT autonomously encapsulates structured knowledge from pre-trained models into the low-frequency domain without requiring manual intervention. Through the transmission of minimal parameters via DCT and IDCT operations, the initialized model inherently preserves these critical structural features in its parameter matrices, resulting in improved training efficiency and faster convergence.

\section{Conclusion}
% We propose FRONT, a novel initialization method that leverages knowledge from large pre-trained models to initialize networks of varying architectures.  With no extra cost, this approach improves model accuracy and reduces training time. By implementing frequency-based regularization during retraining, we concentrate knowledge representation primarily in low-frequency components. FRONT demonstrates superior performance when initializing models for both depth and width expansions, while the size-agnostic knowledge it transfers proves sufficiently generalizable for application across diverse downstream datasets.

% This paper empirically reveals that a model's ``learngene", where its core and transferable knowledge is encoded primarily in the low-frequency components of its weights, providing a robust and efficient mechanism for inheritance by downstream models. 
This paper empirically reveals that a model's ``learngene", where its core and transferable knowledge is encoded primarily in the low-frequency components of its weights, provides a robust and efficient mechanism for inheritance by downstream models.
We propose FRONT, a novel initialization framework that enables flexible and efficient reuse of parameters from a single pre-trained model to initialize diverse architectures with varying sizes. By leveraging the DCT, FRONT efficiently extracts the task-agnostic, low-frequency knowledge, which we identify as learngenes. Furthermore, we introduce an optional yet powerful frequency-based regularization strategy, FRONT{\scalebox{1}{\text{+}}}, which refines learngenes to enhance their generalizability by penalizing high-frequency artifacts.
% By transferring low-frequency coefficients–identified as learngenes, via DCT, facilitates resource-efficient and size-adaptive initialization without requiring repeated pretraining. 
% The core mechanism of FRONT is the transfer of low-frequency coefficients, which we identify as learngenes, via the DCT. This process facilitates a resource-efficient and size-adaptive initialization that avoids the need for repeated pre-training.
% The proposed frequency-based regularization further enhances the generalizability of learngenes, namely FRONT{\scalebox{1}{\text{+}}}, by promoting shared, transferable task-agnostic knowledge. 
% Empirical results demonstrate that our method consistently achieves superior accuracy and significantly reduced training time when initializing models of varying depths and widths, while the size-agnostic representation it leverages proves robust across a range of downstream tasks and datasets on visual and language domains.
Empirical results demonstrate that our method consistently achieves superior accuracy and significantly reduces training time when initializing models of varying depths and widths. Furthermore, the leveraged size-agnostic representation proves robust across diverse downstream tasks in both vision and language domains

\clearpage

\section*{Impact Statement}
The broader impact of FRONT lies in establishing a frequency-domain transfer paradigm that decouples foundational knowledge from rigid architectural constraints. By enabling the systematic reuse of pre-trained weights for "one-for-all" initialization, our approach significantly reduces the energy consumption and computational costs (e.g., training FLOPs) associated with model adaptation without extra training cost. This contributes to the goals of Green AI and lowers the barrier to entry for researchers and practitioners in resource-constrained environments, thereby facilitating the democratization of large-scale pre-trained models.

\section*{Acknowledgement}
\label{sec:acknowledge}

% This research was supported by the Jiangsu Science Foundation (BG2024036, BK20243012, BK20241297), the National Science Foundation of China (62125602, 62406066, U24A20324, 92464301, 625B2045), the New Cornerstone Science Foundation through the XPLORER PRIZE, and the Fundamental Research Funds for the Central Universities (2242025K30024).

This research was supported by the Jiangsu Science Foundation (BG2024036, BK20243012, BK20241297), the National Science Foundation of China (62406066, 62125602, U24A20324, 92464301), the New Cornerstone Science Foundation through the XPLORER PRIZE, and the Fundamental Research Funds for the Central Universities (2242025K30024), and the Big Data Computing Center of Southeast University.

% In the unusual situation where you want a paper to appear in the
% references without citing it in the main text, use \nocite

\bibliography{example_paper}
\bibliographystyle{icml2026}

%%%%%%%%%%%%%%%%%%%%%%%%%%%%%%%%%%%%%%%%%%%%%%%%%%%%%%%%%%%%%%%%%%%%%%%%%%%%%%%
%%%%%%%%%%%%%%%%%%%%%%%%%%%%%%%%%%%%%%%%%%%%%%%%%%%%%%%%%%%%%%%%%%%%%%%%%%%%%%%
% APPENDIX
%%%%%%%%%%%%%%%%%%%%%%%%%%%%%%%%%%%%%%%%%%%%%%%%%%%%%%%%%%%%%%%%%%%%%%%%%%%%%%%
%%%%%%%%%%%%%%%%%%%%%%%%%%%%%%%%%%%%%%%%%%%%%%%%%%%%%%%%%%%%%%%%%%%%%%%%%%%%%%%
\newpage
\appendix
\onecolumn

\section{Analysis on FRONT}
\label{sec:appendix_theory}

We provide a formal analysis for the claim that low-frequency components of model weights encode task-invariant knowledge. We formulate the fine-tuning process as learning a target function composed of a shared, smooth component and a specific, high-frequency component. We prove that under standard spectral bias assumptions, the optimization gradients are concentrated in the high-frequency weight subspace.

Let $f(x; W): \mathcal{X} \to \mathcal{Y}$ be a neural network parameterized by $W \in \mathbb{R}^d$. Let $\hat{W} = \mathcal{D}W$. We partition the frequency spectrum $\mathcal{K}$ into low-frequency indices $\mathcal{I}_L$ and high-frequency indices $\mathcal{I}_H$. The weight space $\mathbb{R}^d$ is the direct sum of two orthogonal subspaces $\mathcal{W}_L \oplus \mathcal{W}_H$, where:
\[ W_L = \mathcal{D}^{-1}(M_L \odot \hat{W}) \in \mathcal{W}_L, \quad W_H = \mathcal{D}^{-1}(M_H \odot \hat{W}) \in \mathcal{W}_H \].
Here, $M_L$ and $M_H$ are binary masks. Due to the orthogonality of DCT, $\|W\|^2 = \|W_L\|^2 + \|W_H\|^2$.

\begin{assumption}
\label{ass:decomp}
The target function $f^*$ for a downstream task can be decomposed into a shared, generalizable component $f_G^*$ and a task-specific component $f_S^*$:
\[ f^*(x) = f_G^*(x) + f_S^*(x), \]
where $f_G^*$ captures common general knowledge and $f_S^*$ captures task-specific details.
\end{assumption}

\begin{assumption}
\label{ass:correspondence}
% Consistent with the spectral bias analysis in deep networks~\cite{rahaman2019spectral, xu2019frequency}, we assume a correspondence between the frequency spectrum of the weights and the Lipschitz smoothness of the function.
Motivated by the Spectral Bias of Deep Neural Networks~\citep{rahaman2019spectral, xu2019frequency}, which states networks learn low-frequency functions first, we posit that the functional smoothness is reflected in the structural smoothness of the weight matrices. 
Specifically, $f_G^*$ is $\lambda_L$-Lipschitz (smooth) and can be well-approximated by weights in $\mathcal{W}_L$ and $f_S^*$ is $\lambda_H$-Lipschitz (sharp), where $\lambda_H \gg \lambda_L$, requiring weights in $\mathcal{W}_H$ for accurate approximation.
\end{assumption}

\begin{assumption}
\label{ass:pretrain}
The pre-trained weights $W_0$ have converged to the generalizable component. That is:
\[ f(x; W_0) \approx f_G^*(x). \]
Consequently, the projection of $W_0$ onto the generalizable manifold is optimal, implying $\nabla_{W_L} \mathcal{L}_{task}(W_0) \approx 0$ if the task were purely $f_G^*$.
\end{assumption}

We then analyze the gradient when fine-tuning on the target task with loss function $\mathcal{L}(W) = \frac{1}{2} \mathbb{E}_{x} [\| f(x; W) - f^*(x) \|^2]$.

\begin{theorem}
\label{thm:gradient}
Under Assumptions \ref{ass:decomp}-\ref{ass:pretrain}, the gradient of the loss function at the onset of fine-tuning, $G = \nabla_W \mathcal{L}(W_0)$, satisfies:
\[ \| \mathcal{P}_{H}(G) \| \gg \| \mathcal{P}_{L}(G) \|, \]
where $\mathcal{P}_{H}$ and $\mathcal{P}_{L}$ are orthogonal projections onto the high-frequency ($\mathcal{W}_H$) and low-frequency ($\mathcal{W}_L$) subspaces, respectively.
\end{theorem}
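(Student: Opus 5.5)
The plan is to write the gradient at $W_0$ explicitly via the chain rule, split the residual using Assumptions~\ref{ass:decomp} and~\ref{ass:pretrain}, and then bound the low- and high-frequency projections separately using Assumption~\ref{ass:correspondence}. The DCT $\mathcal{D}$ is orthogonal and $M_L + M_H = \mathbf{1}$, so the projections are $\mathcal{P}_L = \mathcal{D}^{-1}(M_L\odot \mathcal{D}\,\cdot)$ and $\mathcal{P}_H = \mathcal{D}^{-1}(M_H\odot \mathcal{D}\,\cdot)$. Both commute with taking expectations, and they satisfy $\|G\|^2=\|\mathcal{P}_L G\|^2+\|\mathcal{P}_H G\|^2$. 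With the Jacobian $J(x)=\partial_W f(x;W_0)$ we have
\[ G=\mathbb{E}_x\big[J(x)^\top (f(x;W_0)-f^*(x))\big]. \]
By Assumption~\ref{ass:decomp}, the residual is $r(x)=\big(f(x;W_0)-f_G^*(x)\big)-f_S^*(x)=: \epsilon(x)-f_S^*(x)$. Assumption~\ref{ass:pretrain} says $\epsilon\approx 0$, and this is how I would read the statement that $\nabla_{W_L}\mathcal{L}\approx 0$ for the pure-$f_G^*$ task. Hence
\[ G = \underbrace{\mathbb{E}_x[J^\top \epsilon]}_{G_\epsilon} \;-\; \underbrace{\mathbb{E}_x[J^\top f_S^*]}_{G_S}, \qquad \mathcal{P}_L G_\epsilon\approx 0 . \]
So $\|\mathcal{P}_L G\|\lesssim \|\mathcal{P}_L G_S\| + o(1)$ and $\|\mathcal{P}_H G\|\gtrsim \|\mathcal{P}_H G_S\| - \|\mathcal{P}_H G_\epsilon\|$.

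The core step is to show that $G_S$ lives mostly in $\mathcal{W}_H$. I would formalize Assumption~\ref{ass:correspondence} as a spectral-alignment statement about the Jacobian. Perturbations of $W$ along $\mathcal{W}_L$ change $f$ only through functions of Lipschitz constant at most $c\lambda_L$, while representing a $\lambda_H$-Lipschitz function requires components in $\mathcal{W}_H$. Concretely, I would decompose $f_S^*=g_L+g_H$, where $g_L$ is the part of $f_S^*$ lying in the span of $\{J v: v\in\mathcal{W}_L\}$ (an $L^2$ projection) and $g_H$ is orthogonal to that span. Then $\mathcal{P}_L G_S=\mathbb{E}[J_L^\top g_L]$ with $J_L = J\mathcal{P}_L$, because $g_H$ is orthogonal to the range of $J_L$. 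Since $f_S^*$ is "sharp", its overlap with the smooth span is small: $\|g_L\|\le \kappa(\lambda_L/\lambda_H)\|f_S^*\|$ for some $\kappa$ with $\kappa(t)\to 0$ as $t\to 0$. This inequality is where the spectral-bias heuristic is actually invoked, and I would state it as the precise content of Assumption~\ref{ass:correspondence}. Consequently
\[ \|\mathcal{P}_L G\|\le \|J_L\|_{op}\,\kappa(\lambda_L/\lambda_H)\,\|f_S^*\| + o(1). \]

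For the lower bound on the high-frequency part, I would use the clause "requiring weights in $\mathcal{W}_H$ for accurate approximation" to obtain a nondegeneracy condition. Specifically, $J_H=J\mathcal{P}_H$ restricted to the directions that fit $g_H$ has smallest singular value $\sigma_H>0$, which gives $\|\mathcal{P}_H G_S\|=\|\mathbb{E}[J_H^\top g_H]\|\ge \sigma_H\|g_H\|\ge\sigma_H\,(1-\kappa)\|f_S^*\|$. Taking the ratio yields
\[ \frac{\|\mathcal{P}_L G\|}{\|\mathcal{P}_H G\|}\lesssim \frac{\|J_L\|_{op}}{\sigma_H}\cdot\frac{\kappa(\lambda_L/\lambda_H)}{1-\kappa(\lambda_L/\lambda_H)}, \]
which becomes arbitrarily small as $\lambda_H/\lambda_L\to\infty$. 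This is the meaning of $\gg$ in the statement.

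The main obstacle is the step from Lipschitz constants of functions to the frequency content of weights, i.e., the bound on $\|g_L\|$ and the lower bound $\sigma_H$. These are not consequences of anything proved earlier; they have to be extracted from Assumption~\ref{ass:correspondence} as an explicit quantitative hypothesis. My first attempt would be to argue that every function in the range of $J_L$ is $c\lambda_L$-Lipschitz. Then a $\lambda_H$-Lipschitz $f_S^*$ that is far (in the sense of Assumption~\ref{ass:correspondence}) from the class of $c\lambda_L$-Lipschitz functions has only a small projection onto that range. If this cannot be made rigorous, I would instead state it as a defining inequality of the assumption and present the theorem as a clean consequence of it.
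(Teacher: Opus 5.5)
Your proposal follows the same route as the paper's proof. The paper also writes $G=\mathbb{E}_x[J^\top r]$ and replaces $r$ by $-f_S^*$ using Assumptions~\ref{ass:decomp} and~\ref{ass:pretrain}. It then asserts $\|\mathcal{P}_L(G)\|\approx 0$ because ``$\langle \text{Smooth},\text{Sharp}\rangle\approx 0$'', and $\|\mathcal{P}_H(G)\|\propto\|f_S^*\|$ because the high-frequency Jacobian ``overlaps significantly'' with $f_S^*$. Your decomposition $f_S^*=g_L+g_H$ relative to the range of $J_L$, the function $\kappa$, and the constant $\sigma_H$ are precisely the quantitative hypotheses those two sentences use tacitly. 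Your ratio bound is a more explicit rendering of $\gg$. There is, however, one step to repair.

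Write $J_H^*g=\mathbb{E}_x[J_H^\top g]$ for the adjoint. You claim $\mathcal{P}_H G_S=J_H^*g_H$, but in fact $\mathcal{P}_H G_S=J_H^*g_H+J_H^*g_L$. Orthogonality of $\mathcal{W}_L$ and $\mathcal{W}_H$ in weight space does not make the ranges of $J_L$ and $J_H$ orthogonal in $L^2$, so $J_H^*$ need not annihilate $g_L$.

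The fix is the triangle inequality, $\|\mathcal{P}_H G_S\|\ge \sigma_H\|g_H\|-\|J_H\|_{op}\,\kappa\,\|f_S^*\|$. This still yields a ratio of order $\|J_L\|_{op}\kappa/(\sigma_H(1-\kappa)-\|J_H\|_{op}\kappa)$, which tends to $0$ as $\kappa\to 0$.

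Relatedly, state the nondegeneracy directly as $\|J_H^*g_H\|\ge\sigma_H\|g_H\|$. A smallest-singular-value condition only controls the part of $g_H$ lying in the range of $J_H$; anything outside the range of $J$ is killed by the adjoint. Also require the $\epsilon$-contributions to be small relative to $\sigma_H\|f_S^*\|$, not merely $o(1)$.

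Finally, do not expect your ``first attempt'' to derive the $\kappa$-bound from Lipschitz constants. A Lipschitz constant is only an upper bound, so every $\lambda_L$-Lipschitz function is also $\lambda_H$-Lipschitz. Read literally, Assumption~\ref{ass:correspondence} therefore does not prevent $f_S^*$ from lying largely in the smooth span. Your fallback, positing the $\kappa$-inequality as the content of the assumption, is the right move. It is exactly what the paper's spectral-separation step assumes without saying so.
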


\begin{proof}

The gradient of the loss $\mathcal{L}(W)$ with respect to weights $W$ is given by:
\[ G = \nabla_W \mathcal{L}(W_0) = \mathbb{E}_{x} \left[ \nabla_W f(x; W_0)^T \big( f(x; W_0) - f^*(x) \big) \right] \]

Using Assumption \ref{ass:pretrain} and Assumption \ref{ass:decomp}, the residual error term becomes:
\begin{equation*}
r(x) = f(x; W_0) - f^*(x) \approx f_G^*(x) - (f_G^*(x) + f_S^*(x)) = -f_S^*(x).
\end{equation*}
Thus, the optimization is driven solely by the negative task-specific component $-f_S^*(x)$.

We project the gradient $G$ onto the low-frequency subspace $\mathcal{W}_L$:
\[ \mathcal{P}_L(G) = \mathbb{E}_{x} \left[ \mathcal{P}_L \left( \nabla_W f(x; W_0) \right)^T (-f_S^*(x)) \right] \]
Here, $\nabla_W f(x; W_0)$ represents the Jacobian of the network. The term $\mathcal{P}_L (\nabla_W f)$ captures the sensitivity of the network output to changes in low-frequency weights.
Based on Assumption \ref{ass:correspondence}, functions generated by variations in $\mathcal{W}_L$ are smooth. Let $\phi_L(x)$ be a basis function in the function space spanned by varying $W_L$.
Conversely, $f_S^*(x)$ is a high-frequency (sharp) function.

In the frequency domain of the function space, the inner product between a smooth function (variations caused by $W_L$) and a high-frequency residual ($f_S^*$) is negligible due to spectral separation, i.e., $\langle \text{Smooth}, \text{Sharp} \rangle \approx 0$.
Therefore:
\[ \| \mathcal{P}_L(G) \| = \left\| \mathbb{E}_{x} [ \nabla_{W_L} f \cdot (-f_S^*) ] \right\| \approx 0. \]
This result aligns with the intuition that low-frequency weights cannot effectively fit high-frequency noise/details; thus, the gradient (which represents the ``desire" to fit the error) is small in this direction.

Consider the projection onto $\mathcal{W}_H$:
\[ \mathcal{P}_H(G) = \mathbb{E}_{x} \left[ \nabla_{W_H} f(x; W_0)^T (-f_S^*(x)) \right]. \]
Since variations in $W_H$ can generate high-frequency functions, the spectral support of $\nabla_{W_H} f$ overlaps significantly with that of $f_S^*$. The inner product is non-zero and significant:
\[ \| \mathcal{P}_H(G) \| \propto \| f_S^* \| \gg 0. \]
Then we have: $\| \mathcal{P}_H(G) \| \gg \| \mathcal{P}_L(G) \|$, which concludes the proof.

\end{proof}

Since the gradient for low-frequency components is negligible, these weights remain stable during fine-tuning, confirming their role as the immutable ``learngene''. In contrast, high-frequency weights require significant updates to fit $f_S^*$. FRONT explicitly preserves the stable $W_L$ while allowing $W_H$ to be re-initialized or adapted, thereby aligning the initialization with the spectral requirements of transfer learning.

\section{Preliminary}

\subsection{Vision Transformer Architecture}
\label{sec:vit_arch}

The Vision Transformer encoder establishes a mapping relationship:
\begin{equation}
    \mathcal{E}: \mathbb{R}^{n \times d} \to \mathbb{R}^{n \times d},
\end{equation}
composed of $L$ successive transformation layers, where each layer combines multi-head self-attention (MSA) operators with multi-layer perceptron (MLP) projections through residual connections.

The Multi-Head Self-Attention (MSA) mechanism allows the model to attend to different parts of the input sequence, capturing long-range dependencies between image patches. It consists of $h$ parallel self-attention heads. For each attention head $i \in \{1,...,h\}$, the self-attention head operation computes:

\begin{equation}
\mathsf{\text{Attn}}_i(\mathbf{X}) = \mathsf{\text{softmax}}\left(\frac{(\mathbf{X}\mathbf{W}^q_i)(\mathbf{X}\mathbf{W}^k_i)^\top}{\sqrt{d_h}} \right)(\mathbf{X}\mathbf{W}^v_i),
\label{eq:attention}
\end{equation}

\begin{equation}
\mathsf{\text{MSA}}(\mathbf{X}) = \left[ \mathsf{\text{Attn}}_1(\mathbf{X}); \cdots; \mathsf{\text{Attn}}_h(\mathbf{X}) \right] \mathbf{W}_o \quad \text{with} \quad \mathbf{W}_o \in \mathbb{R}^{hd_h \times D},
\label{eq:msa}
\end{equation}

where $\mathbf{W}^q_i, \mathbf{W}^k_i, \mathbf{W}^v_i \in \mathbb{R}^{D \times d_h}$ are learnable projection matrices, and $d_h = D/h$ ensures dimensional consistency. These projections can be compactly represented through concatenation as $\mathbf{W}_{qkv} \in \mathbb{R}^{D \times 3hd}$ where $3hd = 3h \cdot d_h$ corresponds to the combined query-key-value dimensions. 

The position-wise MLP implements feature transformation through dimension expansion:

\begin{equation}
\mathsf{\text{MLP}}(\mathbf{X}) = \mathsf{\text{GELU}}(\mathbf{X}\mathbf{W}_{\text{in}} + \mathbf{b}_1)\mathbf{W}_{\text{out}} + \mathbf{b}_2,
\label{eq:mlp}
\end{equation}

where $\mathbf{W}_{\text{in}} \in \mathbb{R}^{D \times D'}$ and $\mathbf{W}_{\text{out}} \in \mathbb{R}^{D' \times D}$ with expansion ratio $\rho = D'/D$ typically set to 4.

The Vision Transformer (ViT) first divides an input image into $N$ patches of size $P \times P$. Each patch is then linearly projected into a $D$-dimensional embedding. To retain positional information, a positional encoding $\mathbf{E} \in \mathbb{R}^{N \times D}$ is added to the patch embeddings, resulting in the input sequence $\mathbf{X} \in \mathbb{R}^{N \times D}$. This sequence is then processed through $L$ successive encoder layers, each implementing dual transformation pathways:

\begin{equation}
\mathsf{\text{EncoderLayer}}(\mathbf{X}) = \mathsf{\text{MLP}}(\mathsf{\text{LayerNorm}}(\mathsf{\text{MSA}}(\mathsf{\text{LayerNorm}}(\mathbf{X})))) + \mathbf{X},
\end{equation}

where $\mathsf{\text{MSA}}$ denotes multi-head self-attention and $\mathsf{\text{MLP}}$ represents the position-wise feed-forward network. Layer Normalization ($\mathsf{\text{LayerNorm}}$) is applied before both $\mathsf{\text{MSA}}$ and $\mathsf{\text{MLP}}$ to stabilize training. This residual structure enables stable gradient flow during backpropagation.

\subsection{Discrete Cosine Transform (DCT) and Inverse DCT (IDCT)}
\label{dctandIDCT}

\paragraph{1D-DCT} For $x$ of length $M$, its transformed frequency domain representation $X$ also has a length of $M$:
\begin{equation}
X[k] = \alpha(k)\sum_{n=0}^{M-1} x[n] \cos\left(\frac{\pi(2n+1)k}{2M}\right),\quad k=0,...,M-1,
\end{equation}
where the scaling factor:
\begin{equation}
\label{eq:sample}
\alpha(k) = 
\begin{cases}
\sqrt{\dfrac{1}{M}}, & k=0 \\
\sqrt{\dfrac{2}{M}}, & k \geq 1
\end{cases}
\text{.}
\end{equation}

\paragraph{2D-DCT} For an input matrix $\mathbf{x} \in \mathbb{R}^{M \times N}$, the transformed coefficient matrix $\mathbf{X} \in \mathbb{R}^{M \times N}$ preserves the original dimensions through:

\begin{equation}
X[k,l] = \alpha(k)\alpha(l)\sum_{m=0}^{M-1}\sum_{n=0}^{N-1} x[m,n] \cos\left(\frac{\pi(2m+1)k}{2M}\right)\cos\left(\frac{\pi(2n+1)l}{2N}\right).
\end{equation}
This is foundational to JPEG compression, where $X[0,0]$ represents the DC (average intensity) component.

\paragraph{3D-DCT} For the weight ${x} \in \mathbb{R}^{M \times N \times P}$, the transformed coefficient matrix ${X} \in \mathbb{R}^{M \times N \times P}$ is:
\begin{equation}
X[k,l,q] = \mathcal{D}({x}) = \alpha(k)\alpha(l)\alpha(q) \sum_{m=0}^{M-1}\sum_{n=0}^{N-1}\sum_{p=0}^{P-1} x[m,n,p] \, C(m,n,p,k,l,q).
\end{equation}
\paragraph{3D-IDCT} The IDCT recovers the original signal:
\begin{equation}
x[m,n,p] = \mathcal{D}^{-1}({X}) = \alpha(k)\alpha(l)\alpha(q) \sum_{k=0}^{M-1}\sum_{l=0}^{N-1}\sum_{q=0}^{P-1} X[k,l,q] \, C(m,n,p,k,l,q),
\label{eq:3d_IDCT2}
\end{equation}
where $ 0 \leq m, k < M$, $0 \leq n, l < N$, $0 \leq p, q < P$, and $\alpha(d)$ follow the standard definition in 3D-DCT/3D-IDCT:
\begin{equation}
\begin{aligned}
    C(m,n,p,k,l,q) &= \cos\bigl(\frac{\pi(2m+1)k } {2M}\bigr) \cdot \cos\bigl(\frac{\pi(2n+1)l}{2N}\bigr) \cdot \cos\bigl(\frac{\pi(2p+1)q} {2P}\bigr), \\
\alpha(d) &= \begin{cases}
\sqrt{\frac{1}{S_d}} & \text{if } d=0 \\
\sqrt{\frac{2}{S_d}}  & \text{if } d>0
\end{cases}
\quad \text{with} \quad
S_d = \begin{cases}
M & \text{if } d=k \\
N & \text{if } d=l \\
P & \text{if } d=q
\end{cases}
\text{.}
\end{aligned}
\end{equation}

\paragraph{4D-DCT} For an input tensor $\mathbf{x} \in \mathbb{R}^{M \times N \times P \times Q}$, the transformed coefficient tensor $\mathbf{X} \in \mathbb{R}^{M \times N \times P \times Q}$ preserves the original dimensions through:

\begin{equation}
\begin{split}
X[k,l,p,q] = \alpha(k)\alpha(l)\alpha(p)\alpha(q) \sum_{m=0}^{M-1}\sum_{n=0}^{N-1}\sum_{o=0}^{P-1}\sum_{r=0}^{Q-1} x[m,n,o,r] \cos\left(\frac{\pi(2m+1)k}{2M}\right) \\
\cos\left(\frac{\pi(2n+1)l}{2N}\right) \cos\left(\frac{\pi(2o+1)p}{2P}\right) \cos\left(\frac{\pi(2r+1)q}{2Q}\right).
\end{split}
\end{equation}

\paragraph{N-Dimensional IDCT} For an input tensor $\mathbf{X} \in \mathbb{R}^{D_1 \times D_2 \times \dots \times D_N}$, the inverse transformed tensor $\mathbf{x} \in \mathbb{R}^{D_1 \times D_2 \times \dots \times D_N}$ is given by:

\begin{equation}
x[d_1, d_2, \dots, d_N] = \prod_{i=1}^{N} \alpha(d_i) \sum_{k_1=0}^{D_1-1} \sum_{k_2=0}^{D_2-1} \dots \sum_{k_N=0}^{D_N-1} X[k_1, k_2, \dots, k_N] \prod_{i=1}^{N} \cos\left(\frac{\pi(2d_i+1)k_i}{2D_i}\right),
\end{equation}
where
\begin{equation}
\alpha(d_i) = 
\begin{cases}
\sqrt{\dfrac{1}{D_i}}, & k_i=0 \\
\sqrt{\dfrac{2}{D_i}}, & k_i \geq 1
\end{cases}
\text{.}
\end{equation}

\section{FRONT Algorithms}
The following process takes the example of direct extraction (FRONT) and using knowledge distillation to train the auxiliary model from scratch (FRONT+).
\label{FRONT Algorithms}
\subsection{Application to Vision Transformers (ViTs)}
\label{app:vit_all}
\paragraph{Algorithm 1: ViTs Learngene Acquisition}
\label{app:vit_a1}
\textbf{Requirements:}
\begin{itemize}
    \item Pre-trained/Auxiliary model weight tensors $\mathbf{\Omega}_{\text{pre/aux}} = \{ \mathbf{W}^{(1\sim L)}_{qkv}, \mathbf{W}^{(1\sim L)}_{o}, \mathbf{W}^{(1\sim L)}_{fc1}, \mathbf{W}^{(1\sim L)}_{fc2} \} $, frequency ratio $r$.
    \item For Method 2: Training dataset $\{ (x^{(i)}, y^{(i)}) \}_{i=1}^m$, number of epochs $N_{ep}$, batch size $B$, learning rate $\alpha$, regularization weight $\lambda$, decay rates $\gamma_d$.
\end{itemize}

\textbf{Output}: Learngenes (Compact spectral representation).

\begin{enumerate} % Main Level (Method 1 and 2)
    \item \textbf{Method 1: Direct Truncation of pre-trained Weights' Spectra}:
    \begin{enumerate} % Level 2 (Steps in Method 1)
        \item Initialize an empty set for learngene $\mathcal{G}_1 = \emptyset$.
        \item For each weight tensor $\mathbf{W}_i \in \mathbf{\Omega}_{\text{pre}}$:
        \begin{enumerate} % Level 3 (Steps inside the loop)
            \item Apply 3D-DCT: 
            \begin{align*}
                \mathbf{\Phi}_i = \mathcal{D}(\mathbf{W}_i) \quad
            \end{align*}
            \item Construct binary truncation mask: 
            \begin{align*}
                \mathbf{M}_r[l,m,n] = 
                \begin{cases}
                    1 & \text{if } l \leq \lfloor rL \rfloor, m \leq \lfloor rd_{\text{in}} \rfloor, n \leq \lfloor rd_{\text{out}} \rfloor \\
                    0 & \text{otherwise}
                \end{cases}
                \quad \text{(Eq. (~\ref{eq:truncation}) )}
            \end{align*}
            \item Apply truncation: $\hat{\mathbf{\Phi}}_i = \mathbf{M}_r \odot \mathbf{\Phi}_i$.
            \item Add the non-zero elements of $\hat{\mathbf{\Phi}}_i$ to $\mathcal{G}_1$. (Learngenes from Method 1)
        \end{enumerate}
        \item Output $\mathcal{G}_1$ as Learngenes.
    \end{enumerate}

    \item \textbf{Method 2: Distillation-Aware Training with a high-frequency attenuation regularization from scratch}:
    \begin{enumerate} % Level 2 (Steps in Method 2)
        \item Initialize Auxiliary Model with weights $\mathbf{\Omega}_{\text{aux}}$ (copying $\mathbf{\Omega}_{\text{pre}}$ or random, where we use random Initialization).
        \item Initialize training optimizer for $\mathbf{\Omega}_{\text{aux}}$.
        \item For $ep = 1$ to $N_{ep}$:
        \begin{enumerate} % Level 3 (Steps inside the outer loop)
            \item For each batch $\{ (x_j, y_j) \}_{j=1}^B$:
            \begin{enumerate} % Level 4 (Steps inside the batch loop)
                \item Get logits: $z_{\text{pre},j} = f_{\text{pre}}(x_j)$, $z_{\text{aux},j} = f_{\text{aux}}(x_j)$.
                \item Calculate Logit Loss: 
                \begin{align*}
                    \mathcal{L}_{\text{logit}} = \frac{1}{B}\sum_{j=1}^B \left( \text{KL}(z_{\text{pre},j} \| z_{\text{aux},j}) + \text{CE}(z_{\text{aux},j}, y_j) \right)
                \end{align*}
                \item Calculate the high-frequency attenuation regularization Loss $\mathcal{L}_{\text{reg}}$: 
                
                \hspace{1em} $\bullet$ Set $\mathcal{L}_{\text{reg}} = 0$.
                
                \hspace{1em} $\bullet$ For each weight tensor $\mathbf{W}_{\text{aux}, i} \in \mathbf{\Omega}_{\text{aux}}$:
                
                \hspace{2em} $\circ$ Apply 3D-DCT: $\mathbf{\Phi}_{\text{aux}, i} = \mathcal{D}(\mathbf{W}_{\text{aux}, i})$.
                
                \hspace{2em} $\circ$ Construct adaptive mask $\mathbf{M}_r[l,m,n]$ using Eq. (~\ref{m}).
                
                \hspace{2em} $\circ$ Calculate term: 
                \begin{align*}
                    Term_i = \frac{1}{\|\mathbf{M}_r \odot \mathbf{\Phi}_{\text{aux}, i}\|_0} \sum_{l,m,n} (\mathbf{M}_r[l,m,n] \cdot \mathbf{\Phi}_{\text{aux}, i}[l,m,n])^2 \quad \text{(Eq. (~\ref{reg}) )}
                \end{align*}
                
                \hspace{2em} $\circ$ $\mathcal{L}_{\text{reg}} = \mathcal{L}_{\text{reg}} + Term_i$.
                
                \item Calculate Total Loss: 
                \begin{align*}
                    \mathcal{L}_{\text{total}} = (1-\lambda)\mathcal{L}_{\text{logit}} + \lambda\mathcal{L}_{\text{reg}} \quad \text{(Eq. (~\ref{eq:total_loss}) )}
                \end{align*}
                \item Perform backward pass and update weights: 
                \begin{align*}
                    \mathbf{\Omega}_{\text{aux}} \leftarrow \mathbf{\Omega}_{\text{aux}} - \alpha \nabla \mathcal{L}_{\text{total}}
                \end{align*}
            \end{enumerate}
        \end{enumerate}
        \item Initialize $\mathcal{G}_2 = \emptyset$.
        \item For each trained weight tensor $\mathbf{W}_{\text{aux}, i} \in \mathbf{\Omega}_{\text{aux}}$:
        \begin{enumerate} % Level 3 (Steps inside the loop)
            \item Apply 3D-DCT: $\mathbf{\Phi}_{\text{aux}, i} = \mathcal{D}(\mathbf{W}_{\text{aux}, i})$.
            \item Add low-frequency components of $\mathbf{\Phi}_{\text{aux}, i}$ to $\mathcal{G}_2$. (learngenes from Method 2)
        \end{enumerate}
        \item Output $\mathcal{G}_2$ as learngenes.
    \end{enumerate}
\end{enumerate}

\paragraph{Algorithm 2: ViTs Initialization of Variable-Sized Models}
\label{app:vit_a2}
\textbf{Input:} Acquired Learngenes $\mathcal{G} = \{ \hat{\mathbf{\Phi}}_i \}$, target model dimensions $L'$, $d'_{\text{in}}$, $d'_{\text{out}}$.

\textbf{Output:} Initialized weight tensors for the variable-sized target model $\mathbf{\Omega}' = \{ \mathbf{W}'^{(1\sim L')}_{qkv}, \mathbf{W}'^{(1\sim L')}_{o}, \mathbf{W}'^{(1\sim L')}_{{fc1}}, \mathbf{W}'^{(1\sim L')}_{{fc2}} \}$.

\begin{enumerate}[label=\arabic*.]
    \item \textbf{Spectral Transformation:}
    \begin{enumerate}[label=\alph*.]
        \item \textbf{Case 1: $L \leq L'$, $d_{\text{in}} \leq d'_{\text{in}}$, $d_{\text{out}} \leq d'_{\text{out}}$ (Padding):}
        \begin{enumerate}[label=\roman*.]
            \item Apply zero-padding to $\hat{\mathbf{\Phi}}_i$ to obtain $\mathbf{\Phi}'_i \in \mathbb{R}^{L' \times d'_{\text{in}} \times d'_{\text{out}}}$.  Pad along each dimension as needed.
        \end{enumerate}
        \item \textbf{Case 2: $L > L'$, $d_{\text{in}} > d'_{\text{in}}$, $d_{\text{out}} > d'_{\text{out}}$ (Truncation):}
        \begin{enumerate}[label=\roman*.]
            \item Truncate $\hat{\mathbf{\Phi}}_i$ to obtain $\mathbf{\Phi}'_i \in \mathbb{R}^{L' \times d'_{\text{in}} \times d'_{\text{out}}}$. Discard high-frequency elements along each dimension as needed.
        \end{enumerate}
        \item \textbf{Case 3: Mixed Padding and Truncation:}
        \begin{enumerate}[label=\roman*.]
            \item Apply padding or truncation along each dimension ($L$, $d_{\text{in}}$, $d_{\text{out}}$) as needed to obtain $\mathbf{\Phi}'_i \in \mathbb{R}^{L' \times d'_{\text{in}} \times d'_{\text{out}}}$.
        \end{enumerate}
    \end{enumerate}
    \item \textbf{Weight Reconstruction:}
    \begin{enumerate}[label=\alph*.]
        \item Apply IDCT:
        \begin{align*}
            \mathbf{W}'_i = \mathcal{D}^{-1}(\mathbf{\Phi}'_i) \in \mathbb{R}^{L' \times d'_{\text{in}} \times d'_{\text{out}}}
        \end{align*}
        \item Add the reconstructed weight tensor $\mathbf{W}'_i$ to the set $\mathbf{\Omega}'$.
        \item Output $\mathbf{\Omega}'$ as the initialized target model weights.
        
    \end{enumerate}
\end{enumerate}

\subsection{Application to Multi-Layer Perceptrons (MLPs)}
\label{app:front_mlp}

The general framework described in Algorithms~\ref{app:mlp_a1} and~\ref{app:mlp_a2} can be applied to Multi-Layer Perceptrons (MLPs) by adapting the dimensionality of the spectral transformation to match the weight tensors of MLP layers. MLP layers typically have 2D weight matrices $\mathbf{W} \in \mathbb{R}^{D_{\text{in}} \times D_{\text{out}}}$, where $D_{\text{in}}$ and $D_{\text{out}}$ are the input and output dimensions of the layer, respectively. Biases are typically 1D vectors and are handled separately by 1D-DCT/1D-IDCT and not included in the following analysis. It is important to note that arbitrary transformations can be performed on the MLP in terms of width or depth. In the case of width changes as columns, the 2D-DCT/2D-IDCT needs to be applied.

\paragraph{Algorithm 1: MLP Learngene Acquisition}
\label{app:mlp_a1}
Applying Algorithm~\ref{app:mlp_a1} for Learngene Acquisition to MLPs follows the same two methods, but utilizes the 2D Discrete Cosine Transform ($\mathcal{D}_{2D}$) and its inverse ($\mathcal{D}^{-1}_{2D}$).

\textbf{Requirements:}
\begin{itemize}
    \item Pre-trained/Auxiliary MLP model weight matrices $\mathbf{\Omega}_{\text{pre/aux}} = \{ \mathbf{W}^{(l)} \}_{l=1}^L $, frequency ratio $r$. Here, each $\mathbf{W}^{(l)} \in \mathbb{R}^{D^{(l)}_{\text{in}} \times D^{(l)}_{\text{out}}}$.
    \item For Method 2: Training dataset $\{ (x^{(i)}, y^{(i)}) \}_{i=1}^m$, number of epochs $N_{ep}$, batch size $B$, learning rate $\alpha$, regularization weight $\lambda$. (Decay rates $\gamma_d$ if applicable).
\end{itemize}

\textbf{Output}: Learngenes (Compact 2D spectral representation).

\begin{enumerate} % Main Level (Method 1 and 2)
    \item \textbf{Method 1: Direct Truncation of pre-trained Weights' Spectra}:
    \begin{enumerate} % Level 2 (Steps in Method 1)
        \item Initialize an empty set for learngene $\mathcal{G}_1 = \emptyset$.
        \item For each weight matrix $\mathbf{W}_i \in \mathbf{\Omega}_{\text{pre}}$:
        \begin{enumerate} % Level 3 (Steps inside the loop)
            \item Apply 2D-DCT: $\mathbf{\Phi}_i = \mathcal{D}_{2D}(\mathbf{W}_i)$.
            \item Construct binary truncation mask: Adapt Eq. (~\ref{eq:truncation}) for 2D, using indices $(l,m)$ and dimensions $(D_{\text{in},i}, D_{\text{out},i})$ of $\mathbf{W}_i$. The mask $\mathbf{M}_r[l,m]$ is 1 if $l \leq \lfloor r D_{\text{in},i} \rfloor$ and $m \leq \lfloor r D_{\text{out},i} \rfloor$, and 0 otherwise.
            \item Apply truncation: $\hat{\mathbf{\Phi}}_i = \mathbf{M}_r \odot \mathbf{\Phi}_i$.
            \item Add the non-zero elements of $\hat{\mathbf{\Phi}}_i$ to $\mathcal{G}_1$.
        \end{enumerate}
        \item Output $\mathcal{G}_1$ as Learngenes.
    \end{enumerate}

    \item \textbf{Method 2: Distillation-Aware Training with a high-frequency attenuation regularization}:
    \begin{enumerate} % Level 2 (Steps in Method 2)
        \item Initialize Auxiliary MLP Model with weights $\mathbf{\Omega}_{\text{aux}}$ (as in Alg.~\ref{app:vit_a1}, Method 2 for ViT).
        \item Initialize training optimizer for $\mathbf{\Omega}_{\text{aux}}$.
        \item For $ep = 1$ to $N_{ep}$: Steps for batch processing, logit loss $\mathcal{L}_{\text{logit}}$, total loss $\mathcal{L}_{\text{total}}$, and weight updates are analogous to those in Alg.~\ref{app:vit_a1}, Method 2 for ViT.
        \begin{enumerate} % Level 3 (Steps inside the outer loop)
             \item For each batch $\{ (x_j, y_j) \}_{j=1}^B$:
             \begin{enumerate} % Level 4 (Steps inside the batch loop)
                \item Get logits $z_{\text{pre},j}, z_{\text{aux},j}$ and calculate $\mathcal{L}_{\text{logit}}$ (as in Alg.~\ref{app:vit_a1}, Method 2 for ViT).
                \item Calculate $\mathcal{L}_{\text{reg}}$:
                \begin{itemize}
                    \item Set $\mathcal{L}_{\text{reg}} = 0$.
                    \item For each weight matrix $\mathbf{W}_{\text{aux}, i} \in \mathbf{\Omega}_{\text{aux}}$:
                    \begin{itemize}
                        \item Apply 2D-DCT: $\mathbf{\Phi}_{\text{aux}, i} = \mathcal{D}_{2D}(\mathbf{W}_{\text{aux}, i})$.
                        \item Construct adaptive mask $\mathbf{M}_r[l,m]$ using adapted Eq. (~\ref{m}) for 2D dimensions.
                        \item Calculate term: Adapt Eq. (~\ref{reg}) for 2D dimensions:
                        \begin{align*}
                            Term_i = \frac{1}{\|\mathbf{M}_r \odot \mathbf{\Phi}_{\text{aux}, i}\|_0} \sum_{l,m} (\mathbf{M}_r[l,m] \cdot \mathbf{\Phi}_{\text{aux}, i}[l,m])^2
                        \end{align*}
                        \item $\mathcal{L}_{\text{reg}} = \mathcal{L}_{\text{reg}} + Term_i$.
                    \end{itemize}
                \end{itemize}
                \item Calculate Total Loss $\mathcal{L}_{\text{total}} = (1-\lambda)\mathcal{L}_{\text{logit}} + \lambda\mathcal{L}_{\text{reg}}$.
                \item Perform backward pass and update weights $\mathbf{\Omega}_{\text{aux}}$ (as in Alg.~\ref{app:vit_a1}, Method 2 for ViT).
             \end{enumerate}
        \end{enumerate}
        \item Initialize $\mathcal{G}_2 = \emptyset$.
        \item For each trained weight matrix $\mathbf{W}_{\text{aux}, i} \in \mathbf{\Omega}_{\text{aux}}$:
        \begin{enumerate} % Level 3 (Steps inside the loop)
            \item Apply 2D-DCT: $\mathbf{\Phi}_{\text{aux}, i} = \mathcal{D}_{2D}(\mathbf{W}_{\text{aux}, i})$.
            \item Add low-frequency components of $\mathbf{\Phi}_{\text{aux}, i}$ (as defined by the mask $\mathbf{M}_r$) to $\mathcal{G}_2$.
        \end{enumerate}
        \item Output $\mathcal{G}_2$ as learngenes.
    \end{enumerate}
\end{enumerate}

\paragraph{Algorithm 2: MLP Initialization of Variable-Sized Models}
\label{app:mlp_a2}
Applying Algorithm~\ref{app:mlp_a2} for Initialization to MLPs uses the 2D spectral learngenes to reconstruct weight matrices for a target MLP model with potentially different layer dimensions $D'_{\text{in}}, D'_{\text{out}}$.

\textbf{Input:} Acquired MLP Learngenes $\mathcal{G} = \{ \hat{\mathbf{\Phi}}_i \}$, target MLP layer dimensions $D'_{\text{in}}, D'_{\text{out}}$ for each corresponding layer.

\textbf{Output:} Initialized weight matrices for the variable-sized target MLP model $\mathbf{\Omega}' = \{ \mathbf{W}'^{(l)} \}_{l=1}^{L'}$.

\begin{enumerate}[label=\arabic*.]
    \item \textbf{Spectral Transformation:} For each set of learngenes $\hat{\mathbf{\Phi}}_i$ corresponding to a layer and its target dimensions $D'_{\text{in}}, D'_{\text{out}}$:
    \begin{enumerate}[label=\alph*.]
        \item Apply padding or truncation to $\hat{\mathbf{\Phi}}_i$ along its two dimensions to obtain $\mathbf{\Phi}'_i \in \mathbb{R}^{D'_{\text{in}} \times D'_{\text{out}}}$. The cases for padding/truncation are analogous to those described in Alg.~\ref{app:vit_a2} 2 (ViT), but applied to the two dimensions ($D_{\text{in}}, D_{\text{out}}$) of the spectral coefficients.
    \end{enumerate}
    \item \textbf{Weight Reconstruction:}
    \begin{enumerate}[label=\alph*.]
        \item Apply 2D-IDCT: $\mathbf{W}'_i = \mathcal{D}^{-1}_{2D}(\mathbf{\Phi}'_i) \in \mathbb{R}^{D'_{\text{in}} \times D'_{\text{out}}}$.
        \item Add the reconstructed weight matrix $\mathbf{W}'_i$ to the set $\mathbf{\Omega}'$.
        \item Output $\mathbf{\Omega}'$ as the initialized target MLP model weights.
    \end{enumerate}
\end{enumerate}

\subsection{Application to Convolutional Neural Networks (CNNs)}
\label{app:front_cnn}

Applying the FRONT framework to Convolutional Neural Networks (CNNs) follows the same principles, but the spectral transformation must be adapted to the dimensionality of CNN weight tensors. Convolutional layers typically have 4D weight tensors $\mathbf{W} \in \mathbb{R}^{C_{\text{out}} \times C_{\text{in}} \times K_h \times K_w}$, where $C_{\text{out}}$ and $C_{\text{in}}$ are the output and input channels, and $K_h, K_w$ are the kernel dimensions. Biases are usually 1D and handled separately. It is important to note that the four-dimensional tensor of a CNN can be reshaped into a two-dimensional tensor, and subsequently, multiple tensors can be combined into a three-dimensional tensor, and so forth. In order to demonstrate the flexibility of FRONT, the following presentation will outline the method for direct processing of the four-dimensional tensor.

\paragraph{Algorithm 1: CNN Learngene Acquisition}
\label{app:cnn_a1}
Applying Algorithm~\ref{app:cnn_a1} for Learngene Acquisition to CNNs utilizes the 4D Discrete Cosine Transform ($\mathcal{D}_{4D}$) and its inverse ($\mathcal{D}^{-1}_{4D}$).

\textbf{Requirements:}
\begin{itemize}
    \item Pre-trained/Auxiliary CNN model weight tensors $\mathbf{\Omega}_{\text{pre/aux}} = \{ \mathbf{W}^{(l)} \}_{l=1}^L $, frequency ratio $r$. Here, each $\mathbf{W}^{(l)} \in \mathbb{R}^{C^{(l)}_{\text{out}} \times C^{(l)}_{\text{in}} \times K^{(l)}_h \times K^{(l)}_w}$.
    \item For Method 2: Training dataset $\{ (x^{(i)}, y^{(i)}) \}_{i=1}^m$, number of epochs $N_{ep}$, batch size $B$, learning rate $\alpha$, regularization weight $\lambda$. (Decay rates $\gamma_d$ if applicable).
\end{itemize}

\textbf{Output}: Learngenes (Compact 4D spectral representation).

\begin{enumerate} % Main Level (Method 1 and 2)
    \item \textbf{Method 1: Direct Truncation of pre-trained Weights' Spectra}:
    \begin{enumerate} % Level 2 (Steps in Method 1)
        \item Initialize an empty set for learngene $\mathcal{G}_1 = \emptyset$.
        \item For each weight tensor $\mathbf{W}_i \in \mathbf{\Omega}_{\text{pre}}$:
        \begin{enumerate} % Level 3 (Steps inside the loop)
            \item Apply 4D-DCT: $\mathbf{\Phi}_i = \mathcal{D}_{4D}(\mathbf{W}_i)$.
            \item Construct binary truncation mask: Adapt Eq. (~\ref{eq:truncation}) for 4D, using indices $(c_{\text{out}}, c_{\text{in}}, k_h, k_w)$ and dimensions $(C_{\text{out},i}, C_{\text{in},i}, K_{h,i}, K_{w,i})$ of $\mathbf{W}_i$. The mask $\mathbf{M}_r[c_{\text{out}}, c_{\text{in}}, k_h, k_w]$ is 1 if $c_{\text{out}} \leq \lfloor r C_{\text{out},i} \rfloor$, $c_{\text{in}} \leq \lfloor r C_{\text{in},i} \rfloor$, $k_h \leq \lfloor r K_{h,i} \rfloor$, and $k_w \leq \lfloor r K_{w,i} \rfloor$, and 0 otherwise.
            \item Apply truncation: $\hat{\mathbf{\Phi}}_i = \mathbf{M}_r \odot \mathbf{\Phi}_i$.
            \item Add the non-zero elements of $\hat{\mathbf{\Phi}}_i$ to $\mathcal{G}_1$.
        \end{enumerate}
        \item Output $\mathcal{G}_1$ as Learngenes.
    \end{enumerate}

    \item \textbf{Method 2: Distillation-Aware Training with a high-frequency attenuation regularization}:
    \begin{enumerate} % Level 2 (Steps in Method 2)
        \item Initialize Auxiliary CNN Model with weights $\mathbf{\Omega}_{\text{aux}}$ (as in Alg.~\ref{app:vit_a1}, Method 2 for ViT).
        \item Initialize training optimizer for $\mathbf{\Omega}_{\text{aux}}$.
        \item For $ep = 1$ to $N_{ep}$: Steps for batch processing, logit loss $\mathcal{L}_{\text{logit}}$, total loss $\mathcal{L}_{\text{total}}$, and weight updates are analogous to those in Alg.~\ref{app:vit_a1}, Method 2 for ViT.
        \begin{enumerate} % Level 3 (Steps inside the outer loop)
             \item For each batch $\{ (x_j, y_j) \}_{j=1}^B$:
             \begin{enumerate} % Level 4 (Steps inside the batch loop)
                \item Get logits $z_{\text{pre},j}, z_{\text{aux},j}$ and calculate $\mathcal{L}_{\text{logit}}$ (as in Alg.~\ref{app:vit_a1}, Method 2 for ViT).
                \item Calculate $\mathcal{L}_{\text{reg}}$:
                \begin{itemize}
                    \item Set $\mathcal{L}_{\text{reg}} = 0$.
                    \item For each weight tensor $\mathbf{W}_{\text{aux}, i} \in \mathbf{\Omega}_{\text{aux}}$:
                    \begin{itemize}
                        \item Apply 4D-DCT: $\mathbf{\Phi}_{\text{aux}, i} = \mathcal{D}_{4D}(\mathbf{W}_{\text{aux}, i})$.
                        \item Construct adaptive mask $\mathbf{M}_r[c_{\text{out}}, c_{\text{in}}, k_h, k_w]$ using adapted Eq. (~\ref{m}) for 4D dimensions.
                        \item Calculate term: Adapt Eq. (~\ref{reg}) for 4D dimensions:
                        \begin{align*}
                            Term_i = \frac{1}{\|\mathbf{M}_r \odot \mathbf{\Phi}_{\text{aux}, i}\|_0} \sum_{c_{\text{out}}, c_{\text{in}}, k_h, k_w} (\mathbf{M}_r[c_{\text{out}}, c_{\text{in}}, k_h, k_w] \cdot \mathbf{\Phi}_{\text{aux}, i}[c_{\text{out}}, c_{\text{in}}, k_h, k_w])^2
                        \end{align*}
                        \item $\mathcal{L}_{\text{reg}} = \mathcal{L}_{\text{reg}} + Term_i$.
                    \end{itemize}
                \end{itemize}
                \item Calculate Total Loss $\mathcal{L}_{\text{total}} = (1-\lambda)\mathcal{L}_{\text{logit}} + \lambda\mathcal{L}_{\text{reg}}$.
                \item Perform backward pass and update weights $\mathbf{\Omega}_{\text{aux}}$ (as in Alg.~\ref{app:vit_a1}, Method 2 for ViT).
             \end{enumerate}
        \end{enumerate}
        \item Initialize $\mathcal{G}_2 = \emptyset$.
        \item For each trained weight tensor $\mathbf{W}_{\text{aux}, i} \in \mathbf{\Omega}_{\text{aux}}$:
        \begin{enumerate} % Level 3 (Steps inside the loop)
            \item Apply 4D-DCT: $\mathbf{\Phi}_{\text{aux}, i} = \mathcal{D}_{4D}(\mathbf{W}_{\text{aux}, i})$.
            \item Add low-frequency components of $\mathbf{\Phi}_{\text{aux}, i}$ (as defined by the mask $\mathbf{M}_r$) to $\mathcal{G}_2$.
        \end{enumerate}
        \item Output $\mathcal{G}_2$ as learngenes.
    \end{enumerate}
\end{enumerate}

\paragraph{Algorithm 2: CNN Initialization of Variable-Sized Models}
\label{app:cnn_a2}
Applying Algorithm~\ref{app:cnn_a2} for Initialization to CNNs uses the 4D spectral learngenes to reconstruct weight tensors for a target CNN model with potentially different layer dimensions $C'_{\text{out}}, C'_{\text{in}}, K'_h, K'_w$.

\textbf{Input:} Acquired CNN Learngenes $\mathcal{G} = \{ \hat{\mathbf{\Phi}}_i \}$, target CNN layer dimensions $C'_{\text{out}}, C'_{\text{in}}, K'_h, K'_w$ for each corresponding layer.

\textbf{Output:} Initialized weight tensors for the variable-sized target CNN model $\mathbf{\Omega}' = \{ \mathbf{W}'^{(l)} \}_{l=1}^{L'}$.

\begin{enumerate}[label=\arabic*.]
    \item \textbf{Spectral Transformation:} For each set of learngenes $\hat{\mathbf{\Phi}}_i$ corresponding to a layer and its target dimensions $C'_{\text{out}}, C'_{\text{in}}, K'_h, K'_w$:
    \begin{enumerate}[label=\alph*.]
        \item Apply padding or truncation to $\hat{\mathbf{\Phi}}_i$ along its four dimensions to obtain $\mathbf{\Phi}'_i \in \mathbb{R}^{C'_{\text{out}} \times C'_{\text{in}} \times K'_h \times K'_w}$. The cases for padding/truncation are analogous to those described in Alg.~\ref{app:vit_a2} (ViT), but applied to the four dimensions ($C_{\text{out}}, C_{\text{in}}, K_h, K_w$) of the spectral coefficients.
    \end{enumerate}
    \item \textbf{Weight Reconstruction:}
    \begin{enumerate}[label=\alph*.]
        \item Apply 4D-IDCT: $\mathbf{W}'_i = \mathcal{D}^{-1}_{4D}(\mathbf{\Phi}'_i) \in \mathbb{R}^{C'_{\text{out}} \times C'_{\text{in}} \times K'_h \times K'_w}$.
        \item Add the reconstructed weight tensor $\mathbf{W}'_i$ to the set $\mathbf{\Omega}'$.
        \item Output $\mathbf{\Omega}'$ as the initialized target CNN model weights.
    \end{enumerate}
\end{enumerate}

\section{Training Details}
\label{app:all_training_details}

\subsection{Supplemented Experimental Setup}
We systematically evaluate the effectiveness of our method across different architectures and tasks, organizing our analysis by modality. For vision models, we first assess scalability by varying the depth and width of DeiT in Section~\ref{sec:init_ability}, including a thorough 300-epoch analysis of direct initialization quality. We then examine cross-dataset and cross-task generalization. For language models, we conduct analogous scaling experiments and evaluate downstream performance on the GLUE benchmark for models like BERT in Section~\ref{sec:llm}. Finally, Section~\ref{sec:ablation} presents ablation and analyses to understand the underlying mechanisms that make our method effective.

\textbf{Datasets.} For vision tasks, all source models, whether used for direct extraction or refinement, are pre-trained on ImageNet-1K. This dataset is also used for our vision scaling experiments. We assess cross-dataset transferability on a diverse suite of seven downstream classification datasets. Furthermore, we evaluate generalization to other vision domains, using six datasets for object detection and four for image segmentation. For language tasks, we conduct training on standard large-scale corpora: the English Wikipedia corpus is used for BERT~\citep{devlin-etal-2019-bert} and RoBERTa~\citep{liu2019roberta}, while the concatenation of English Wikipedia and the Toronto Book Corpus is used for GPT-2~\citep{liu2019roberta}. The downstream performance of our initialized language models is subsequently evaluated on the GLUE benchmark~\citep{wang2018glue}.

\textbf{Architectural Details.} Our evaluation spans a wide range of architectures in both vision and language domains. For vision models, we utilize publicly available DeiT-Ti/S/B models pre-trained on ImageNet-1K as the source for direct extraction. For our refinement, we train three compact 8-layer auxiliary models (with 3, 6, or 12 heads) from scratch for 150 epochs on ImageNet-1K. We also explore a more efficient refinement variant by briefly fine-tuning the pre-trained DeiT models. To assess scalability, we initialize DeiT variants by varying their depth (4–12 layers) and width (6–24 heads; 384–1536 dimensions). For generalization experiments, we initialize ResNet-50/152 from a ResNet-101 source. For language models, we demonstrate cross-scale transfer by initializing a 6-layer, 384-dimension model (e.g., BERT-S) using learngenes extracted from its corresponding 12-layer, 768-dimension Base counterpart (BERT-B). This procedure is applied across three foundational architectures: BERT, RoBERTa, and GPT-2. For more experimental details, including visual and language models and datasets, please refer to the below appendix.

\textbf{Baseline.} To provide a comprehensive evaluation for vision tasks, we benchmark our framework's flexible learngene sourcing strategies against two main categories of initialization methods:\textbf{ (1) Direct Initialization.}  Methods here rely on prior knowledge or direct parameter transfer applied to existing pre-trained models on ImageNet-1K without additional training. This includes He-Init~\citep{Chen_Xie_He_2021}, Mimetic-Init~\citep{trockman2023mimetic}, Wt Select~\citep{xu2023initializing}, Heur-LG~\citep{wang2022learngene}, Cluster-LG~\citep{wang2024clusterlearngene}, LiGO~\citep{wang2023learning}, and \textbf{FRONT{}}. \textbf{(2) Methods with Extra Training.} 
This category encompasses methods that require a dedicated optimization phase to generate transferable knowledge. Methods like GHN-3~\citep{knyazev2023can}, Share-Init~\citep{lan2019albert}, Auto-LG~\citep{wang2023learngene}, TELG~\citep{xia2024transformer}, and WAVE~\citep{feng2024wave}, execute a dedicated, computationally intensive process to generate or discover transferable parameters, a necessity as their frameworks typically do not support the direct fine-tuning of the pre-trained models. To ensure the most direct and rigorous comparison against these high-effort methods, our main experiments also adopt the from-scratch paradigm with \textbf{FRONT{\scalebox{1}{\text{+}}}}. We use the same pretraining data and experimental settings for all the methods for a fair comparison. Notably, our refinement framework also supports a highly efficient fine-tuning strategy (\textbf{FRONT{\scalebox{0.8}{\text{++}}}}). Our preliminary results indicate that this approach can surpass the from-scratch version in performance with significantly fewer training cost, detailed in Section~\ref{sec:refine}. For language tasks, we established two baselines: training the Small model from scratch and employing knowledge distillation~\citep{hinton2015distilling}, where the respective Base model served as the teacher.

\textbf{Evaluation Metrics.} The main metric is Top-1 accuracy, measuring initialization effectiveness. Supplementary metrics are convergence efficiency (epochs) and parameter transfer efficiency.

\subsection{Vision Tasks}
\label{app:training_details}
\subsubsection{Hyper-parameters}
\label{app:hyper}
Table~\ref{tab:hyper_main} and Table~\ref{tab:hyper_down} detail the hyperparameters used for FRONT+, which concentrates knowledge into low frequencies, and for training models initialized with this low-frequency knowledge on various datasets. These hyperparameters include batch sizes, warm-up epochs, training epochs, and other relevant settings.

\renewcommand{\arraystretch}{0.9}
\begin{table}
    \centering
    \caption{Hyper-parameters for FRONT's Retraining on ImageNet-1K.}
    % \vspace{-0.1in}
    \setlength{\tabcolsep}{18 mm}
        \begin{tabular}{@{}lr@{}}
        \toprule[1.3pt]
        \textbf{Training Settings} & \textbf{Configuration} \\
        \midrule[1.1pt]
        optimizer & AdamW\\
        base learning rate & Ti: 5e-4 $\mid$ S: 2.5e-4 $\mid$ B: 1.25e-4\\
        warmup learning rate & 1e-6\\
        weight decay & 0.05\\
        optimizer momentum & 0.9\\
        batch size & Ti: 512 $\mid$ S: 256 $\mid$ B: 128\\
        training epochs & 150\\
        learning rate schedule & cosine decay\\
        warmup epochs & 5\\
        color jitter & 0.4 \\
        auto augment & rand-m9-mstd0.5-inc1\\
        mixup & 0.8\\
        cutmix & 1.0\\
        label smoothing & 0.1\\
        drop path & 0.1\\
        \bottomrule[1.3pt]
        \end{tabular}
    \label{tab:hyper_main}
    \vspace{-0.05in}
\end{table}

\begin{table*}
    \centering
    \caption{Hyperparameters for neural networks trained on downstream datasets.}
    \vspace{-0.1in}
    \resizebox{\textwidth}{!}{
        \begin{tabular}{@{}lccccccccccccc@{}}
        \toprule[1.3pt]
        \textbf{Dataset} & \textbf{\shortstack{Batch\\ Size}} & \textbf{\shortstack{Epochs\\DeiT/ResNet}} & \textbf{\shortstack{Learning\\ Rate}} & \textbf{\shortstack{Drop\\ Last}} & \textbf{\shortstack{Warmup\\ Epochs}} & \textbf{\shortstack{Droppath\\ Rate}} & \textbf{\shortstack{Color\\ Jitter}} & \textbf{\shortstack{Auto\\ Augment}} & \textbf{\shortstack{Random\\ Erase}} & \textbf{Mixup} & \textbf{Cutmix} & \textbf{Scheduler} & \textbf{Optimizer}\\
        \midrule[1.1pt]
        \textbf{Oxford Flowers} & 512 & 300/100 & 3e-4 & False & 0 & 0 & 0.4 & \multirow{7}{*}{\rotatebox{90}{\fontsize{7.5}{12}\selectfont rand-m9-mstd0.5-inc1}} & 0.25 & 0 & 0 & cosine & AdamW \\
        \textbf{CUB-200-2011} & 512 & 300/100 & 3e-4 & False & 0 & 0.1 & 0 & & 0.25 & 0 & 0 & cosine & AdamW \\
        \textbf{Stanford Cars} & 512 & 300/100 & 3e-4 & False & 0 & 0.1 & 0 & & 0.25 & 0 & 0 & cosine & AdamW \\
        \textbf{CIFAR10} & 512 & 300/100 & 5e-4 & True & 0 & 0.1 & 0.4 & & 0.25 & 0 & 0 & cosine & AdamW\\
        \textbf{CIFAR100} & 512 & 300/100 & 5e-4 & True & 0 & 0.1 & 0.4 & & 0.25 & 0 & 0 & cosine & AdamW\\
        \textbf{Food101} & 512 & 300/100 & 5e-4 & True & 0 & 0.1 & 0.4 & & 0.25 & 0 & 0 & cosine & AdamW\\
        \textbf{iNat-2019} & 512 & 100/50 & 5e-4 & True & 0 & 0.1 & 0.4 & & 0.25 & 0 & 0 & cosine & AdamW\\
        \bottomrule[1.3pt]
        \end{tabular}
        }
    \label{tab:hyper_down}
\end{table*}

\subsubsection{Details of FRONT's Direct Setting}
One-time extraction with primary hyperparameter being extracted parameter quantity. 

\textbf{Depth Expansion}: For tiny models, search space is \{1.7, 2.2, 2.7 \}M parameters, with 2.2M selected as optimal. For small models, search space is \{8.1, 11.4\}M parameters, with 8.1M selected as optimal. For base models, search space is \{32.4, 46.3\}M parameters, with 32.4M selected as optimal. 

\textbf{Width Expansion}:We uniformly use a parameter size of 8.9M for transmission.

\textbf{Downstream Tasks}: In the settings of Table~\ref{tab:downstream}, for the tiny model, we initialize the 6-layer tiny model using a publicly available 12-layer pre-trained tiny model, transferring 2.2M parameters. Similarly, for the small model, the 6-layer small model is initialized with a 12-layer publicly available pre-trained small model, transferring 8.1M parameters. For ResNets, we initialize the ResNet50/152 models with weights from a pre-trained ResNet101 model.

\subsubsection{Details of FRONT+'s Refinement}
Our Refinement strategy is implemented via two distinct approaches:

\textbf{Training from Scratch (FRONT{\scalebox{1}{\text{+}}}).} For our main experiments requiring a from-scratch model, we train compact auxiliary networks for 150 epochs on ImageNet-1K. To enhance their representational quality, we employ knowledge distillation, using either a LeViT-384 or a RegNetY-16GF model as the teacher. The learngenes are then directly extracted from these fully trained auxiliary models. A detailed ablation study on the choice of teacher models can be found in Appendix~\ref{app:larger_anc}.

Hyperparameters include regularization coefficient in Eq.~(~\ref{eq:total_loss}), transferred parameter quantity and learning rate. We maintain consistent parameter quantities for fair comparison: 0.8M for tiny models, 3.2M for small models, 13.0M for base models. Regularization coefficient searched over \{0.5, 0.2, 0.1, 0.05, 0.002, 0.001, 0.0001\}, with 0.002 selected based on convergence and regularization effectiveness. We select the learning rate for search in \{1e-3, 5e-4\}. During the training of the auxiliary model, we use $l = 2$ and set 
$m, n = 0.25 \times d_\text{in}, d_\text{out}$ 
in the mask matrix, where $d_\text{in}$ and $d_\text{out}$ denote the input and output dimension, respectively.

\textbf{Fine-tuning a Pre-trained Model (FRONT{\scalebox{0.8}{\text{++}}}).} To provide a more computationally efficient alternative, this approach initializes with a publicly available pre-trained model and then briefly fine-tunes it with our frequency regularization term. As detailed in our ablation analysis (Section~\ref{sec:refine}), we conduct a grid search to determine the optimal configuration. Specifically, a DeiT-Tiny-L8 source model is fine-tuned on ImageNet-1K across a range of epochs \{10, 20, 50, 100\} and regularization strengths ($\lambda$) \{1e-5, 5e-4, 1e-3, 2e-3, 5e-3\}. The quality of the learngenes (0.8M parameters) produced by each configuration is then assessed by their effectiveness in initializing a deeper DeiT-Tiny-L10 target model.

\subsubsection{Details of Downstream Datasets}
\label{app:dataset}
Additional datasets include Oxford Flowers~\citep{nilsback2008automated}, CUB-200-2011~\citep{wah2011caltech}, Stanford Cars~\citep{gebru2017fine}, CIFAR-10, CIFAR-100~\citep{krizhevsky09}, Food-101~\citep{bossard2014food}, and iNaturalist-2019~\citep{tan2019herbarium}.
Table~\ref{tab:datasets} presents the details of seven downstream datasets, which are sorted by the size of datasets.

\begin{table}
    \centering
    \caption{Characteristics of downstream datasets.}
    % \vspace{-0.1in}
    \resizebox{0.68\textwidth}{!}{
        \begin{tabular}{@{}lcccc@{}}
        \toprule[1.3pt]
        \textbf{Dataset} & \textbf{Classes} & \textbf{Total} & \textbf{Training} & \textbf{Testing} \\
        \cmidrule[1.1pt]{1-5}
        \textbf{Oxford Flowers}~\citep{nilsback2008automated} & 102 & 8,189 & 2,040  & 6,149\\
        \textbf{CUB-200-2011}~\citep{wah2011caltech} & 200 & 11,788 & 5,994 & 5,794 \\
        \textbf{Stanford Cars}~\citep{gebru2017fine} & 196 & 16,185 & 8,144 & 8,041\\
        \textbf{CIFAR10}~\citep{krizhevsky09} & 10 & 60,000 & 50,000 & 10,000 \\
        \textbf{CIFAR100}~\citep{krizhevsky09} & 100 & 60,000 & 50,000 & 10,000 \\
        \textbf{Food101}~\citep{bossard2014food} & 101 & 101,000 & 75,750 & 25,250\\
        \textbf{iNat-2019}~\citep{tan2019herbarium} & 1010 & 268,243 & 265,213 & 3,030\\
        \bottomrule[1.3pt]
        \end{tabular}
        }
    \label{tab:datasets}
\end{table}

% Following the setting~\citep{nie2024cross}, it is clear that FRONT effectively transfers knowledge acquired in the source domain to the four target domains compared to from scratch. On DeepGlobe~\citep{demir2018deepglobe}, FRONT achieves comparable accuracy when segmenting various categories from satellite imagery. Furthermore, FRONT produces precise segmentations for medical screening data from ISIC~\citep{codella2019skin} and Chest X-Ray~\citep{tschandl2018ham10000}. Regarding FSS-1000~\citep{li2020fss}, FRONT delivers strong performance when predicting a wide range of target categories, such as logos, food items, and other objects.

% Following the setting~\citep{fu2024cross}, COCO~\citep{lin2014microsoft} is a widely adopted dataset for object detection, offering a broad range of categories including humans, animals, vehicles, and common items; it is utilized as the source domain dataset. The remaining six datasets—ArTaxOr~\citep{drange2019arthropod}, Clipart1k~\citep{inoue2018cross}, DIOR~\citep{li2020object}, DeepFish ~\citep{saleh2020realistic}, NEUDET~\citep{song2013noise}, and UODD~\citep{jiang2021underwater}—are employed as target domain datasets.

\subsection{Language Tasks}
To evaluate the effectiveness and adaptability of our proposed method, we conducted comparative experiments on language models. We applied FRONT to initialize a 6-layer, 384-dimensional  ``Small'' model (e.g., BERT-S) from its corresponding pre-trained 12-layer, 768-dimensional ``Base'' (e.g., BERT-B) model. This procedure was performed for three distinct architectures: BERT, RoBERTa, and GPT-2. For comparison, we established two baselines: training the Small model from scratch and employing knowledge distillation~\citep{hinton2015distilling}, where the respective Base model served as the teacher. We use the English Wikipedia corpus for training BERT~\citep{devlin-etal-2019-bert} and RoBERTa~\citep{liu2019roberta}. We use the concatenation of English Wikipedia and Toronto Book Corpus for training GPT2~\citep{radford2019language}. We remove the next sentence prediction task (~\citep{liu2019roberta}) and use a fixed sequence length of 128 for pretraining both BERT and RoBERTa. For BERT, we use a batch size of 256 and a learning rate of 2e-4, while we use a batch size of 1024 and a learning rate of 8e-4 for training RoBERTa models. For GPT2, we use a batch size of 512, a fixed sequence length of 1024 and a learning rate of 1e-3.

Subsequently, we evaluated the downstream performance of the initialized BERT-S model on the GLUE benchmark. For fine-tuning, we used the Adam optimizer with a learning rate of 2e-5, a batch size of 32, and searched for the optimal number of epochs from \{3, 6, 10\}. The FRONT-initialized models were directly fine-tuned on each task and knowledge distillation requires the corresponding base models to be used as teacher models throughout the entire training process.

\section{Additional Results}
\label{app:additional_results}

% ----------------------------------------------------------------------
% --- 请将此部分放入您论文的附录 (Appendix) ---
% ----------------------------------------------------------------------

\subsection{Ablation Study on Transform Basis Choice}
\label{sec:appendix_dct_ablation}

% 根据您的要求，所有为 rebuttal 期间添加的新内容都标为红色

Regarding our choice of the DCT over other transforms, we provide this section with a concise theoretical justification followed by a comprehensive empirical ablation study.

\begin{table}[ht]
\centering
\setlength{\tabcolsep}{6 mm}
\caption{Comparison of different basis transforms for knowledge transfer on ImageNet-1K. Our DCT-based method (FRONT) achieves the \textbf{highest} top-1 accuracy (\%) in both depth and width transfer tasks. This validates our design choice.}
\label{tab:ablation_transforms}
\resizebox{\textwidth}{!}{
\begin{tabular}{l|cccccc}
\toprule
Methods & Scratch & \cellcolor{blue!12}\textbf{FRONT} & DFT & DWT & PCA & SVD \\
\midrule
DeiT-Ti\_L12 $\rightarrow$ DeiT-Ti\_L6 & 40.6 & \cellcolor{blue!12}\textbf{63.3} & 60.8 & 62.0 & 57.9 & 53.0 \\
DeiT-S\_L6 $\rightarrow$ DeiT-Ti\_L6 & 40.6 & \cellcolor{blue!12}\textbf{56.9} & 47.9 & 55.4 & 50.2 & 43.6 \\
\bottomrule
\end{tabular}
}
\end{table}

Our primary motivation for using DCT is that ``low-frequency" in the DCT domain in the signal processing has a universal, and semantically meaningful interpretation: \textbf{smoothness}. The cosine basis functions are fixed \textit{a priori}. This allows us to posit a testable hypothesis: that the generalizable, task-agnostic knowledge in weights is encoded in these universal patterns. Furthermore, DCT is a \textbf{real-to-real} transform, which is perfectly compatible with real-valued neural network weights.

Our theoretical analysis of the alternatives is as follows:
\begin{itemize}
    \item \textbf{DFT (Fourier):} DFT's primary drawback is that it is a \textbf{complex-valued} transform. It maps real-valued weights to complex coefficients (magnitude and phase), forcing a non-trivial choice: either discard the phase (information loss) or double the parameter count. DCT avoids this ambiguity.
    \item \textbf{DWT (Wavelets):} DWT's strength is its excellent \textbf{spatial localization} (analyzing \textit{where} a frequency occurs). For our task, this localization is unnecessary. Our hypothesis is about the \textbf{global spectral properties} of the entire weight tensor, not the local position of specific patterns.
    \item \textbf{PCA/SVD (Intra-Model Compression):} This is the key distinction. PCA and SVD are powerful tools for \textit{intra-model compression}, not \textit{cross-model initialization}. It is ill-defined how to use the components (PCA) or singular vectors (SVD) from a source model's weight matrix (e.g., $1024 \times 512$) to initialize a target model's matrix (e.g., $512 \times 256$) without destroying the structural knowledge we aim to transfer.
\end{itemize}
Therefore, DCT is uniquely suited as it is a \textbf{structure-preserving transform}, not a dimensionality reduction or decomposition technique.

\begin{wraptable}{r}{0.32\textwidth}
    \centering
    \setlength{\tabcolsep} {2.8 mm}
    \vspace{-0.17in}
    \caption{
    % Knowledge transfer across diverse frequency components. 
    Low-frequency knowledge surpasses others.}
    \vspace{-0.1in}
    \resizebox{\linewidth}{!}{
    \begin{tabular}{@{}l|ccccc@{}}
      \toprule
      \multirow{2}{*}{\textbf{\shortstack{Select\\Freq.}}} & \multicolumn{5}{c}{\textbf{DeiT-Ti}} \\
      \cmidrule{2-6}
      & \textbf{L4} & \textbf{L6} & \textbf{L8} & \textbf{L10} & \textbf{L12} \\
      \midrule
      He-Init & 34.7 & 40.6 & 43.7 & 46.8 & 48.3\\
      \midrule
      High & 32.6 & 36.0 & 35.8 & 38.9 & 39.0 \\
      Mid & 34.0 & 34.6 & 37.0 & 38.1 & 38.2 \\
      \cellcolor{blue!12}{Low} & \cellcolor{blue!12}{\textbf{48.6}} & \cellcolor{blue!12}{\textbf{53.7}} & \cellcolor{blue!12}{\textbf{56.5}} & \cellcolor{blue!12}{\textbf{58.6}} & \cellcolor{blue!12}{\textbf{59.6}} \\
      \bottomrule
    \end{tabular}
    }
    \label{tab:freq_comparison}
    \vspace{-0.16in}
\end{wraptable}

While the theoretical arguments above guided our design, we agree that empirical validation is crucial. We conducted a comprehensive ablation study comparing our DCT-based method (FRONT) against baselines using DFT, DWT, PCA, and SVD. We performed experiments for both depth and width transformations.
\textbf{Depth Transfer:} Using a 12-layer DeiT-Ti to initialize a 6-layer DeiT-Ti.
\textbf{Width Transfer:} Using a 6-layer DeiT-S to initialize a 6-layer DeiT-Ti.
All other training parameters are kept consistent with the settings in the main paper's Table~\ref{tab:lenth}. The results are shown in Table \ref{tab:ablation_transforms}.

% \begin{wraptable}{r}{0.38\textwidth}
% \centering
% \setlength{\tabcolsep}{0.8 mm}
% \vspace{-0.17in}
% \caption{
% % Performance comparison between random initialization and FRONT on various dataset.
% Performance comparison on ResNet50 and ResNet152.
% }
% % \vspace{-0.12in}
% \resizebox{\linewidth}{!}{ % 这里让表格宽自动等于wraptable的宽
% \begin{tabular}{lccl|ccl}
% \toprule
%     & \multicolumn{3}{c|}{ResNet50} & \multicolumn{3}{c}{ResNet152} \\
%     \cmidrule{2-7}
%     & Rand. & \multicolumn{2}{c|}{FRONT{}} & Rand. & \multicolumn{2}{c}{FRONT{}} \\
%     \midrule
%     Image. & 74.1 & 76.1 & \textcolor{mygreen}{$\uparrow$ 2.0} & 76.8 & 77.2 & \textcolor{mygreen}{$\uparrow$ 0.4} \\
%     Flow. & 51.3 & 91.3 & \textcolor{mygreen}{$\uparrow$ 40.0} & 28.0 & 87.7 & \textcolor{mygreen}{$\uparrow$ 59.7} \\
%     CUB & 40.7 & 68.9 & \textcolor{mygreen}{$\uparrow$ 28.2} & 22.8 & 68.2 & \textcolor{mygreen}{$\uparrow$ 45.4} \\
%     Cars & 48.6 & 88.2 & \textcolor{mygreen}{$\uparrow$ 39.6} & 29.3 & 87.4 & \textcolor{mygreen}{$\uparrow$ 58.1} \\
%     C10 & 94.8 & 95.6 & \textcolor{mygreen}{$\uparrow$ 0.8} & 95.6 & 95.7 & \textcolor{mygreen}{$\uparrow$ 0.1} \\
%     C100 & 76.8 & 78.3 & \textcolor{mygreen}{$\uparrow$ 1.5} & 77.7 & 79.5 & \textcolor{mygreen}{$\uparrow$ 1.8} \\
%     Food & 82.1 & 85.8 & \textcolor{mygreen}{$\uparrow$ 3.7} & 83.2 & 85.9 & \textcolor{mygreen}{$\uparrow$ 2.7} \\
%     iNat & 61.8 & 69.0 & \textcolor{mygreen}{$\uparrow$ 7.2} & 62.7 & 69.1 & \textcolor{mygreen}{$\uparrow$ 6.4} \\
% \bottomrule
% \end{tabular}
% }
% \label{tab:performance_comparison}
% \vspace{-0.12in}
% \end{wraptable}

The empirical results in Table \ref{tab:ablation_transforms} strongly validate our design choices. Our DCT-based method, FRONT, clearly outperforms all other baselines, leveraging its energy compaction property. As hypothesized, the frequency-domain methods (DCT, DWT, DFT) are significantly superior to intra-model compression tools (PCA, SVD) for knowledge transfer, confirming our theoretical concerns about the inapplicability of the latter. Notably, while depth transfer (63.3\%) is highly effective, the performance drop in width transfer (56.9\%)—despite comparable source model performance—highlights the significant challenge posed by transformations that disrupt dimensional structure. Finally, the strong showing of DWT (achieving the second-best results) suggests it is a promising avenue for future research in this area.

\subsection{Generalization of Low-frequency Components}

To systematically evaluate the contribution of different frequency components to knowledge transfer, we initialize DeiT-Ti(192 hidden dimensions) using frequency-specific components extracted from each layer of DeiT-S(384 hidden dimensions). Given \( x \in \{4, 6, 8, 10, 12\} \), representing the number of layers considered, and for each layer \( i \) where \( 0 \leq i < x \), we partition the weight matrix \( W\text{-}S_i \) into low-frequency (\( W\text{-}S_i[:192, :192] \)), mid-frequency (\( W\text{-}S_i[96:288, 96:288] \)), and high-frequency (\( W\text{-}S_i[192:, 192:] \)) components. 
As shown in Table~\ref{tab:freq_comparison}, models initialized with low-frequency components consistently achieve the highest performance, highlighting the superior transferability.

\subsection{Integration of Knowledge from Larger pre-trained models}
\label{app:larger_anc}

\begin{table}[ht]
    \centering
    \caption{Additional results on different teacher models.}
    \resizebox{0.78\textwidth}{!}{
        \begin{tabular}{@{}lclll|lll@{}}
        \toprule[1.3pt]
        & & \multicolumn{3}{c}{DeiT-Ti} & \multicolumn{3}{c}{DeiT-S}\\
        \cmidrule{3-8}
        & Ancestry & \multicolumn{1}{c}{$L_{\text{6}}$} & \multicolumn{1}{c}{$L_{\text{8}}$} & \multicolumn{1}{c}{$L_{\text{12}}$} & \multicolumn{1}{c}{$L_{\text{4}}$} & \multicolumn{1}{c}{$L_{\text{8}}$} & \multicolumn{1}{c}{$L_{\text{12}}$} \\
        \toprule[1.1pt]
        TLEG~\citep{xia2024transformer} & LeVit-384 (39.1M) & 60.5 & 62.9 & 65.4 & 70.5 & 72.1 & 73.8 \\
        WAVE~\citep{feng2024wave} & LeVit-384 (39.1M) & 63.2 & 65.4 & 67.3 & 72.7 & 74.1 & 75.3 \\
        \midrule
        \cellcolor{blue!12}{FRONT{\scalebox{1}{\text{+}}}} & \cellcolor{blue!12}{LeVit-384 (39.1M)} & \cellcolor{blue!12}{\textbf{63.4}} & \cellcolor{blue!12}{\textbf{65.6}} & \cellcolor{blue!12}{\textbf{67.5}} & \cellcolor{blue!12}{\textbf{72.9}} & \cellcolor{blue!12}{\textbf{74.2}} & \cellcolor{blue!12}{\textbf{75.4}} \\
        \cellcolor{red!10}{FRONT{\scalebox{1}{\text{+}}}} & \cellcolor{red!10}{RegNet-16GF (83.6M)} & \cellcolor{red!10}{63.3$^*$} & \cellcolor{red!10}{65.2$^*$} & \cellcolor{red!10}{\textbf{67.5}} & \cellcolor{red!10}{72.4$^*$} & \cellcolor{red!10}{\textbf{74.2}} & \cellcolor{red!10}{75.3$^*$} \\
        \bottomrule[1.3pt]
        \end{tabular}
        }
    \label{tab:anc}
\end{table}

Low-frequency knowledge can be integrated with relevant information extracted from pre-trained models by filtering out size-specific components that violate transformation constraints. This is achieved by zero-padding or truncation in DCT / IDCT process. Such a mechanism ensures the efficient transfer and sharing of size-agnostic knowledge across models of varying sizes.

\begin{wraptable}{r}{0.32\textwidth}
    \centering
    \setlength{\tabcolsep}{2.8 mm}
    % \vspace{-0.18in}
    \caption{Pre-trained model's scale. Pre-trained models who are more similar provide better initialization.}
    \vspace{-0.1in}
    \resizebox{0.32 \textwidth}{!}{
    \begin{tabular}{@{}l|ccccc@{}}
      \toprule
      \multirow{2}{*}{\textbf{\shortstack{PT.\\Model}}} & \multicolumn{5}{c}{\textbf{DeiT-Ti}} \\
      \cmidrule{2-6}
      & \textbf{L4} & \textbf{L6} & \textbf{L8} & \textbf{L10} & \textbf{L12} \\
      \midrule
      He-Init & 34.7 & 40.6 & 43.7 & 46.8 & 48.3\\
      \midrule
      % DeiT-Ti & \textbf{55.3} & \textbf{63.3} & \textbf{64.4} & \textbf{64.7} & \textbf{65.3}\\
      DeiT-B & 29.3 & 39.2 & 47.7 & 51.8 & 57.1 \\
      DeiT-S & 34.8 & 45.5 & 51.9 & 56.7 & 59.7 \\   
      \cellcolor{blue!12}{DeiT-Ti} & \cellcolor{blue!12}{\textbf{55.3}} & \cellcolor{blue!12}{\textbf{63.3}} & \cellcolor{blue!12}{\textbf{64.4}} & \cellcolor{blue!12}{\textbf{64.7}} & \cellcolor{blue!12}{\textbf{65.3}} \\
      \bottomrule
    \end{tabular}
    }
    \label{tab:model_comparison}
    \vspace{-0.1in}
\end{wraptable}

To evaluate the influence of pre-trained teacher models with varying architectures and scales, we incorporated a larger pre-trained model, RegNet-16GF (83.6M parameters)~\citep{radosavovic2020designing}, and compared FRONT{\scalebox{1}{\text{+}}} to WAVE and TLEG, both of which utilize LeVit-384 (39.1M parameters) as an alternative teacher model. Table~\ref{tab:anc} presents the results for DeiT-Ti and DeiT-S at each layer.

The results demonstrate that FRONT{\scalebox{1}{\text{+}}} consistently outperforms the other methods across all model sizes. Although the inclusion of a larger teacher model (RegNet-16GF) provides some improvement, the gains are relatively modest. This suggests that once pre-trained models are sufficiently trained and informative, the shared low-frequency knowledge becomes effective and robust to further increases in teacher model size. These findings highlight the effectiveness and stability of FRONT{\scalebox{1}{\text{+}}} in condensing and integrating low-frequency knowledge from various retrained pre-trained models.

\subsection{Effect of Source Pre-trained Model Scale for Learngenes}

% \begin{table}[ht]
%   \centering
%   \begin{minipage}{0.48\textwidth}
%     \centering

%     \caption{Ablation of frequency domain. Low-frequency knowledge outperforms others.}
%     \begin{tabular}{p{1.1cm}|ccccc}
%       \toprule
%       \multirow{2}{*}{\textbf{\shortstack{Select\\Freq.}}} & \multicolumn{5}{c}{\textbf{DeiT-Ti}} \\
%       \cmidrule{2-6}
%       & \textbf{L4} & \textbf{L6} & \textbf{L8} & \textbf{L10} & \textbf{L12} \\
%       \midrule
%       First-6 & \textbf{55.3} & \textbf{63.3} & \textbf{64.4} & \textbf{64.7} & \textbf{65.3} \\
%       Mid-6 & 47.4 & 62.2 & 64.3 & 64.6 & 65.2 \\
%       Last-6 & 47.1 & 56.6 & 60.3 & 61.7 & 62.3 \\
%       \bottomrule
%     \end{tabular}
%     \label{tab:freq_comparison}
%   \end{minipage}
%   \hfill
%   \begin{minipage}{0.48\textwidth}
%     \centering

%     \caption{pre-trained model's size. Pre-trained models who are more similar provide better initialization.}
%     \begin{tabular}{p{1.1cm}|ccccc}
%       \toprule
%       \multirow{2}{*}{\textbf{\shortstack{PT.\\Model}}} & \multicolumn{5}{c}{\textbf{DeiT-Ti}} \\
%       \cmidrule{2-6}
%       & \textbf{L4} & \textbf{L6} & \textbf{L8} & \textbf{L10} & \textbf{L12} \\
%       \midrule
%       DeiT-Ti & \textbf{55.3} & \textbf{63.3} & \textbf{64.4} & \textbf{64.7} & \textbf{65.3}\\
%       DeiT-S & 34.8 & 45.5 & 51.9 & 56.7 & 59.7 \\
%       DeiT-B & 29.3 & 39.2 & 47.7 & 51.8 & 57.1 \\
%       \bottomrule
%     \end{tabular}
%     \label{tab:model_comparison}
%   \end{minipage}
% \end{table}

The scale of pre-trained models significantly influences initialization effectiveness. Our results indicate that initializing with pre-trained models of similar or smaller scale produces superior outcomes. 
%Larger models contain a greater proportion of high-frequency coefficients, which are discarded during initialization, leading to increased information loss. 
Table~\ref{tab:model_comparison} presents a comparison of using DeiT-Ti, DeiT-S, and DeiT-B weights to directly initialize DeiT-Ti on ImageNet-1K for 10 epochs without additional training. Notably, even a 2.2M parameter subset extracted from the 78M parameters of DeiT-B provides effective initialization, resulting in a 2.2\% average accuracy improvement.

\subsection{Frequency-Dependent Adaptation Patterns in FRONT-initialized Models}
\label{app:freq_init}
\begin{figure}[htbp]
    \centering
    \begin{minipage}[t]{0.48\textwidth}
        \centering
        \includegraphics[width=\textwidth]{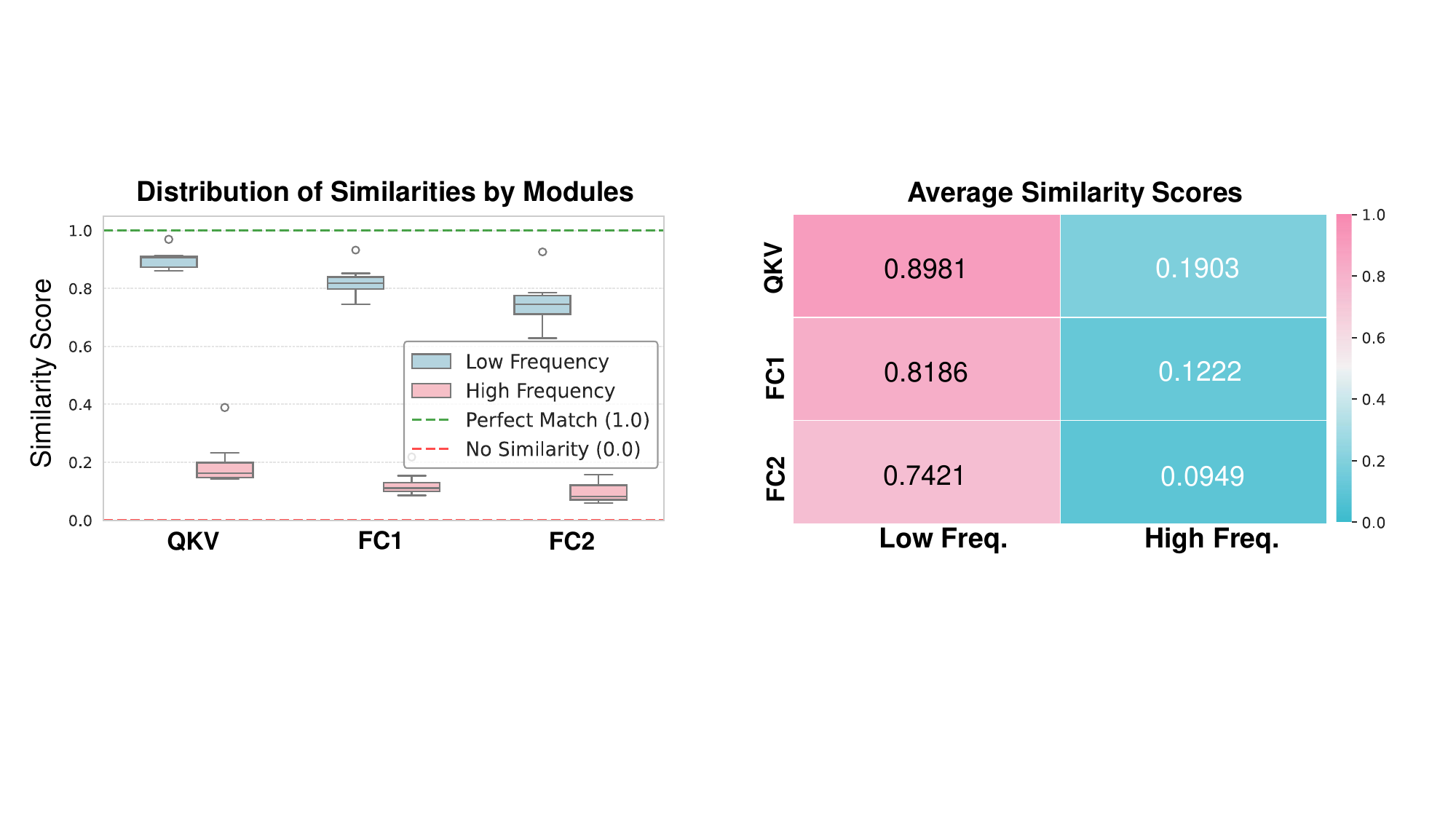}
        \caption*{(a) Distribution of Similarities by Modules}
    \end{minipage}
    \hfill
    \begin{minipage}[t]{0.48\textwidth}
        \centering
        \includegraphics[width=\textwidth]{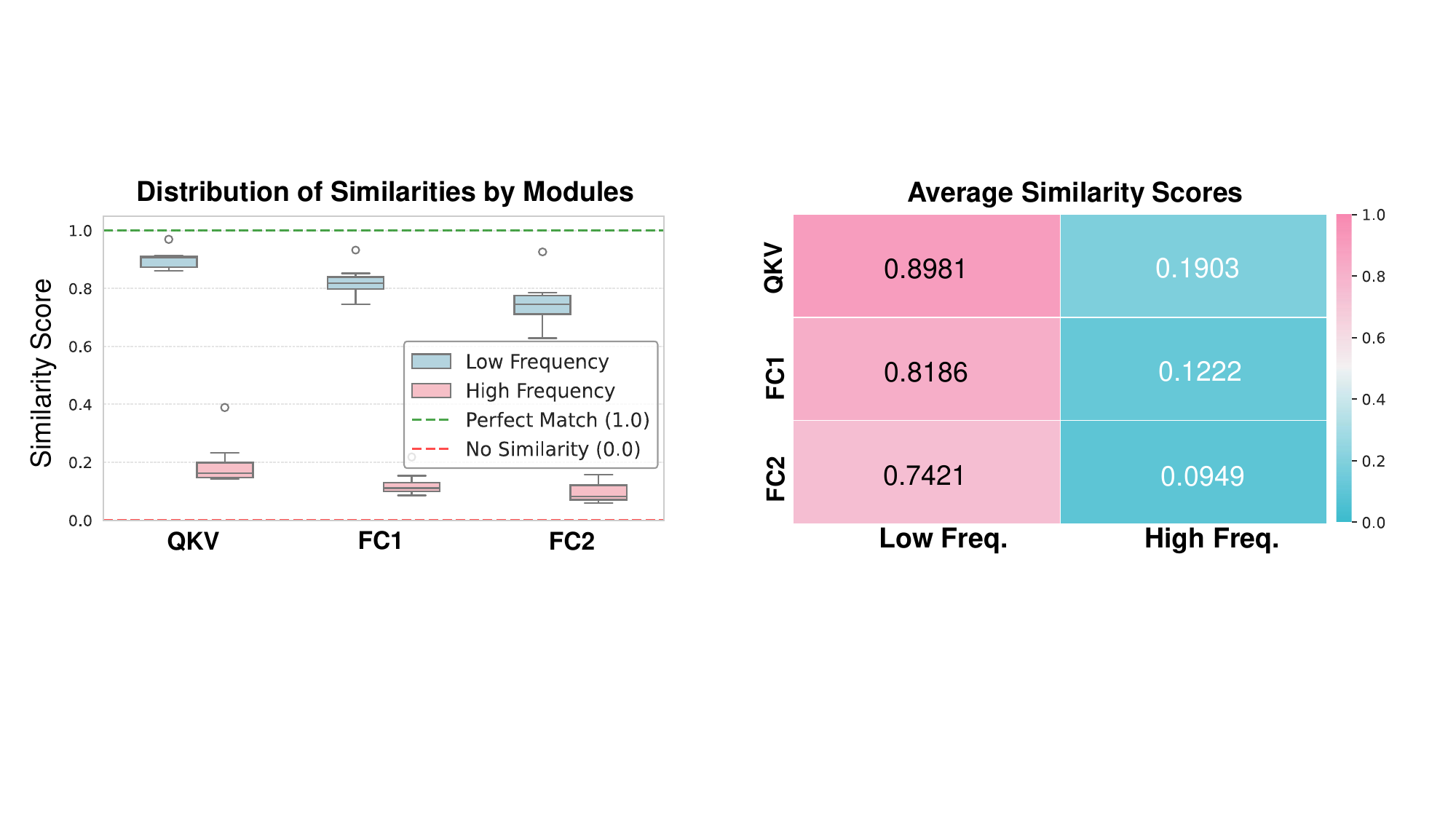}
        \caption*{(b) Average Similarity Scores}
    \end{minipage}
    \caption{Analysis of Module-wise Similarity Distribution and Average Scores for QKV, FC1, and FC2 Across Downstream Tasks}
    \label{fig:sim-scores}
\end{figure}

\noindent

Our analysis reveals distinct patterns in how FRONT-initialized models adapt to downstream tasks during fine-tuning. Figure~\ref{fig:sim-scores} presents similarity scores between weights across transformer components fine-tuned on various downstream datasets detailed in Section~\ref{app:dataset}. Figure~\ref{fig:energy_comparison} reveals that publicly available pre-trained DeiT models exhibit different frequency-domain characteristics compared to randomly initialized models. 

\begin{figure}
  \centering
  \includegraphics[width=\linewidth]{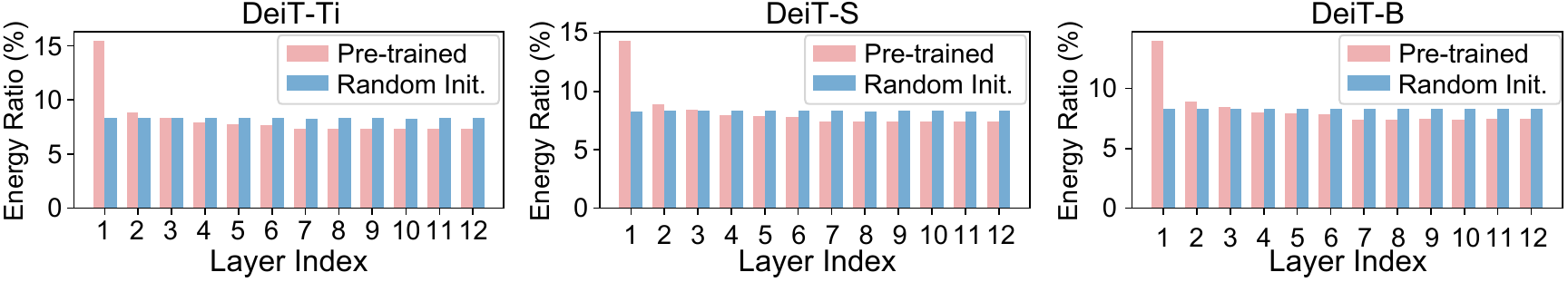}
  \vspace{-0.24in}
  \caption{Freqency domain energy distribution comparison between pre-trained and randomly initialized DeiT-Ti/DeiT-S/DeiT-B models after 3D-DCT.}
  \label{fig:energy_comparison}
  \vspace{-0.27in}
\end{figure}

Low-frequency parameter components maintain consistently high similarity scores across modules and tasks, indicating these foundational elements remain relatively effective during fine-tuning. In contrast, high-frequency components display significantly lower similarity scores, suggesting substantial updates during adaptation. This pattern supports our hypothesis that low-frequency components encode generalized linguistic knowledge transferable across tasks, while high-frequency parameters specialize to capture task-specific nuances. The model appears to selectively update high-frequency components while preserving low-frequency elements during adaptation. These findings provide valuable insights for developing more efficient fine-tuning strategies and validate the theoretical underpinnings of our approach.

\subsection{Faster convergence and stronger learning ability}

\begin{figure}

  \centering
  % {\rule[-.5cm]{0cm}{4cm} \rule[-.5cm]{4cm}{0cm}}
  \includegraphics[width=0.8\linewidth]{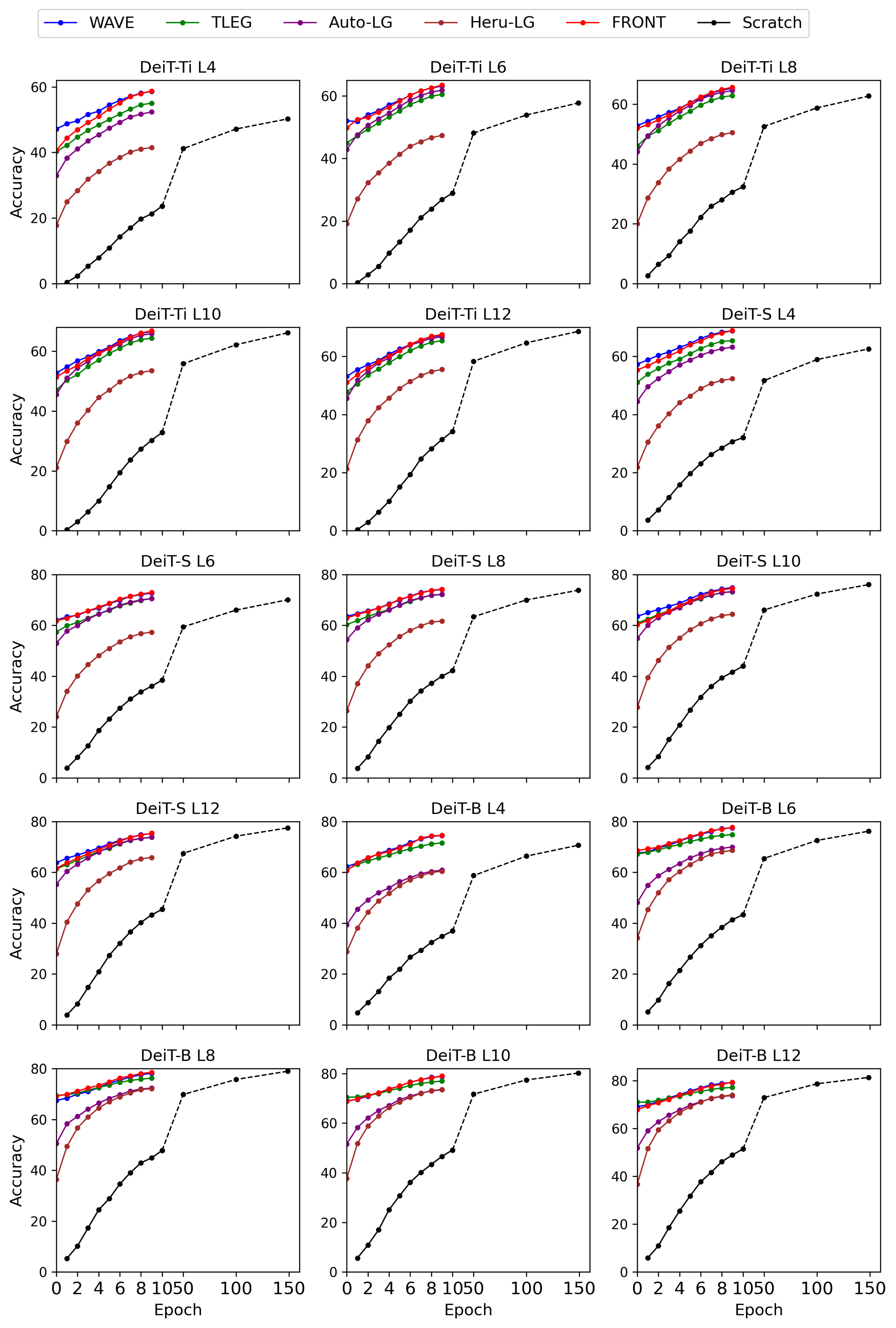}
  \caption{Performance comparisons on ImageNet-1K about depth expansion among FRONT and other learngene methods.}
  \label{fig:curve}
\end{figure}

Figure~\ref{fig:curve} demonstrates the efficacy of various learngene methods in initializing DeiT models across different scales (tiny, small, base) and layer configurations (L4-L12). Results indicate that FRONT exhibits superior initialization capabilities compared to alternative learngene approaches and performs comparably to models trained from scratch for 150 epochs.

FRONT consistently achieved higher accuracy during the initial 10 epochs across most models. Notably, FRONT does not require the introduction of stochastic parameters, suggesting its effectiveness derives from a deterministic, efficient, and comprehensive initialization strategy. All models demonstrated robust initialization performance in the first training epoch, with FRONT achieving an average accuracy of 70.08\% by epoch 7—merely 0.12 percentage points below the benchmark value of 70.20\% obtained after 150 epochs of scratch training. This initialization capability effectively compresses the model warm-up phase by approximately 21-fold, substantially reducing computational costs and highlighting FRONT’s potential to accelerate deep learning workflows. This represents a significant advantage over certain learngene methods that necessitate additional parameters or complex architectural modifications. FRONT and WAVE typically exhibit similar performance patterns, generally outperforming TLEG, Auto LG, and Heru-LG during early epochs. While Heru LG demonstrates relatively lower accuracy in initial training stages, its performance improves with continued training. As an efficient parameter initialization method, FRONT significantly accelerates downstream model training while maintaining final performance comparable to fully trained baseline models.

\section{Limitations and Future Work}

{Our method exhibits limitations when the architectural gap between source and target models is extreme. For example, transferring from a self-attention–dominant network (e.g., Transformer) to a convolution-only network (e.g., pure CNN) typically degrades performance, as these architectures lack the shared tensor-axis semantics required by our DCT-based learner representation (Assumption~1 in Appendix~\ref{sec:appendix_theory}). These cross-paradigm transfers represent known difficulties in the field and merit systematic investigation.}

Future work should focus on developing more general transformation techniques for pre-trained models. Specifically, we aim to:
\begin{itemize}
    \item Optimize extraction processes to better capture activated parameters from pre-trained weights.
    \item Design transformation approaches that maintain accuracy without additional training cost.
    \item Extend the method to handle extreme cross-paradigm scenarios by learning intermediate representations robust to axis misalignment.
    {\item Explore adaptive mechanisms for high-frequency (HF) component transfer, capable of quantifying source–target compatibility and dynamically adjusting suppression strength to maximize reusable fine-detail information.}
\end{itemize}

Our ultimate goal is to enable high-performance, training-free model adaptation at minimal computational cost, facilitating deployment of large models in resource-constrained environments and accelerating the development of efficient AI systems on diverse devices. 

% \section{You \emph{can} have an appendix here.}

% You can have as much text here as you want. The main body must be at most $8$
% pages long. For the final version, one more page can be added. If you want, you
% can use an appendix like this one.

% The $\mathtt{\backslash onecolumn}$ command above can be kept in place if you
% prefer a one-column appendix, or can be removed if you prefer a two-column
% appendix.  Apart from this possible change, the style (font size, spacing,
% margins, page numbering, etc.) should be kept the same as the main body.
%%%%%%%%%%%%%%%%%%%%%%%%%%%%%%%%%%%%%%%%%%%%%%%%%%%%%%%%%%%%%%%%%%%%%%%%%%%%%%%
%%%%%%%%%%%%%%%%%%%%%%%%%%%%%%%%%%%%%%%%%%%%%%%%%%%%%%%%%%%%%%%%%%%%%%%%%%%%%%%

\end{document}